\newtheorem{theorem}{Theorem}[section]
\newtheorem{definition}{Definition}[section]
\newtheorem{lemma}[theorem]{Lemma}
\newtheorem{claim}[theorem]{Claim}
\renewcommand{\Pr}[1]{\mathbb{P}\left[  #1 \right]}
\newcommand{\E}[1]{\mathbb{E}\left[  #1 \right]}
\newcommand{\reviewed}{}
\newcommand{\remove}[1]{}
\newcommand{\totalobj}{m}
\newcommand{\objinimg}{k}
\newcommand{\objlen}{s}
\newcommand{\I}{\mathcal{I}}
\newcommand{\compresslists}{\setlength{\itemsep}{0pt}
\setlength{\parskip}{0pt}
\setlength{\parsep}{0pt}}
\title{
A Model for Combinatorial Dictionary
Learning and Inference
}
\author{Avrim Blum, Kavya Ravichandran}
\affil{Toyota Technological Institute at Chicago}
\date{}
\begin{document}

\maketitle

\begin{abstract}%
  We are often interested in decomposing complex, structured data into simple components that explain the data. The linear version of this problem is well-studied as dictionary learning and factor analysis. In this work, we propose a combinatorial model in which to study this question, motivated by the way objects occlude each other in a scene to form an image. 
  First, we identify a property we call ``well-structuredness'' of a set of low-dimensional components which ensures that no two components in the set are {\em too} similar. 
  We show how well-structuredness is sufficient for learning the set of latent components comprising a set of sample instances. We then consider the problem: 
  given a set of components and an instance generated from some unknown subset of them, 
  identify which parts of the instance arise from which components. We consider two variants: (1) determine the minimal number of components required to explain the instance; (2) determine the {\em correct} explanation for as many locations as possible. For the latter goal, we also devise a version that is robust to adversarial corruptions, with just a slightly stronger assumption on the components. Finally, we show that the learning problem is computationally infeasible in the absence of any assumptions. 
\end{abstract}
% \end{frontmatter}

% \begin{keywords}%
%   dictionary learning, learning structure from data, semi-random model
% \end{keywords}

% \tableofcontents

\section{Introduction}

The explosion of machine learning and statistical techniques for high-dimensional data has been met with the desire to find small but close-to comprehensive explanations for vast data.
Indeed, we are often interested in developing good representations for complex data as sparse combinations of latent factors. 
Well-studied versions of this problem include matrix factorization, dictionary learning, and topic modeling. 
Most of these assume that the sparse combination of factors occurs linearly. However, in practice, many settings involve non-linear combinations of features. For example, an image can be described by referring to the objects in it and their locations. However, the objects are not combined {\em linearly} to produce the images. Rather, objects may be placed atop one other, causing occlusions. Similarly, in a musical medley (or a collage of images), we cut up pieces of the original objects (songs or images, respectively) and paste them together. These settings motivate studying dictionary learning in combinatorial and non-linear settings. \par

In this work, we study one such setting, inspired by one-dimensional images. So-called ``images'' are generated by first choosing components we call ``objects'' (analogous to the {\em atoms} of the dictionary in traditional dictionary learning) from a set, placing the chosen objects onto a canvas at chosen locations, and allowing objects placed more recently (``in front'') to occlude those placed earlier (``in the back''). We refer to the canvas at the end of this process as the ``image.'' Our goal is to give algorithms and guarantees for discovering the underlying objects from a collection of sample images. Then, we also wish to explain the composition of an image based on the learned set of objects, namely given an image, we would like to recover which objects are in it and where.  \par

While we refer to simple components as ``objects'' and composite instances as ``images,'' we want to emphasize that our goal is {\em not} to precisely model real-world image generation but rather to provide a combinatorial setup in which to theoretically study dictionary learning; images simply prove a familiar case where such an abstraction applies.

% Such approaches might also help investigate adversarial robustness. 
Additionally, we investigate adversarial robustness: knowing that the data were generated from fixed underlying factors through some process might help us better detect and correct adversarial perturbations. Our model provides a controlled setting in which to investigate adversarial robustness in this sense; we provide some results toward this. 
% Most theory regarding adversarial robustness in computer vision considers the problem of classification \citep{cullina_pac-learning_2018, montasser_reducing_2020, montasser_adversarially_2022}. While this is an important setting to look at, much practical interest is in segmentation \citep{hendrik_metzen_universal_2017, yatsura_certified_2022, xie_adversarial_2017}. 
% In this work, we theoretically study adversarial attacks in segmentation problems. 
\par

Our main contributions are:
\begin{itemize}
\setlength{\itemsep}{0pt}
\setlength{\parskip}{0pt}
\setlength{\parsep}{0pt}
    \item we define a model for generating complex (``image'') data from a small set of factors (``objects'') combined in a non-linear way.
    \item we identify properties that suffice for learning in this model.
    \item we give algorithms to efficiently learn underlying factors (i.e., recover the dictionary) in various settings.
    \item we give algorithms to efficiently infer which objects out of a learned or otherwise specified set comprise a new image (we call this task ``segmentation''). As part of this, we also address noise-tolerance, giving a few results about adversarial robustness in this model.
\end{itemize}

The rest of the paper is organized as follows. First, we summarize related work from the several areas we draw on. Next, in Section~\ref{sec:not-prelim-defn}, we set up notation and precisely define the model which we study. Subsequently, we study how many samples are required to decompose ``images'' into ``objects'' in Section~\ref{sec:learning}. Finally, in Section~\ref{sec:inference} we consider the segmentation problem under both the objective of determining a minimal explanation (Section~\ref{subsec:inference-arbitraryobj}) and the objective of getting a correct explanation on most of the image (Sections~\ref{subsec:inference-well-struct}, \ref{subsec:inference-ws-noisy}). The lattermost section considers an adversarially noisy case.

\paragraph{Related Work} We briefly describe work related to ours in several different fields. For a more thorough discussion, please see Appendix~\ref{appendix:relatedwork}.

The issue of finding a small number of explanatory features for high-dimensional data is well-studied. Some particular framings of this problem include matrix factorization, for which principal components analysis (PCA) is often used \citep{pearson_liii_1901, hubert_two_2000}, dictionary learning, framed slightly more abstractly \citep{kreutz-delgado_dictionary_2003}, and factor analysis \citep{harman_modern_1960}. While much of the work traditionally done in this area focuses on the linear case, there have been several works that address non-linear settings \citep{pmlr-v40-Balcan15, balcan-2020, mairal_supervised_2009}. Several works have explored using the aforementioned methods to promote adversarial robustness \citep{bhagoji_enhancing_2017, gupte_pca_nodate}. While much of existing theory in adversarial robustness is in classification, practical interest is often in segmentation. \par

Another line of work related to ours is that of factorial learning \citep[e.g.][]{ghahramani_factorial_1994} and layered models \citep[e.g.][]{wang_representing_1994}. 
% \cite{titsias2005unsupervised} also studies learning objects from images through a similar framework. 
These decompose realistic images via inference over graphical models in contrast to our study of a stylized model where we prove guarantees.  \par

Our work also draws heavily on problems studied in and techniques from computational biology. A well-studied problem in gene sequencing is reconstructing a string based on having seen recurring pieces of it \citep{motahari_information_2013}. We adapt such a shotgun sequencing algorithm for our use, as it will be applicable throughout Section~\ref{sec:learning}. 

% For a more detailed review of these literatures, please see Section~\ref{appendix:relatedwork}.

\section{Notation, Preliminaries, and Definitions} \label{sec:not-prelim-defn}

\subsection{Notation}  \label{subsec:notation}
We study images that are produced on one-dimensional canvases of size $d\,,$ which should be thought of as large. In the universe, there are $\totalobj$ objects, of which $\objinimg$ appear in any given image. We think of $\objinimg << \totalobj\,.$ However, just because there are few objects in any image doesn't mean we get to easily see all of them. The size of object $i\,,$ $\objlen_i\,,$ might be up to a constant fraction of the size of the canvas, so placing multiple objects into the canvas risks occlusion of one object by potentially several others. In particular, objects that tend to appear in the background would rarely be visible in full. Thus, the challenge of the learning part will be finding a set of pieces sufficient to reconstruct each object and then combining them correctly.
% a sufficient set of pieces to reconstruct the object and then combining the pieces to recover the object. 
We name the bounds on $\objlen_i$ as $\objlen_i \in [\objlen_\text{min}, \objlen].$ Object $i\,,$  is in $\{0, 1, \hdots, c-1\}^{\objlen_i}\,$ (so $c$ is the number of colors in the world). 
In general, we suppose the background is a color distinct from the set of colors used for objects. When we refer to subsections of a string, we use the notation $\sigma[i:j]$ to denote the indices from $i$ to $j$, inclusive, of string $\sigma\,.$ 

\paragraph{Assumptions} In several sections, we specify lower bounds on the object size required there in terms of the number of pixels required to uniquely identify an object (formally defined in the following section).
The canvas size $d$ is much bigger than the size of signatures of objects (defined more formally later but essentially capture the shortest unique strings in the image) $w$: namely, $d > 8w\,.$ Further, for learning, it is sufficient if an object is at most half the size of the canvas, i.e., $\objlen < d/2\,$. 
For ease of notation, we let $d' = d+\objlen-2\,.$ Also, in Section~\ref{subsec:learning-twoobj}, $\totalobj \cdot d'$ must be bigger than a fixed constant, but in Section~\ref{subsec:learning-kobj}, we need a stronger condition, with $d' > O(\totalobj \, \objinimg \, 2^\objinimg)\,$.

\subsection{Structural Properties and Object Generation} \reviewed
Next, we specify our setting further by identifying a property of objects that is useful for the learning task. We then show that random, and even semi-random, objects satisfy this property with high probability.
This property (Defn.~\ref{defn:w-s-general}) helps guarantee that no two objects are ``too similar,'' similar to incoherence assumptions in statistics, and so objects can be identified from signatures, pieces large enough to be uniquely associated with them. The stronger version (Defn.~\ref{defn:eps-w-ws}) is useful when adversarial corruptions are present.

\begin{definition}[$w$-well-structured] \label{defn:w-s-general}
We call a set of $\totalobj$ objects over the colors $\{0, 1, \hdots, c-1\}$ {\em $w$-well-structured} if:
(1) Each object $o_i$ has length at least $w\,$;
(2) No two objects have identical substrings of length $w\,$;
and (3) No object has two identical substrings of length $w\,$. 

\end{definition}

\begin{definition}[$\epsilon$-strongly, $w$-well-structured] \label{defn:eps-w-ws}
We call a set of $\totalobj$ objects {\em $\epsilon$-strongly $w$-well-structured } if: 
(1) Each object $o_i$ has length at least $w\,$;
(2) No two objects have substrings of length $w\,$ that match in $1-\epsilon$ fraction of their pixels; 
(3) No object has two substrings of length $w\,$ that match in $1-\epsilon$ fraction of their pixels.
\end{definition}
% \vspace{-5mm}
When it is clear from context what $w$ is, we may elide it.
These properties are natural, as objects generated randomly or semi-randomly satisfy them. We give the statement for random objects; see Appendix~\ref{appendix:ws-proofs} for the lemma for semi-random objects and proofs for both.

% \hl{TODO: do this in terms of $w$}
\begin{restatable}{lemmma}{randomobjws}[Random Objects are $w$-Well-Structured whp] \reviewed \label{lem:randomobjws}
A set of $\totalobj$ objects, each sampled uniformly at random from $\{ 0, 1, \hdots\,, c-1 \}^{\objlen_i}\,,$ and $\objlen \coloneqq \max_i \objlen_i$ is $w$-well-structured with probability at least $1 - 3\totalobj^2\objlen^2/c^w\,.$  In particular, $w=O(\log \totalobj \objlen)$ is sufficient so that the $\totalobj$ objects are $w$-well-structured with probability $1-o(1)$.
% where $\delta \coloneqq 3m^2/k^6\,$.
\end{restatable}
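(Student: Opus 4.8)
The plan is to control conditions (2) and (3) of Definition~\ref{defn:w-s-general} by a union bound over all relevant pairs of length-$w$ windows, treating the cross-object case (condition 2) and the within-object case (condition 3) separately. Condition (1) is a deterministic prerequisite on the lengths ($\objlen_i \ge w$) and contributes nothing probabilistically, so I would dispense with it first. For condition (2), I would fix two distinct objects $i \ne j$ together with one start position in each: the two length-$w$ substrings are read off from two independent uniform random strings, so the $w$ aligned coordinate comparisons are independent equalities of independent uniform symbols, giving match probability exactly $c^{-w}$. Since there are at most $\binom{\totalobj}{2} \le \totalobj^2/2$ object pairs and at most $\objlen^2$ position pairs, a union bound bounds the failure probability of condition (2) by $\tfrac12 \totalobj^2 \objlen^2 c^{-w}$.

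For condition (3), I would fix one object and two distinct start positions $p < q$. The only subtle case is when the windows overlap ($q-p < w$), since then the equalities $\sigma[p+t] = \sigma[q+t]$ share positions and are not independent. I would handle this by building a graph on the $w + (q-p)$ positions spanned by the two windows, with one edge per equality. These edges link $p+t$ to $p+(q-p)+t$, so they form disjoint arithmetic-progression paths, one per residue class modulo $q-p$; hence the graph is a forest with exactly $q-p$ connected components. A uniform coloring satisfies all edge-equalities with probability $c^{(\#\text{components})-(\#\text{vertices})} = c^{(q-p)-(w+(q-p))} = c^{-w}$, so overlapping windows also match with probability exactly $c^{-w}$ (the disjoint case being immediate). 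A union bound over at most $\binom{\objlen}{2} \le \objlen^2/2$ position pairs per object and $\totalobj$ objects then bounds the failure probability of condition (3) by $\tfrac12 \totalobj \objlen^2 c^{-w}$.

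Summing the two failure probabilities and using $\totalobj \ge 1$ gives a total of at most $\totalobj^2 \objlen^2 c^{-w}$, comfortably within the claimed $3\totalobj^2\objlen^2/c^w$ (the constant $3$ is ample slack). For the ``in particular'' clause, I would set the failure bound to $o(1)$: choosing $w = (2+\alpha)\ln(\totalobj\objlen)/\ln c$ for any fixed $\alpha > 0$ makes $c^w = (\totalobj\objlen)^{2+\alpha}$, so the failure probability is $O((\totalobj\objlen)^{-\alpha}) = o(1)$, and this $w$ is $O(\log \totalobj\objlen)$ as stated.

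I expect the main obstacle to be precisely the overlapping-window case in condition (3): the naive union bound silently assumes every pair of windows contributes $c^{-w}$, which is only justified by the periodicity/forest argument above. Everything else reduces to counting pairs and invoking independence across positions and across objects.
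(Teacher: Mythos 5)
Your proof is correct and follows the same overall plan as the paper's: a union bound over all pairs of length-$w$ windows, split into cross-object and within-object cases, with the only delicate point being overlapping windows in a single object. The one place where your technique genuinely differs is that subcase. The paper handles it by revealing the object's symbols left to right and conditioning: $A[i+q]$ equals $A[i]$ with probability $1/c$, then conditioned on that $A[i+1+q]$ equals $A[i+1]$ with probability $1/c$, and so on, giving $c^{-w}$ directly. You instead build the constraint graph on the $w+(q-p)$ spanned positions, observe it is a disjoint union of $q-p$ arithmetic-progression paths, and read off the satisfaction probability as $c^{(\#\text{components})-(\#\text{vertices})}=c^{-w}$. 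Both arguments are valid and yield the identical bound; yours is a little more structural (and generalizes cleanly to arbitrary systems of equality constraints), while the paper's is a little more elementary. Your pair-counting is slightly tighter (using $\binom{\totalobj}{2}$ and $\binom{\objlen}{2}$ rather than the paper's looser $\totalobj^2\objlen^2$ and $\totalobj\objlen^2$ per case), so you land at $\totalobj^2\objlen^2/c^w$ rather than needing the full $3\totalobj^2\objlen^2/c^w$; this is harmless slack. The ``in particular'' clause is handled the same way in both.
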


When the background is a single color, we define a $b$-padding of an object as an extension of the object created by adding $b$ pixels of background color on both left and right sides.

\subsection{Image Generation} \label{subsec:imggen}

Next, we address how to generate images of size $d$ from these objects. In particular, we follow the following general high-level steps. In this section, we describe different models corresponding to different ways to make the choices below.
\begin{enumerate}
% \item Set dimension of ``image'' to $d$.
\setlength{\itemsep}{0pt}
\setlength{\parskip}{0pt}
\setlength{\parsep}{0pt}
\item Set background to a color distinct from the colors in the objects (Defintion~\ref{defn:distinctbg})\footnote{We could also consider the background to be $w$-well-structured along with the set of objects (Defintion~\ref{defn:wsbg}); where results can be extended to this case, we note the differences in argument when they arise. 
}. 
\item Select $\objinimg$ objects in the uniform model (Definition~\ref{defn:uniform}).
% or the arbitrary joint distribution model (Defintion~\ref{defn:arbitraryjoint}).
\item Select $\objinimg$ indices randomly, representing locations in image for placing objects
either according to the closed room (Definition~\ref{defn:closedroom}) or open room (Definition~\ref{defn:openroom}) model.
\item Select depth index as determined by the depth model (Definitions~\ref{defn:fullyrandom} and \ref{defn:partiallyrandom}).
\item Generate image by scanning left to right and, for each location, set the color of this position in the image to the color of the corresponding position in the topmost object. Equivalently, compute the \texttt{view} operator of the scene $\mathcal{S}$ given by the above, as defined at the end of this section.
\end{enumerate}

% Here, we define the aspects of the setting that we address throughout the body of the paper. For a more thorough explanation, see Appendix~\ref{appendix:modeldetails}.

In the first step above, we must choose how to set the background of an image. 
\begin{definition}[Distinct Background]
\label{defn:distinctbg}
In the distinct background model, the background of the image is a value not in the alphabet $[c]$ of which the objects are composed.
\end{definition}

\begin{definition}[$w$-Well-Structured Background]
\label{defn:wsbg}
In the well-structured background model, no substring of length $w$ of the background matches a different substring of length $w$ in the background or in any of the $m$ objects. 
\end{definition}

Next, we define a uniform model for how objects are chosen to be in the image. 

\begin{definition}[Uniform Model] \label{defn:uniform}
In the {\em uniform model}, for each image, the set of objects in it is chosen uniformly over the $\binom{\totalobj}{\objinimg}$ sets of $\objinimg$-object subsets of the $\totalobj$ total objects.
\end{definition}

In the third step, we choose locations to place the objects on the canvas. Either all objects in the frame must be within the frame, or objects in the frame are allowed to spill off the edges. Formally:

\begin{definition}[Closed Room Model] \label{defn:closedroom}
In the {\em closed room model}, all objects begin and end within the canvas of size $d.$ Formally, the left endpoint of an object $o_i$ is uniformly distributed over 0 to $d-\objlen_i\,.$
\end{definition}
We call it this because it is reminiscent of items being placed within a room that is fully photographed, so objects are limited to not bleeding out of the scope of the camera. On the other hand, the canvas could be ``open'' at the boundaries:
\begin{definition}[Open Room Model] \label{defn:openroom}
In the {\em open room model}, objects can begin and end off the canvas of size $d$ and we only consider what is visible in those $d$ locations. Formally, the left endpoint of an object $o_i$ is uniformly distributed over $-\objlen_i+1$ to $d$.
% We call it this because the canvas is ``open'' at the boundaries. 
\end{definition}

 Learning is similar in both models, so there we focus on the open room model, but in the inference problem, the kind of room matters. 
 Finally, we describe the ways in which we choose the ordering of objects within the image. Locations are always chosen uniformly at random horizontally, but the depth at which an object is placed could involve ordering rules.

\begin{definition}[Fully Random Model: Random Horizontal; Random Depth] \label{defn:fullyrandom}
Objects are placed in random depth order on the canvas, at uniformly random positions left-to-right.
\end{definition}

\begin{definition}[Partially Random Model: Random Horizontal; Arbitrary Depth]  \label{defn:partiallyrandom}
The depth of objects placed on the canvas is given by arbitrary ordering constraints, but the horizontal placement of the objects is uniformly random. 
\end{definition}
This more closely reflects what we might see in real life: certain objects may more consistently appear in the background -- such as mountains or tall buildings.

The motivation behind our generative model is to study a form of {\em compression} or {\em dimensionality reduction} via sparse dictionary learning. Indeed, the image generation framework described here imposes rich structure on the space of images: if the images were completely random, there would be $c^d$ possible images, but here there are only $\binom{\totalobj}{\objinimg} \cdot \objinimg! \cdot d^\objinimg\,$ possibilities for a fixed set of objects.

\paragraph{Defining the ``scene'' and ``view''} To formally represent this procedure, we construct the matrix $\mathcal{S}  \in [c] \cup \{b, \bot\} ^{k + 1 \times d}\,,$ which will represent the scene comprised of the chosen objects. We will then define a \texttt{view} operator that acts on a scence $\mathcal{S}$ to produce the image $\mathcal{I}$ that we study.

The matrix $\mathcal{S}\,$ is initialized with $\bot$ everywhere and constructed as follows. Row $k+1$ is filled with the background chosen in step 1 above. Then, the objects are sorted based on the depths in step 4 above. The object with highest depth is placed in row $k\,,$ starting at its respective location as chosen in step 3. This process is continued with the object with the lowest depth placed in the first row of the matrix. This gives us the scene matrix $\mathcal{S}\,.$ Note that in the closed room model, each row of the matrix $\mathcal{S}$ either has $\bot$ at the start and end or the start or end of the object corresponding to that row,
% all objects are fully present in the matrix $\mathcal{S}\,,$ 
whereas in the open room model, the first elements of a row of $\mathcal{S}$ might be from the middle of an object, rather than $\bot$ or the leftmost pixels of an object and likewise for the last elements of the row.

With the matrix defined, we can now define the \texttt{view} operator. We scan the  $\mathcal{S}$ by column. At each column, we scan from top to bottom, picking out the first element that is not $\bot$ and placing that in the corresponding column of $\mathcal{I}\,.$ 

For instance, consider objects $10111$ (at depth 1, at position 1) and $0001$ (at depth 2, at position 4) and background $b, b, b, b, b, b, b, b\,:$ 

$$
\mathcal{S} = \begin{pmatrix}
    1 & 0 & 1 & 1 & 1 & \bot & \bot & \bot \\
    \bot & \bot & \bot & 0 & 0 & 0 & 1 & \bot \\
    b & b & b & b & b & b & b & b
\end{pmatrix}
$$

Now, we take $\texttt{view}(\mathcal{S}) = \begin{pmatrix}1 & 0 & 1 & 1 & 1 & 0 & 1 & b \end{pmatrix}\,.$

% The fully random model is quite easy to solve, and we solve it as a warmup. This is because if objects are placed at random depths, then with some probability, every object will be the topmost placed on the canvas at least once with sufficiently many samples, and there is no reconstruction from pieces necessary.
\subsection{Formal Statement of the Problem}
% We describe the problems we intend to solve in this setting. \par
\paragraph{Learning:} Given a set of $S$ images, generated independently according to the process described in Section~\ref{subsec:imggen}, reconstruct the set of $m$ objects that generated them.

\paragraph{Inference:} Given a set of $m$ objects and a new image known to be generated according to the process described in Section~\ref{subsec:imggen}, recover an explanation for the image, i.e., identify which pixel of which object produced each pixel of the image, subject to constraints of the generative process. 
% \hl{do I need last bit?}

\subsection{Sequencing}
At a high level, our approach to learning objects from images will entail picking out pieces of images we can be reasonably certain come from a single object and then stitching them together to recover that object. Reconstructing a string on the basis of having seen pieces of it is a well-studied problem, particularly in gene sequencing (\cite{motahari_information_2013}). We adapt such a shotgun sequencing algorithm for our use and present it here, as it will be applicable throughout Section~\ref{sec:learning}. \par

Assume our set of objects is $w$-well-structured. Suppose we are given a collection of segments, each of length at least $2w$ such that each one is a substring of one of those objects and moreover, for every object, every part of it has been seen by some segment in this set. We require adjacent pieces from the image to have $w$ pixels of overlap to uniquely identify which pieces to pair them with. Thus, if the length of an object is $\objlen$ pixels, we require the specific $\objlen/w$ segments (see Figure~\ref{fig:npiecesreqd}). 
\begin{figure}
    \centering
    \includegraphics[width=0.45\textwidth]{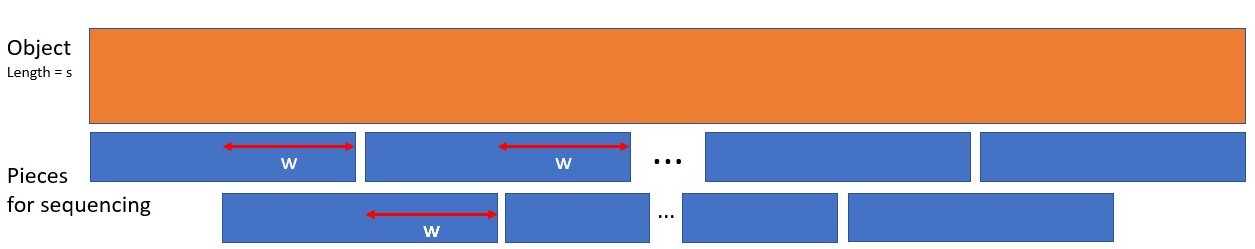}
    \caption{\textbf{Object recovery from pieces:} Due to objects obscuring each other, we cannot hope to see all $s$ pixels of the orange object at once, but so long as we see the blue pieces shown in this figure, i.e., ones with at least $w$ pixels overlap, we can reconstruct the object.}
    \label{fig:npiecesreqd}
\end{figure}
\vspace{5mm}
Then we can use a greedy algorithm to stitch together pieces of length $2w$ arising from objects that we find in the images to recover the full object. We search through the pieces we have collected to find the two with the largest overlap. We join them and continue this repeatedly. \par

The shotgun sequencing literature has shown that coverage is necessary and sufficient for recovery, i.e., seeing all pieces of the object with sufficient overlap between pieces suffices for reconstructing the object from those pieces (Thm. 1 in \cite{motahari_information_2013}). The greedy algorithm described above achieves this. Our proof of correctness relies on the sufficiency of coverage. 
% Formal details of both the algorithm and the analysis are present in Appendix~\ref{appendix:sequencing}.

\begin{algorithm}[H]
\caption{Sequence \label{alg:sequence}
}
\begin{algorithmic}[1]
\STATE \textbf{Input:} set of segments of objects, signature length $w$
\STATE \texttt{objects\_stored} $\leftarrow \{ \}$ \;

 \WHILE {segments remain}
\STATE
 find two segments with longest overlap $\ge w$ pixels \;
 
 \STATE merge them at the overlap and add to \texttt{objects\_stored} \;
\ENDWHILE
\RETURN{\texttt{objects\_stored}}
\end{algorithmic}
\end{algorithm}

\paragraph{Proof of Correctness} If the underlying objects are $w$-well-structured, and we have all required pieces of size $2w\,$ to cover each object completely, then the sequencing algorithm will reconstruct the objects exactly. This is because if $w$ pixels are sufficient as a signature for the object, then any set of $\ge w$ pixels that match between two strings must come from the same place in a single object. 
Further, once exactly $\totalobj$ pieces remain, we know that by well-structuredness, no two of them can share more than $w$ pixels and so no more merging will occur.

\paragraph{Sample Complexity} 
By the argument above, it is clear that for well-structured objects, as long as each object is covered, then this sequencing algorithm will recover the objects accurately. Thus, if $S$ samples suffice to cover the objects (and we will derive $S$ for the various settings in the respective sections), then $S$ samples suffice for learning the objects.

\section{Learning}
\label{sec:learning}
In this section, we address learning. We start with a simplified setting, where the start and end of objects are clearly marked, and we give an algorithm to learn the objects from a set of images and analyze how big this set must be. Next, we show that we can get guarantees even without endpoint markers when there are at most two objects in each image. 
Finally,  having established much of the necessary analysis, we extend this to a setting where there are up to $\objinimg$ objects in each image, where $\objinimg$ is constant or grows sufficiently slowly with $d$ (size of image).
Here, we assume the background is distinct. 
Further, we require that all objects in the set are of size at most $\objlen$ and at least $O(w)\,$. 
We show in Appendix~\ref{appendix:nphard} that in the absence of any assumptions, the learning problem is NP-hard.

\subsection{Learning in the presence of endpoint markers}
\label{subsec:learning-endpointmarkers}
Let us start by considering the case where every object comes with a clear and known left endpoint and a clear, distinct, and known right endpoint marker. For example, if objects are fully composed of red and blue, let each one have a yellow pixel appended to the left end and an orange pixel appended to the right end.

\paragraph{Fully-Random Model}
\label{subsec:learning-basicwarmup}
As a warmup, we consider the Fully Random Model setting (Defn.~\ref{defn:fullyrandom}) where objects with endpoint markers are placed in random order front-to-back. In this case, we can be confident that at some point, each object will appear in the front. Since the left endpoint and right endpoint are marked differently, seeing a left endpoint marker, followed by many pixels, followed by a right endpoint marker implies we are seeing a full object. We simply wait to see all objects whole. The probability of a fixed object appearing in front is $\frac{\objinimg}{\totalobj} \cdot \frac{1}{\objinimg} = 1/\totalobj\,,$ giving us that in expectation, we require $m$ samples to see it in the front. Using the coupon collector argument, if we see $\Theta(\totalobj \log \totalobj)$ samples, then with high probability, we will see each object in the front at least once. \par
We see that solving the problem in the case where each object has endpoint markers and a non-trivial chance of appearing at the front is actually quite simple. In the more general Partially Random Model, when objects may not appear in the front at all, we may only ever see certain objects with some amount of occlusion. Thus, one major part of the challenge arises from not seeing whole objects at once, and the other will arise once we no longer have endpoint markers.

\paragraph{Partially-Random Model}
\label{subsec:learning-warmup}

Next, we consider the Partially Random model (Defn.~\ref{defn:partiallyrandom}). Given images, recovery of a list of objects essentially requires finding chunks that belong to a single object. If a chunk is large enough, we can both uniquely identify it with an object (provided objects are well-structured) and find overlap with a different chunk that belongs to the same object.  Algorithm~\ref{alg:learn-epmarkers} orders and joins such chunks. In the following lemma (proven in Appendix~\ref{appendix:keylemmapf}), we compute the probability of seeing such a piece of interest. 

\begin{restatable}{lemmma}{prlbit}
\label{lem:op-unif-pr-lbit}
In the open room, uniform object selection, partially-random model, the probability of seeing a given $L$-pixel segment of a given object (or even of an $(L-1)$-padding of that object) in a random image composed of $\objinimg$ objects is at least $\left( 1 - \frac{\objlen+L-1}{d'} \right)^{\objinimg-1} \cdot \frac{d+1-L}{d'} \cdot \frac{\objinimg}{\totalobj}.$
\end{restatable}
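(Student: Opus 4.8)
The plan is to write the target event---that the specified $L$-pixel segment is visible in the rendered image---as a conjunction of three sub-events that factorize, and to lower-bound each factor separately. Writing $o$ for the given object, these are: (i) $o$ is among the $\objinimg$ objects selected for the image; (ii) conditioned on (i), $o$ is placed horizontally so that the target $L$-window lands entirely inside the $\canvassize$ visible columns; and (iii) conditioned on (i)--(ii), none of the other $\objinimg-1$ selected objects occludes any pixel of that window. The three quantities $\tfrac{\objinimg}{\totalobj}$, $\tfrac{\canvassize+1-L}{d'}$, and $\bigl(1-\tfrac{\objlen+L-1}{d'}\bigr)^{\objinimg-1}$ in the statement correspond exactly to these three sub-events, and the bound follows by multiplying them.

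For (i), in the uniform model each object is equally likely to be chosen and the expected number chosen is $\objinimg$, so $o$ is selected with probability exactly $\objinimg/\totalobj$. For (ii), condition on $o$ being selected; its left endpoint is then uniform over the $d'$ open-room positions, and since the target window is a fixed translate of the left endpoint, the window lies fully within columns $1,\dots,\canvassize$ precisely when its own left endpoint falls in an interval of $\canvassize+1-L$ positions, giving probability $\tfrac{\canvassize+1-L}{d'}$. The $(L-1)$-padding case is handled by the same count, since padding merely relabels which of the $L$ visible pixels ought to display the background color rather than a color of $o$. For (iii), the key modeling point is that the depth order is \emph{arbitrary} in the partially-random model, so I cannot rely on $o$ lying in front of an overlapping object; to guarantee the window is seen I instead require that no other selected object overlaps the window horizontally at all. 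Conditioned on the selected set, the placements of the remaining objects are independent and uniform, and an object of length at most $\objlen$ intersects a fixed $L$-window for at most $\objlen+L-1$ of its $d'$ possible left-endpoint positions, so it misses the window with probability at least $1-\tfrac{\objlen+L-1}{d'}$; independence across the $\objinimg-1$ other objects yields the stated product. When no other object overlaps, $o$ is topmost at every object pixel of the window and the background shows through at the padded pixels, so the window renders exactly as claimed.

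The conceptual skeleton is clean, and I expect the main obstacle to be the position bookkeeping rather than any probabilistic subtlety. Specifically, I would need to reconcile the open-room placement range from Definition~\ref{defn:openroom} with the common denominator $d'=\canvassize+\objlen-2$, verifying the two off-by-one counts $\canvassize+1-L$ (favorable placements of $o$) and $\objlen+L-1$ (overlapping placements of an interfering object), and confirming that these remain valid after passing to a single denominator $d'$ for all objects. The second delicate point is that objects have differing lengths $\objlen_i\le\objlen$: I must check that replacing each $\objlen_i$ by the maximum $\objlen$ in the occlusion count can only \emph{decrease} the visibility probability, so that the resulting expression is a genuine lower bound rather than an approximation. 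Finally, I would note that the factorization across sub-events is legitimate because selection is independent of all horizontal placements and, conditioned on the selected set, the placements of $o$ and of the other objects are mutually independent, so the joint probability splits into the product of the three bounded factors.
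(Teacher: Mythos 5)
Your proposal follows exactly the paper's argument: the same three-factor decomposition into selection (probability $\objinimg/\totalobj$), favorable placement of the target window ($\tfrac{d+1-L}{d'}$), and non-occlusion by each of the other $\objinimg-1$ objects ($1-\tfrac{\objlen+L-1}{d'}$ each, multiplied by independence), including the key observation that with arbitrary depth one must insist on no horizontal overlap at all. This matches the paper's proof in Appendix C.1, so the proposal is correct and takes essentially the same approach.
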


This lemma is central to analyzing how many samples we need for the learning problem. In particular, we will use this lemma in the case where $L = 2w\,,$ where $w$ is the well-structuredness parameter of the objects. In the following theorem, we analyze the number of samples required to succeed in the reconstruction effort with high probability.

\begin{algorithm} 
\caption{Recover Objects - With Endpoint Markers \label{alg:learn-epmarkers}}
\begin{algorithmic}[1]
\STATE \textbf{Input:} $S$ samples, value $L$
\STATE \texttt{chunks} $\leftarrow$ split images at markers (keeping track of what came with what marker)
\STATE \texttt{objects} $\leftarrow$ items from \texttt{chunks} that start with a start marker and end with an end marker of length $\ge L$ \;
 
\STATE  \texttt{object\_pieces} $\leftarrow$ rest of the \texttt{chunks} \;
 
\STATE  \texttt{objects} $\leftarrow$ 
$\texttt{objects} \cup \textsc{sequence}(\texttt{object\_pieces})$ \COMMENT{\textsc{sequence} given in Algorithm~\ref{alg:sequence}}
 \RETURN{\texttt{objects}}

\end{algorithmic}
\end{algorithm}

\begin{theorem} \label{thm:endpointmarkers-sc}
In the open room, uniform, partially-random model with endpoint markers and $w$-well-structured objects,
given $S = \Theta(\ln(2\, \totalobj\, \objlen/L)/a)$ samples, for $a = \left( 1 - \frac{\objlen+L-1}{d'} \right)^{\objinimg-1} \cdot \frac{d+1-L}{d'} \cdot \frac{\objinimg}{\totalobj}$ and $L = 2w\,$, Algorithm~\ref{alg:learn-epmarkers} reconstructs all objects accurately
with probability $9/10\,.$ 
\end{theorem}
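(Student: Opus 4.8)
The plan is to reduce the theorem to a coverage statement and then apply a coupon-collector-style union bound, using Lemma~\ref{lem:op-unif-pr-lbit} as a black box for the per-image visibility probability. Recall from the proof of correctness of \textsc{sequence} (Algorithm~\ref{alg:sequence}) that if the objects are $w$-well-structured and, for every object, we have collected a family of length-$2w$ segments that tile the object with consecutive overlap at least $w$, then \textsc{sequence} reconstructs every object exactly. So it suffices to show that, with probability at least $9/10$, the chunks extracted by Algorithm~\ref{alg:learn-epmarkers} from the $S$ images contain, for each object, such a covering family.

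First I would fix, for each object $o_i$ of length $\objlen_i \le \objlen$, a canonical set of target segments: the length-$L = 2w$ windows starting at offsets $0, w, 2w, \ldots$ within $o_i$ (with the last window aligned to end at the right endpoint). These are at most $\objlen_i/w \le \objlen/w = 2\objlen/L$ windows, and any two consecutive windows overlap in exactly $w$ pixels, so seeing all of them for $o_i$ supplies precisely the covering family that \textsc{sequence} requires. The total number of target segments over all $\totalobj$ objects is therefore at most $2\,\totalobj\,\objlen/L$. I would then argue that the endpoint markers, together with $w$-well-structuredness, guarantee that whenever such a target segment is visible (on top) in an image, Algorithm~\ref{alg:learn-epmarkers} recovers it as a clean substring of the correct object: the markers delimit object boundaries, so every extracted chunk is a genuine substring of a single object, and well-structuredness ensures that every length-$\ge w$ overlap \textsc{sequence} subsequently finds is a true co-location within one object rather than a coincidental match. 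Hence no spurious merges occur, and the fully visible objects peeled off in step~3 are correct as well.

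Next I would bound the probability of coverage. By Lemma~\ref{lem:op-unif-pr-lbit}, a given target segment (of length $L$) is visible in a single random image with probability at least $a$, where $a$ is exactly the quantity in the theorem statement; since the $\objlen$-based factors make $a$ a uniform lower bound across all objects and all target segments, the same $a$ applies to each of them. The $S$ images are independent, so the probability that a fixed target segment is missed in every image is at most $(1-a)^S \le e^{-aS}$. A union bound over the at most $2\,\totalobj\,\objlen/L$ target segments gives failure probability at most $\frac{2\totalobj\objlen}{L}\,e^{-aS}$. Requiring this to be at most $1/10$ and solving for $S$ yields $S \ge \frac{1}{a}\bigl(\ln(2\totalobj\objlen/L) + \ln 10\bigr) = \Theta\!\left(\ln(2\totalobj\objlen/L)/a\right)$, matching the stated sample size; on the complementary event every object is covered and hence reconstructed exactly.

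I expect the main obstacle to be not the probabilistic bound (a routine union bound once Lemma~\ref{lem:op-unif-pr-lbit} is available) but the reduction step: carefully verifying that coverage of the canonical length-$2w$ windows by visible, marker-delimited chunks really does hand \textsc{sequence} a valid covering family with $w$-overlap, and that $w$-well-structuredness rules out both cross-object and within-object false overlaps so that greedy merging terminates with exactly the $\totalobj$ true objects. The remaining bookkeeping -- the exact window count per object and absorbing the constant $\ln 10$ into the $\Theta$ -- is routine.
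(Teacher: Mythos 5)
Your proposal is correct and follows essentially the same route as the paper's proof: apply Lemma~\ref{lem:op-unif-pr-lbit} to get the per-image visibility probability $a$, count the at most $2\totalobj\objlen/L$ required overlapping segments per the coverage argument for \textsc{sequence}, and union bound $(1-a)^S \le e^{-aS}$ over them to solve for $S$. The extra care you take in justifying the reduction to coverage matches what the paper handles in its correctness discussion of Algorithm~\ref{alg:sequence}.
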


\paragraph{Proof Outline} We show this theorem by first computing the probability of seeing a fixed $L$-pixel string from an object in a fixed image (Lemma~\ref{lem:op-unif-pr-lbit}). Then, we compute the probability of seeing it in any image and apply a union bound. See Appendix~\ref{appendix:learning-endpoint} for details. From Figure~\ref{fig:npiecesreqd}, we can see that in this case, $L = 2w$ suffices for both having sufficient overlap between pieces to run the sequencing algorithm and each length-$L$ segment being uniquely identified to an object.

Note that the quantity $a$ drops exponentially with $\objinimg$, via the term $(1-(\objlen + L - 1)/d')^{\objinimg-1}$. Therefore, if $\objlen+L$ is a constant fraction of $d'$, then the number of samples we need to observe according to Theorem~\ref{thm:endpointmarkers-sc} will be exponential in $k$.  Indeed, this exponential dependence is unavoidable in the partially-random setting, because this is the sample size needed to see any given piece of an object that only appears in the back of an image. This result would also extend to the case where objects are $\epsilon, w$-strongly-well-structured, and at most $\epsilon/2$ fraction of the pixels are corrupted by an adversary before an object is placed. Details are in Appendix~\ref{appendix:learning-extension}.

\subsection{No endpoint markers: Two Objects per Image}
\label{subsec:learning-twoobj}

\begin{figure*}
    \centering
    \includegraphics[width=0.7\textwidth]{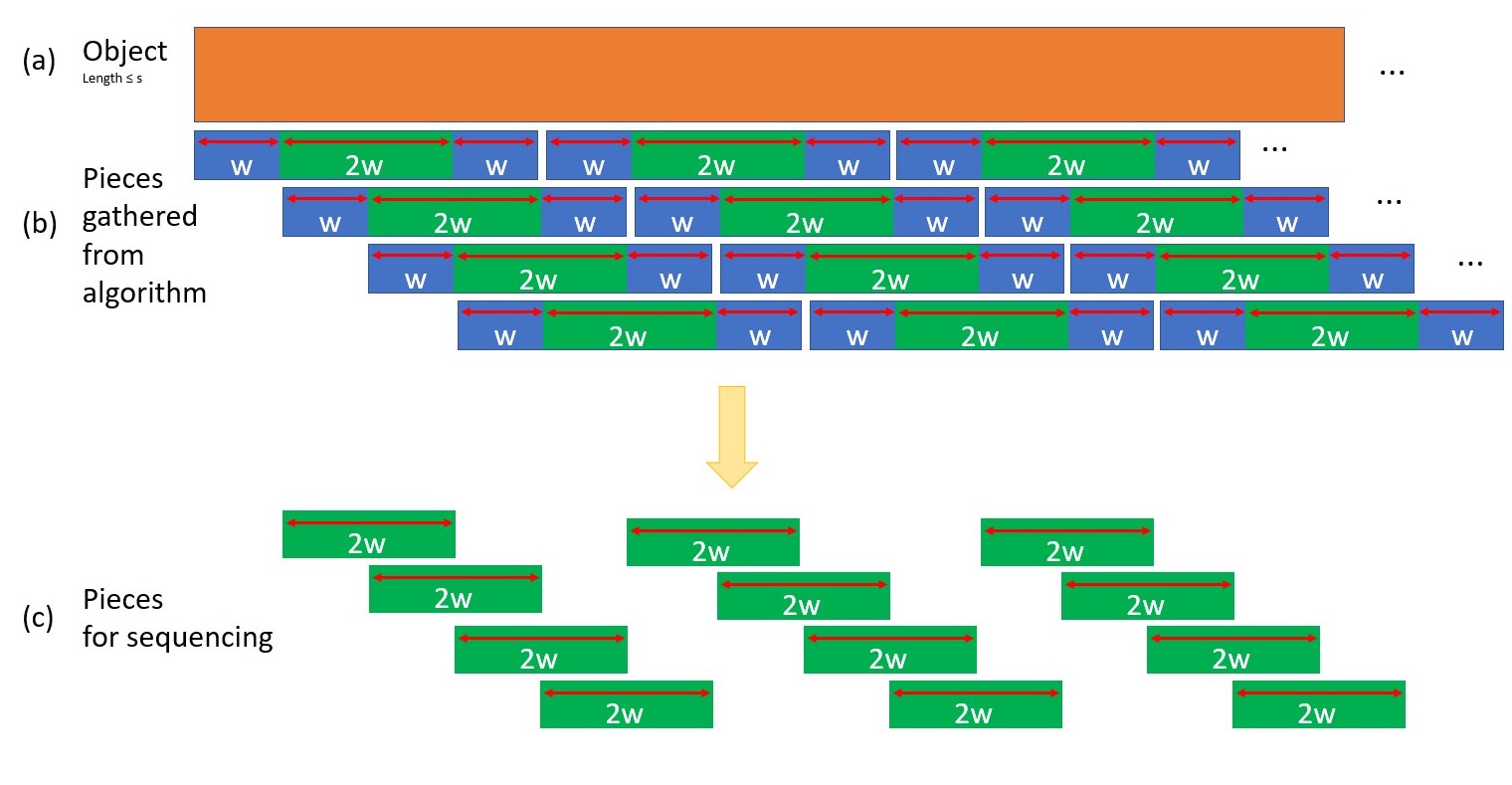}
    \caption{\textbf{Process used to learn objects from images in the absence of endpoint markers:} This algorithm allows us to collect the pieces of length $4w$ that cover the orange object in a redundant way. However, due to the risk of problematic overlaps we must discard a length-$w$ piece (blue)  on either end. Even after this discard, the pieces (green) used for sequencing must cover the object.}
    \label{fig:obj-alg-pieces}
\end{figure*}

In general, we might not have endpoint markers. To analyze the sample complexity in that case, we first assume that there are at most two objects in an image. At a high level, our argument will be that for an appropriate value of $L$, length-$L$ strings that arise from a single object will occur much more frequently than length-$L$ strings that arise from pairs of overlapping objects when that overlap occurs in the middle half of the string. Therefore, we can safely gather the frequently-occurring length-$L$ strings and use the middle $L/2$ portions of them in sequencing Algorithm \ref{alg:sequence} (Figure~\ref{fig:obj-alg-pieces}).

\begin{algorithm}

% \footnotesize
\caption{Learn Objects -- No Endpoint Markers \label{alg:learn-noep-2obj}
}
\begin{algorithmic}[1]
\STATE \textbf{Input:} $S$ samples, fraction $\tau$, value $L$
\STATE \texttt{object\_pieces} $\gets$ middle $L/2$ locations of every length-$L$ string that appears at least $\tau \cdot S$ times and has no background
\STATE \texttt{objects} $\leftarrow \textsc{sequence}(\texttt{object\_pieces})$ \COMMENT{\textsc{sequence} found in Algorithm~\ref{alg:sequence}}
 \FOR{each object}
   \STATE  \texttt{sleftend} $\gets$ leftmost $w$ pixels \;
  \STATE   \texttt{leftend} $\gets$ shortest string $x$ in any image that has background to its left and \texttt{sleftend} to its right
   \STATE  \texttt{srightend} $\gets$ leftmost $w$ pixels \;
   \STATE  \texttt{rightend} $\gets$ shortest string $x$ in any image that has background to its right and \texttt{srightend} to its left \;
   \STATE  append \texttt{leftend} to left of object and \texttt{rightend} to right of object
 \ENDFOR
\RETURN{\texttt{objects}}
\end{algorithmic}
\end{algorithm}

To aid in analysis of Algorithm~\ref{alg:learn-noep-2obj}, we define the notion of a {\em problematic overlap}:

\begin{restatable}{definition}{problematicoverlap}{(problematic overlap, 2 object case)}  \label{defn:proboverlap-2obj}
A window of length $L$ contains a {\em problematic overlap} if it contains two objects and no background, and they overlap inside the middle $L/2$ pixels. We call an $L$-pixel string a {\em problematic overlap string} if it can only arise via problematic overlaps.
% the only ways to generate it are via problematic overlaps.
\end{restatable}

\begin{restatable}{theorem} {learntwoobj}\label{thm:learn-two-obj-sc} 
% \hl{bound on size of object}
% \hl{make this restatable}
In the open room, uniform, partially-random model with $w$-well-structured objects, each of length at most $\objlen$ and at least $4w$, and 2 objects per image, Algorithm~\ref{alg:learn-noep-2obj} with $L = 4w$ 
% combined with the sequencing algorithm given in Algorithm~\ref{alg:sequence} 
recovers all $\totalobj$ objects with $S = \Omega\left( \totalobj \ln(\totalobj\objlen)   \right)$ samples and $\tau  = \frac{1}{16\totalobj}$ with probability $\frac{9}{10}$. 
\end{restatable}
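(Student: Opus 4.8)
The plan is to choose the threshold $\tau = \frac{1}{16\totalobj}$ so that it cleanly separates two kinds of length-$L$ windows: those whose middle $L/2 = 2\signature$ pixels form a genuine substring of a single object, which should appear in at least $\tau S$ of the images, and \emph{problematic overlap strings} (Defn.~\ref{defn:proboverlap-2obj}), whose middle is split across two objects by an occlusion boundary and which should appear in fewer than $\tau S$ images. The observation that justifies using the middle $L/2$ is that any background-free length-$L$ window whose occlusion boundary lies \emph{outside} the central $L/2$ pixels has its middle lying entirely within one object; hence every window the algorithm might collect contributes either correct single-object content or a problematic string, and it suffices to show the latter are filtered out by the frequency threshold.

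First I would lower bound, for each object and each of the at most $\objlen$ interior length-$L$ windows needed to cover it, the per-image probability that the corresponding pure substring is visible. Lemma~\ref{lem:op-unif-pr-lbit} with $\objinimg = 2$ and $L = 4\signature$ gives that this probability is at least $a = \left(1 - \frac{\objlen + L - 1}{d'}\right)\frac{d+1-L}{d'}\cdot\frac{2}{\totalobj}$. Using $\objlen < d/2$ and $d > 8\signature$ (so $L = 4\signature < d/2$), together with the identity $(\objlen + L - 1) + (d + 1 - L) = d' + 2$, the two factors are near-complementary fractions whose product is bounded below by a constant; I would check that this yields $a \ge 2\tau$, so each such window appears in at least $2\tau S$ images in expectation and, by a Chernoff lower-tail bound, in at least $\tau S$ images with high probability.

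Next I would upper bound the per-image probability that a fixed problematic string appears. Since its boundary sits in the central $L/2$, both the length-$\ge \signature$ prefix and suffix flanking the boundary are substrings of distinct objects, and by $\signature$-well-structuredness each flank pins down a unique object, position, and alignment; a fixed string therefore admits at most $O(L)$ decompositions. For each decomposition, the string appears only if the governing pair of objects is selected (probability $O(1/\totalobj^2)$) and placed at one specific relative offset, which over the $O(d)$ candidate window locations contributes a further $O(1/d')$. The resulting probability is $O\!\left(\frac{L}{\totalobj^2 d'}\right)$, a constant factor below $\tau$, so a Chernoff upper-tail bound shows such a string appears in fewer than $\tau S$ images with high probability. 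Summing the failure probability $\exp(-\Omega(\tau S)) = \exp(-\Omega(S/\totalobj))$ over the $O(\totalobj^2 \objlen)$ candidate problematic strings and the $O(\totalobj \objlen)$ good windows, the union bound drops below $1/10$ once $S = \Omega(\totalobj \ln(\totalobj \objlen))$.

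With the filtering established, the collected middles are exactly correct single-object substrings covering the interior of each object (positions $\signature{+}1$ through $\objlen{-}\signature$), so well-structuredness together with coverage let Algorithm~\ref{alg:sequence} reconstruct each interior exactly. Finally I would analyze the endpoint-recovery loop: because sequencing drops the leftmost and rightmost $\signature$ pixels, steps 5--9 locate, for the signature $\texttt{sleftend} = o_i[\signature{+}1{:}2\signature]$ (and symmetrically on the right), the shortest string lying between background and that signature in some image; whenever $o_i$ appears with its true edge against background --- an $\Omega(1/\totalobj)$-probability event observed with high probability over the samples --- this shortest string equals the missing $\signature$-pixel end, again identified uniquely by well-structuredness. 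I expect the main obstacle to be the two-sided probability estimate: making the constant in $a$ provably exceed $2\tau$ across the entire admissible parameter range, and bounding the problematic-overlap probability tightly enough --- via the well-structuredness uniqueness of the two flanks --- that it stays a constant factor below $\tau$.
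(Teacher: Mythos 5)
Your proposal follows essentially the same route as the paper: lower-bound the per-image probability of a good length-$L$ window via Lemma~\ref{lem:op-unif-pr-lbit}, upper-bound the probability of a problematic overlap string by using well-structuredness to pin down its generating objects and offsets, separate the two around $\tau = 1/(16\totalobj)$, apply Chernoff plus a union bound over the $\mathrm{poly}(\totalobj,\objlen)$ candidate strings, and finish with sequencing and the shortest-string endpoint-recovery argument. The only substantive difference is that you allow $O(L)$ decompositions of a problematic string where the paper's Lemma~\ref{lem:deter-2-ways} shows the two flanks force the four-tuple $(\textsc{obj}_L, \textsc{start}_L, \textsc{obj}_R, \textsc{end}_R)$ and hence leave at most $2$ decompositions; your own uniqueness reasoning already yields this tighter count, which removes the extra factor of $L$ you flag as a potential obstacle.
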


\paragraph{Proof Outline} \reviewed To prove this, we must show three things.
\begin{enumerate}
\compresslists
    \item \textbf{Seeing a particular big chunk from one object in one image is likely.} To argue about this, we simply apply Lemma~\ref{lem:op-unif-pr-lbit} to compute the probability of seeing a fixed string in a fixed object, giving rise to Claim~\ref{cl:p-see-1}. 
    \item \textbf{Seeing a particular big chunk consisting of two objects (i.e., problematic overlap string) in a single image is unlikely.} In order to reason about this, we define the notion of a {\em problematic overlap} (Definition~\ref{defn:proboverlap-2obj}).
    Further, we argue that any string that contains a problematic overlap can only be generated in a small number of ways (Lemma~\ref{lem:deter-2-ways}).
    Finally, we use this to bound the probability that a particular problematic overlap string appears in a particular image, which we call $p_\text{bad}$ (Claim~\ref{cl:p-prob-overlap-str-2obj}).
    \item \textbf{The separation between the aforementioned probabilities allows us to compute the sample complexity.} In Claim~\ref{cl:pmid-sep-2obj}, we define a quantity $p_\text{mid}\,,$ that upper bounds the probability of seeing a problematic overlap string by a factor of 2 and lower bounds the probability of seeing a good string, again by a factor of two. The Chernoff bound allows us to determine the number of samples we require in order to see a sufficiently separation between the number of copies of good strings (Lemma~\ref{lem:chernoff-sample-complexity}). Thus, we compute the total number of good strings (arising from one object, Claim~\ref{cl:ngood-str-2obj}) and bad strings (arising from two-object problematic overlaps, Claim~\ref{cl:n-tup-2obj}) and apply that lemma to get the theorem statement.
\end{enumerate}

We need $L$ to be large relative to $w$, since we must ignore the ends of the length-$L$ string to avoid including problematic overlap strings in the sequencing. Figure~\ref{fig:obj-alg-pieces} justifies the use of $L = 4w.$ Finally, we also show that the set of samples contains all the information we need to recover the ends and that the procedure in lines 4-10 of the algorithm recovers them.
% \hl{need to add a sentence about the end recovery piece}
For the full proof, see Appendix~\ref{appendix:learn-two-obj}.

\subsection{Extension of argument above for several objects in image}
\label{subsec:learning-kobj}

We now consider the more general case where we can have up to $\objinimg$ objects in an image for some given value $\objinimg$ greater than 2. 
At a high level, our argument is as before. However, the major difference is that problematic-overlap strings (ones that were generated from combining two objects), can now be generated by up to $\objinimg$ objects. This increases our sample complexity. We once again assume objects are at most $\objlen$ in size and $w$-well-structured, but now they must have length at least $6\,w\,k\,.$

\begin{restatable}{theorem}{learnmanyobj}\label{thm:learningalg-nobj}
In the open room, uniform, partially-random model with $w$-well-structured objects, each of length at most $s$ and at least $6w\objinimg\,,$ and $\objinimg$ objects per image, Algorithm~\ref{alg:learn-noep-2obj} with $L = 8\,w \, \objinimg$ and
$\tau = \frac{\objinimg}{O(\totalobj\,2^\objinimg)}$
% combined with the sequencing algorithm given in Algorithm~\ref{alg:sequence} 
recovers all $\totalobj$ objects with 
$S = \Omega\left( 2^\objinimg \, \totalobj \ln (\totalobj\, \objlen)   \right)$ 
samples with probability 9/10. 
% \hl{whp}
\end{restatable}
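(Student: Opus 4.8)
The plan is to follow the same three-part template used for the two-object case (Theorem~\ref{thm:learn-two-obj-sc}), generalizing each ingredient so that a length-$L$ window may now be tiled by up to $\objinimg$ objects rather than two, and then to recover endpoints exactly as in lines 4--10 of Algorithm~\ref{alg:learn-noep-2obj}. First I would lower-bound the probability $p_{\text{good}}$ that a fixed length-$L$ window whose middle $L/2$ pixels come from a single object appears in a random image. Instantiating Lemma~\ref{lem:op-unif-pr-lbit} with $L = 8\signature\objinimg$ gives $p_{\text{good}} \ge \left(1 - \frac{\objlen + L - 1}{d'}\right)^{\objinimg - 1} \cdot \frac{d + 1 - L}{d'} \cdot \frac{\objinimg}{\totalobj}$. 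When $\objlen + L$ is a constant fraction of $d'$ the geometric no-occlusion factor is $\Theta(2^{-\objinimg})$, which is exactly the source of the $2^\objinimg$ in the final sample bound and motivates setting the threshold $\tau = \frac{\objinimg}{O(\totalobj\, 2^\objinimg)}$ to a constant fraction of $p_{\text{good}}$.

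Second, and this is the crux, I would generalize the notion of a problematic overlap (Definition~\ref{defn:proboverlap-2obj}) to say that a length-$L$ window is problematic if its middle $L/2 = 4\signature\objinimg$ pixels are not entirely covered by a single object, and then upper-bound the probability $p_{\text{bad}}$ that a fixed problematic-overlap string appears. The key structural step is the $\objinimg$-object analogue of the two-object ``few ways to generate'' lemma: using $\signature$-well-structuredness (Definition~\ref{defn:w-s-general}), every length-$\signature$ substring of the window identifies a unique (object, offset) pair, so reading the window left to right pins down the sequence of objects tiling its visible surface together with the exact locations of the boundaries between them. Because objects have length at least $6\signature\objinimg$ while the window has length only $8\signature\objinimg$, the number of distinct visible pieces, and hence of object boundaries inside the window, is controlled, and I would argue that the number of (object-set, offset, depth-order) configurations realizing a given problematic string grows only like $2^\objinimg$ rather than like $\totalobj^\objinimg$ or $\objinimg!$. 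Since realizing a problematic overlap forces at least two objects into prescribed positions, each configuration carries an extra $\Theta(\objinimg/\totalobj)$ object-selection factor relative to $p_{\text{good}}$; invoking the hypothesis $d' > O(\totalobj\,\objinimg\,2^\objinimg)$ then keeps $p_{\text{bad}}$ below $\tau/2$ even after summing over all configurations.

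Third, I would define the separating quantity $p_{\text{mid}}$ exactly as in the two-object analysis, sitting a factor of two below $p_{\text{good}}$ and a factor of two above $p_{\text{bad}}$, so that with high probability every good string is counted at least $\tau S$ times and every problematic string fewer than $\tau S$ times. Applying a Chernoff bound and union-bounding over the $O(\totalobj\objlen)$ good strings (together with the problematic strings, whose count contributes only lower-order logarithmic factors) then forces $S = \Omega\!\left(\frac{1}{p_{\text{good}}}\ln(\totalobj\objlen)\right) = \Omega\!\left(2^\objinimg\,\totalobj\ln(\totalobj\objlen)\right)$. Feeding the surviving middle-$L/2$ pieces into the sequencing routine (Algorithm~\ref{alg:sequence}) reconstructs each object's interior, since consecutive good middles overlap in at least $\signature$ pixels once the $2\signature\objinimg$-pixel end buffers are discarded (this is what the bound $\objlen \ge 6\signature\objinimg$ guarantees, cf. Figure~\ref{fig:obj-alg-pieces}); the endpoint-recovery loop then reattaches the true left and right ends exactly as in the two-object proof, using that each object's end appears adjacent to background in some image.

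The main obstacle I anticipate is the second step: bounding the number of ways up to $\objinimg$ objects can conspire to produce a single problematic-overlap string, and showing this count inflates $p_{\text{bad}}$ by at most a $2^\objinimg$ factor. The delicate point is that a visible piece shorter than $\signature$ pixels cannot be uniquely attributed to an object, so I must choose $L = 8\signature\objinimg$ (with end buffers of $2\signature\objinimg$) large enough that any boundary landing in the middle $L/2$ is flanked by runs of length $\ge \signature$ that well-structuredness can resolve, while still leaving the middle large enough to tile objects of length $\ge 6\signature\objinimg$ with the $\signature$-overlap that sequencing requires. Verifying that these constants simultaneously satisfy all three constraints --- unique attribution of pieces, $p_{\text{bad}} \le \tau/2$, and coverage-with-overlap for sequencing --- is where the real work lies; the probability estimates and Chernoff bookkeeping then follow the two-object case essentially verbatim.
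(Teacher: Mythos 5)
Your overall template (lower-bound $p_{\text{good}}$ via Lemma~\ref{lem:op-unif-pr-lbit}, upper-bound $p_{\text{bad}}$ for problematic overlaps, separate the two with a threshold and Chernoff, then sequence and recover endpoints) is exactly the paper's, and your first and third steps are essentially correct, including the observation that the $\ln(n_{\text{po}})$ term in the union bound is absorbed because $p_{\text{mid}} = \Theta(\objinimg/(\totalobj 2^\objinimg))$ carries a factor of $\objinimg$ in the numerator. The gap is in the crux step you yourself flag. You propose to show that well-structuredness ``pins down the sequence of objects tiling the window's visible surface,'' bound the number of realizing configurations by $2^\objinimg$, and charge each configuration an extra $\Theta(\objinimg/\totalobj)$ selection factor. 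The first claim is false as stated --- and you concede as much later --- because a visible run shorter than $w$ pixels cannot be attributed to any object, so the tiling is not determined and the number of configurations per problematic string is not usefully bounded by $2^\objinimg$. More importantly, the selection-factor accounting does not produce the inequality you need: a bound of the form $2^\objinimg \cdot (\objinimg/\totalobj) \cdot p_{\text{good}}$ is not forced below $\tau/2$ by the hypothesis $d' > O(\totalobj\,\objinimg\,2^\objinimg)$, since $d'$ does not appear in it.

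The paper's resolution (Lemma~\ref{lem:two-obj-same-place} and Claim~\ref{cl:p-prob-overlap-str-nobj}) avoids configuration counting entirely. Applying the Pigeonhole principle separately to the left and right discarded quarters of the window (each of length $L/4 = 2w\objinimg$, cut into at most $2\objinimg+1$ pure runs because objects have length at least $3L/4$ and so cannot appear and vanish inside a quarter), one finds in each quarter a run of roughly $w$ pixels, which well-structuredness ties to a unique object and offset; a short argument shows these two anchors are distinct object instances. Hence \emph{every} realization of a given problematic string places some pair of objects at the same prescribed relative positions as a reference realization. One then partitions realizations into at most $\binom{\objinimg}{2}$ categories by the anchored pair, bounds the selection probability trivially by $1$, and uses the \emph{positional} probability $\frac{d-L}{d'^2}$ of placing two prescribed objects at prescribed relative positions --- a factor of roughly $1/d'$ worse than placing one. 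This is where $d' \ge 16\,\totalobj\,\objinimg\,2^\objinimg$ enters, giving $p_{\text{bad}} \le \frac{\objinimg^2}{2d'} \le p_{\text{mid}}/2$. Without this positional accounting your step two does not close, so the proposal as written has a genuine gap at its central claim.
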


\paragraph{Proof Outline} The first and third steps essentially carry over from the two object case, but analyzing the influence of problematic overlap strings in this case is more involved. We start by extending what it means to be a problematic overlap string to the many-object case:

\begin{definition}{(problematic overlap, general case)} \label{defn:prob-overlap-nobj}
A window of length $L$ contains a {\em problematic overlap} if it contains no background and there is at least one overlap between two distinct objects inside the middle $L/2$ pixels. We call an $L$-pixel string a {\em problematic overlap string} if the only ways to generate it are via problematic overlaps.
\end{definition}

Next, we show that {\em any} problematic overlap string of sufficient length must have at least two objects in the same place as a reference version of that string (Lemma~\ref{lem:two-obj-same-place}). Given this, we reason about a sufficient event for seeing such a problematic overlap string in a fixed image in Claim~\ref{cl:p-prob-overlap-str-nobj}. This, in turn, allows us to apply the same Chernoff analysis as before, albeit with 
a slightly different union bound this time, as there are 
more problematic overlap strings in this case, in order to get the stated sample complexity. Full proof in Appendix~\ref{appendix:learning-manyobj}.

\section{Inference} \label{sec:inference}

In this section, we discuss inference, providing algorithms for three different settings. Recall the inference task: given the set of objects and given an image, we must determine which objects were placed in what order and where on the canvas to generate this image. We consider two cases. First, if the objects are arbitrary (i.e., not necessarily $w$-well structured), then there could be multiple ways to generate any given image; we here will seek to reconstruct it with the fewest number of objects (Section~\ref{subsec:inference-arbitraryobj}). In this case, we consider a noiseless image generation process. Second, when we {\em do} have that objects are $w$-well-structured, we can aim for the stronger goal of getting the {\em correct} explanation for all but a small number of pixels. Here, we consider both noiseless (for which $w$-well-structuredness suffices, Section~\ref{subsec:inference-well-struct}) and noisy (for which we need $\epsilon$-strongly, $w$-well-structuredness, Section~\ref{subsec:inference-ws-noisy}) image generation. The problem of inference in these settings mirrors that of segmentation, studied in computer vision: for each pixel, we must determine its source object. Where the learning task corresponded to learning dictionary atoms, the inference task corresponds to finding ``weights'' for how the atoms are combined to give rise to a given instance.

\subsection{Arbitrary Objects} \label{subsec:inference-arbitraryobj}

Here, we consider the setting where the objects are (1) all of the same size, $\objlen$, (2) known, and (3) arbitrarily generated, and our goal is to find the fewest number of objects required to generate the image. Observe that there is a trivial solution that takes time $O(d^\objinimg \cdot \totalobj^\objinimg)$: simply enumerate the different images possible by placing one object, or two objects, or three objects, etc., into the canvas 
(or extended canvas for open room) until the first match is found.
However, if $\objinimg \neq O(1)\,,$ then this is computationally prohibitive. We provide a dynamic programming algorithm to solve the inference task in this setting in time $O(d\, \objlen^2\, \totalobj^2)$. Details can be found in Appendix~\ref{appendix:dp}. \par

\subsection{Well-Structured Objects}
\label{subsec:inference-well-struct}

In this section, we consider our original image-generation model and objects that satisfy $w$-well-structuredness. For inference in this model, we describe a greedy algorithm that considers identifiable objects in the image from objects with the most visible bits to those with the fewest visible bits and thereby successfully recovers a correct explanation for most of the bits of the image. While this algorithm makes more assumptions on the objects than the DP algorithm of Section \ref{subsec:inference-arbitraryobj} (namely, it assumes well-structuredness of objects, though it does not assume they are all the same size), it has several advantages.  For instance, it will allow us to gain a tolerance to noise that we will show in Section \ref{subsec:inference-ws-noisy}.  In addition, the assumption of  well-structuredness allows us to discuss correctness, rather than minimality of an explanation. The background can be either distinct or well-structured.

Note that if there are $\objinimg$ objects in the image, there are at most $2\objinimg$ ``transition'' points, where a transition occurs when an object begins or ends. This means that in an image there are at most $2\objinimg+1$ {\em pure segments}, where each pure segment is a contiguous block of pixels arising from a single object or background.
Our algorithm recovers an explanation for the image that for every pixel in the image will either correctly assign it to the object (or background) that generated it or else will output ``I don't know'', and moreover outputs ``I don't know'' on at most  $4\objinimg^2w + 2\objinimg\,w$ locations.

\paragraph{Brief Description of Algorithm} The algorithm first scans for large chunks (at least $2 \objinimg\,w + 1$) in the image that match a single object (``signatures'') and chooses the largest one. 
Provided the image is larger than $(2\objinimg +1) \cdot (2 \objinimg\,w) + 1\,,$ 
by the Pigeonhole principle, at least one such signature, for either an object or the background, must exist. 
Once the largest signature is identified, it is used to explain most of those pixels, and then the same algorithm is run recursively on the rest of the image. We keep running this algorithm on pieces that result until all pieces are too small to find a signature.

\begin{algorithm}
\caption{Inference -- \textcolor{blue}{($\epsilon$-)} Well-Structured Objects, \textcolor{blue}{(Noisy Copies)}, Greedy Algorithm \label{alg:greedy-ws}}
\begin{algorithmic}[1]
\STATE \textbf{Input:} $d$-pixel image, $\bot,$ a character not in the image
\STATE \texttt{explanation} = \{ \}
\WHILE{parts of the image remain unexplained}
\STATE {$i_\text{start}, i_\text{end} \gets $ largest string that \textcolor{blue}{($\alpha$-approximately)} matches a single object or background 
\IF{ $i_\text{end} - i_\text{start} \le 2\,\objinimg\,w$}
\STATE {stop}
\ELSE \STATE{ associate $i_\text{start} + \objinimg\,w$ to $i_\text{end} - \objinimg\,w$ with the object}
\ENDIF
\STATE Add the segment along with this object and relevant indices to \texttt{explanation} 
\STATE Replace explained bits with $\bot$}
\ENDWHILE %\EndWhile \\
\RETURN{\texttt{explanation}}
\end{algorithmic}
\end{algorithm}

\begin{restatable}{theorem}{wsinference} \label{thm:wsinference}
Whether the image has a distinct background (Defn.~\ref{defn:distinctbg}) or a well-structured background Defn.~\ref{defn:wsbg}), the algorithm given in Algorithm~\ref{alg:greedy-ws} recovers an explanation for the image that is correct and explains at least $d - \Theta(w\objinimg^2)$ 
pixels and runs in polynomial time. 
\end{restatable}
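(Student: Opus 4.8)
The plan is to establish three things separately: (i) every pixel the algorithm explains is attributed to the object that genuinely generated it (or, in the unavoidable case that a short occluding run happens to agree exactly with the chosen object, to an object consistent with the image there); (ii) the number of pixels left as ``I don't know'' is $O(\objinimg^2 w)$; and (iii) the whole procedure runs in polynomial time. Throughout I would treat the background uniformly with the objects: under a distinct background (Defn.~\ref{defn:distinctbg}) background pixels never agree with any object, which only makes the arguments easier, while under a well-structured background (Defn.~\ref{defn:wsbg}) the background satisfies the same length-$w$ uniqueness as the objects, so it can be handled exactly like one more ``object.''

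I would begin from the structural fact already noted before the theorem: the image consists of at most $2\objinimg+1$ pure segments, and by $w$-well-structuredness (Defn.~\ref{defn:w-s-general}) any contiguous block of at least $w$ pixels that agrees with some object $o$ determines both $o$ and its alignment uniquely (property (2) forbids a length-$w$ agreement with a second object, property (3) forbids two alignments within $o$). First I would argue the loop makes progress and terminates correctly: by pigeonhole, any remaining contiguous piece longer than $(2\objinimg+1)(2\objinimg w)$ must contain a pure segment of length exceeding $2\objinimg w$, so the ``largest matching block'' $i_\text{start},i_\text{end}$ found in Algorithm~\ref{alg:greedy-ws} has length $>2\objinimg w$ and the loop does not stop prematurely while large pieces remain.

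The heart of the proof is the correctness of a single identification. Let $C=[i_\text{start},i_\text{end}]$ be the chosen block, agreeing everywhere with a single object $o$. I would decompose $C$ into its true pure segments and observe that any maximal sub-run of $C$ whose true generator is not $o$ must have length $<w$: such a run agrees with $o$ over its whole extent (since all of $C$ does), so a length-$w$ run would exhibit a length-$w$ agreement between $o$ and another object (or the background), contradicting well-structuredness. Because the whole image has at most $2\objinimg$ transitions, only $O(\objinimg)$ such foreign runs occur inside $C$, and---this is the delicate point---the foreign runs abutting the two ends of $C$ (the ``fringe,'' where $C$ has been extended past $o$'s genuine segment by coincidental agreement) have total length at most $\objinimg w$ on each side. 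Discarding the outer $\objinimg w$ pixels on each end, as the algorithm does, therefore leaves a core that lies inside $o$'s genuine pure segment, so attributing the core to $o$ is correct. I expect this fringe/interior bookkeeping to be the main obstacle: I must charge the at-most-$2\objinimg$ transitions carefully to confirm that trimming $\objinimg w$ yields a clean core, and must verify that the greedy choice of the \emph{largest} block prevents a spurious coincidental block from being selected before the genuine long segments are consumed.

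Finally I would account for coverage and runtime. Charging the two trimmed fringes of each identification, together with any leftover piece too short ($\le 2\objinimg w$) to be identified, to the pure segment it abuts, and using that there are at most $2\objinimg+1$ pure segments each incurring at most $2\objinimg w$ unexplained pixels, gives the bound $\le 2\objinimg w(2\objinimg+1)=4\objinimg^2 w+2\objinimg w$ on the ``I don't know'' locations, i.e.\ at least $d-\Theta(w\objinimg^2)$ explained pixels. For runtime, each iteration finds the longest block agreeing with some object at some alignment by scanning all objects and offsets (or via standard string matching/hashing) in polynomial time, and there are at most $O(d)$ iterations, so the total is polynomial.
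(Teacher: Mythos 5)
Your overall strategy mirrors the paper's: pigeonhole to guarantee the loop finds a block longer than $2\objinimg w$, well-structuredness to pin down the object and alignment, trimming $\objinimg w$ from each end, and charging at most $2\objinimg w$ lost pixels to each of the at most $2\objinimg+1$ pure segments to get the $\Theta(\objinimg^2 w)$ bound. However, there is a genuine gap at the central step, and you have flagged it yourself without closing it. Your argument bounds the ``fringe'' foreign runs at the two ends of the matched block $C$ and then asserts that discarding $\objinimg w$ pixels per side ``therefore leaves a core that lies inside $o$'s genuine pure segment.'' This does not follow from what you have shown: a foreign run of length $<w$ that agrees with $o$ could, a priori, sit strictly in the \emph{interior} of $C$, in which case no amount of end-trimming removes it and the core would contain pixels incorrectly attributed to $o$. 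Your hedge in point (i) --- attributing such pixels ``to an object consistent with the image there'' --- concedes exactly the weaker guarantee that the theorem does not allow: correctness here means assigning each explained pixel to the object that actually generated it.

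The missing ingredient is the paper's Claim~\ref{claim:middle-bits-same-ws}: if two pixels $i<j$ of $C$ genuinely arise from the same instance of $o$, then every pixel between them does too. The proof models occlusion as a stack; any foreign object visible strictly between $i$ and $j$ forces some object $o'$ to be pushed and popped with nothing above it, so $o'$ is fully visible there, and since every object has length at least $w$ (property (1) of Definition~\ref{defn:w-s-general}) this exhibits $w$ contiguous pixels of $o'$ agreeing with $o$, contradicting property (2). Combined with a pigeonhole argument locating genuine $o$-pixels $i$ and $j$ in the left and right halves of $C$ (so that the only possible foreign runs are confined to within $\objinimg(w-1)$ pixels of each end), this yields Lemma~\ref{lem:detectmiddle} and makes the trimmed core provably a single pure segment of $o$. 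You need to supply this interval/stack argument; the transition-counting you propose bounds how many and how long the foreign runs are, but not \emph{where} they can be.
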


\paragraph{Proof Overview} The main thing we argue is that the only way for any indices $i_\text{start}$ to $i_\text{end}$ to match a single object is if the region $i_\text{start} + \objinimg \, w$ to $i_\text{end} - \objinimg \, w$ arose from that object.
Details are in Appendix~\ref{appendix:greedy}.

\subsection{Noisy Image Generation}
\label{subsec:inference-ws-noisy} 

In this section, we explore what happens when the image generation process is noisy. We ask: if an adversary can corrupt some number of pixels of the image, do we have any hope of solving the segmentation problem? We first give an example that show how badly segmentation can go if an algorithm seeks exact matches but gets noisy images. Then, we provide an algorithm that completes the segmentation process and can correct noisy pixels under some assumptions.

\paragraph{Example where exact match fails} Suppose we are in the open room model. We have two colors and well-structured objects $A$ and $B$ (each length $d$), where object $A$ starts with the first color and object $B$ with the second color. In order to generate the image, pick one of the objects and place it on the canvas so it is fully visible. Next, pick a location $i$ in the middle $d/2$ pixels of the image and flip the bit in that location. This completes the adversarially-corrupted image. We can show that we require at least $d/(4w)$ objects to exactly explain this image, when in reality one object with one corrupted pixel suffices. More details are in Appendix~\ref{appendix:exactmatchexamples}.

 \paragraph{Solving The Problem}
We assume 
objects are $\pmb{\epsilon}$-strongly, $w$-well-structured. Further, if any window of length $W$ in the image can have up to $\pmb{\alpha}$-fraction of its pixels corrupted, 
we are able to use a similar algorithm to before for recovery. 
We explore the regime where these small changes could significantly confuse an inference algorithm. 
We require that all the objects are substantially bigger than both the size of the signature and the size of segments the adversary can act on. The output of Algorithm~\ref{alg:greedy-ws} with the modifications in color provides an explanation that specifies the {\em correct} version of the image at those pixels, allowing us to also identify {\em where} the adversarial corruption occurred.

\begin{definition} \label{defn:adversary}
We say that an adversary acting on an image $\I$ has {\em strength $(\alpha, W)$} if the adversary can corrupt up to $\alpha$ fraction of the pixels in any window of size $W$ in $\I$. We call the image after corruption $\tilde{\I}\,.$
\end{definition}

With this definition and objects that are $\epsilon$-strongly, $w$-well-structured, we can use an algorithm quite similar to Algorithm~\ref{alg:greedy-ws} to give an explanation for the image in terms of the objects, provided $\alpha$ and $\epsilon$ are appropriately related and the algorithm considers appropriately sized segments.

We require $\alpha < \epsilon/4$ because otherwise a corrupted segment of size $\max\{w,W\}$ could be closer to a piece that is not its source (by triangle inequality; details in Lemma~\ref{lem:alpha-corrupted-hamming-distance}). When considering adversaries as defined in Definition~\ref{defn:adversary}, our algorithm must use pieces large enough to ensure that even after this kind of corruption, the piece can be uniquely identified with a single object. For this, the algorithm must use $w_\text{alg} \coloneqq \max \left\{ w, W  \right \}\,,$ where $w$ is the well-structuredness parameter. Now, the algorithm proceeds as before, except that it considers {\em approximate} matches. 

\begin{definition}
\label{defn:approxmatch}
 A string $\sigma$ from an image $\alpha$-{\em approximately matches} a string $\sigma^\star$ from an object if every window of length $w_{\text{alg}}$ in $\sigma$ requires at most $\alpha$ fraction of its pixels changed to exactly match $\sigma^\star\,$ in the corresponding pixels, i.e. for every $i\,, d(\sigma[i:i+w_\text{alg}-1], \sigma^\star[i: i+w_\text{alg}-1]) \le \alpha \, w_\text{alg}\,,$ where $d(\cdot, \cdot)$ is the Hamming distance between the two strings. 
\end{definition}

\begin{restatable}{theorem}{inferencenoisy} \label{thm:inference-noisy}
Suppose an image is generated from $\epsilon$-strongly, $w$-well-structured objects, and an adversary of strength $(\alpha, W)$, $\alpha < \epsilon/4$, acts on it. Then, Algorithm~\ref{alg:greedy-ws} with the modifications in parentheses would recover in polynomial time an explanation for the unadulterated image that is correct in $d - \Theta(w_\text{alg}\objinimg^2)$ pixels.
\end{restatable}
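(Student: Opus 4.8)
The plan is to mirror the proof of Theorem~\ref{thm:wsinference} (the noiseless well-structured case), replacing every appeal to exact well-structuredness with a robust, approximate-matching analog built from $\epsilon$-strong $w$-well-structuredness (Definition~\ref{defn:eps-w-ws}) together with the hypothesis $\alpha<\epsilon/4$. I would organize the argument around three ingredients: (i) translating the adversary's per-$W$-window corruption budget (Definition~\ref{defn:adversary}) into a per-$w_\text{alg}$-window budget, since the modified algorithm reasons at scale $w_\text{alg}=\max\{w,W\}$; (ii) a robust uniqueness lemma that plays the role conditions (2)--(3) of well-structuredness play in the noiseless proof; and (iii) the same transition-counting and recursion bookkeeping as before to bound the wrong-or-unexplained pixels by $\Theta(w_\text{alg}\objinimg^2)$. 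For (i), since $w_\text{alg}\ge W$, any length-$w_\text{alg}$ window of $\tilde{\I}$ is covered by $\lceil w_\text{alg}/W\rceil$ windows of size $W$, each corrupted in at most $\alpha W$ pixels, so a clean window and its corrupted image differ in at most a constant times $\alpha\,w_\text{alg}$ pixels.

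For (ii), the robust uniqueness lemma I would prove is: a length-$w_\text{alg}$ window $\tilde{W}$ of $\tilde{\I}$ can $\alpha$-approximately match (Definition~\ref{defn:approxmatch}) at most one object-offset pair. The proof is the triangle inequality of Lemma~\ref{lem:alpha-corrupted-hamming-distance}. If $\tilde{W}$ is the corruption of a clean window $W'$ from object $o'$ at some offset, then $d(\tilde{W},W')\le \alpha\,w_\text{alg}$ by (i); if $\tilde{W}$ also $\alpha$-approximately matches the window $W^{o}$ of a different object (or of the same object at a different offset), then $d(\tilde{W},W^{o})\le \alpha\,w_\text{alg}$, so $d(W',W^{o})\le 2\alpha\,w_\text{alg}$. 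Averaging over the length-$w$ sub-windows of these two clean object windows produces one on which they agree in more than a $1-\epsilon$ fraction of pixels, contradicting Definition~\ref{defn:eps-w-ws}. Hence $o'=o$ at the same offset, the robust analog of ``a length-$w$ string uniquely identifies its source object and position.''

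With this lemma I would run the noiseless structural argument verbatim, but at scale $w_\text{alg}$. Let $\sigma=\tilde{\I}[i_\text{start}:i_\text{end}]$ be the largest string the algorithm finds $\alpha$-approximately matching some object $o$. Any length-$w_\text{alg}$ window of $\sigma$ lying entirely inside a single clean pure segment must, by the uniqueness lemma, come from $o$ at the correct offset; consequently every pure segment of $\I$ appearing inside $\sigma$ is either $o$ at the correct offset or has length $<w_\text{alg}$, so every ``foreign'' pixel of $\sigma$ lies within $w_\text{alg}$ of a transition. Since an image of $\objinimg$ objects has at most $2\objinimg$ transitions (hence at most $2\objinimg+1$ pure segments), the pixels that the trimmed middle $[\,i_\text{start}+\objinimg\,w_\text{alg},\,i_\text{end}-\objinimg\,w_\text{alg}\,]$ could misattribute are confined to $O(\objinimg\,w_\text{alg})$ transition-adjacent pixels, while every other explained pixel is provably from $o$. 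Charging the per-side trimming of $\objinimg\,w_\text{alg}$ across the $O(\objinimg)$ recursive calls (each call explains at least one pixel, giving termination, and the search over objects, offsets, and lengths is polynomial) yields the claimed $d-\Theta(w_\text{alg}\objinimg^2)$ correctly explained pixels in polynomial time.

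The crux I expect is the robust uniqueness lemma (ii), and specifically keeping the constants honest across the three distinct window scales: the adversary is budgeted per $W$-window, the algorithm matches per $w_\text{alg}$-window, and the structural guarantee of Definition~\ref{defn:eps-w-ws} lives at the $w$-scale. The triangle inequality only buys a factor of two, so the entire argument hinges on showing that $2\alpha\,w_\text{alg}$ disagreement at scale $w_\text{alg}$ still forces more-than-$(1-\epsilon)$ agreement on some length-$w$ sub-window; this is exactly what $\alpha<\epsilon/4$ is calibrated to provide, the factor of four (rather than two) covering both the averaging loss in the $w_\text{alg}\!\to\!w$ reduction and the rounding loss from partitioning a $w_\text{alg}$-window into $W$-windows. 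A secondary subtlety I would guard against is a long run of short foreign segments sitting in the middle of $\sigma$; this is ruled out because each such segment is bordered by a transition and there are only $O(\objinimg)$ transitions, so foreign pixels cannot escape the $O(\objinimg\,w_\text{alg})$ transition-adjacent budget regardless of how they cluster.
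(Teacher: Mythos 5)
Your proposal follows the paper's proof almost step for step: your ingredient (i), combined with the averaging step inside your uniqueness lemma, is exactly the paper's Lemma~\ref{lemma:largerw} (an $\epsilon$-strongly $w$-well-structured family is $\epsilon/2$-strongly $z$-well-structured for $z>w$); your ingredient (ii) is the triangle-inequality argument of Lemma~\ref{lem:alpha-corrupted-hamming-distance}; and your ingredient (iii) reruns the noiseless transition-counting argument at scale $w_\text{alg}$, as in Lemma~\ref{lem:epsws-puresegment}. The one place you genuinely diverge is the treatment of foreign material in the \emph{interior} of the matched segment $\sigma$: the paper's Claim~\ref{cl:middle-bits-same-epsws} uses the stack/nesting argument (any object that both appears and disappears strictly between two pixels coming from the same instance of $o$ must expose at least $w$ of its own pixels, which would then have to $\alpha$-approximately match $o$ --- impossible) to conclude that \emph{no} foreign pixel can lie between the two pigeonhole anchors, so the trimmed middle is a genuine pure segment and the algorithm is never wrong, only occasionally silent. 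You instead permit short ($<w_\text{alg}$) foreign pure segments in the middle and charge them to the global budget of $2\objinimg+1$ pure segments; this still yields the stated count of $d-\Theta(w_\text{alg}\objinimg^2)$ correct pixels, but it concedes possible misattributions and so proves a slightly weaker per-pixel guarantee than the paper actually establishes (and than the noiseless Theorem~\ref{thm:wsinference} advertises). Your explicit worry about tracking constants across the three window scales is well placed --- the paper is in fact looser than you are about the covering factor when $W<w$ --- but this affects only the constant in the condition $\alpha<\epsilon/4$, not the structure of the argument.
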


\paragraph{Proof Overview} Our proof overall follows the same argument as before, with a couple added subtleties. Signatures are different in this setting, since we must check for chunks of the image that are {\em approximate matches} to the objects, i.e., if 
every contiguous block of $w_\text{alg}$ matches up to $\epsilon$ fraction of the pixels.
 Details can be found in Appendix~\ref{appendix:noisy-greedy}.

\section{Discussion} \label{sec:discussion}

In this work, we considered questions surrounding decomposition of complex but structured ``images'' into simpler, latent ``objects,'' via some combinatorial process. 
We started by defining the setting we consider, including the property of being $w$-well-structured (similar to incoherence assumptions in standard dictionary learning). This property is a natural one to consider, as it holds with high probability for random, and even semi-random, objects. This property is powerful for both learning objects from multiple images and segmenting images correctly into the relevant objects. \par

We then studied two problems, ``learning'' and ``inference'': the learning problem asked how to decompose a set of complex items that derive their structure from a combinatorial process over latent components into a set of simple components. The inference problem asked, given the set of latent objects, how to decompose a single complex item into its component objects under various objectives: minimizing the number of objects required to explain and being confident of the source of each pixel both with and without adversarial corruptions. In the lattermost case, our work is a setting where the property of ``having a simple explanation'' can help protect against adversaries. \par

Some limitations of this work are the fact that we only study images and objects with one linear dimension, and we only study quantized colors. Also, 
one could study broader models of the joint distribution over the subset of objects that appears in the image.
% our results apply in the uniform model, though the arbitrary joint model is more realistic. 
 An additional model in which to study these questions is one where instead of placing objects directly on the canvas, we have the freedom to chop off the some parts of the ends, in a form of collaging. Elaborating on these factors would be interesting directions for future work.

Even in one dimension, another interesting direction is to consider is complicating the model by allowing different images to have different ``views'' or ``lighting conditions'' of any object. Now, we can no longer look for identical pieces. Instead, suppose we have an oracle that tells us whether two strings arise from different views of the same underlying string or not. One complication that arises is that a particular string could match two different strings under two different decodings, and in this case, it is not immediately clear how to decode pieces consistently in order to ensure that the pieces can be sequenced. An important question is how to formalize this kind of variation and solve the problem.

% \input{7_extension}

% Acknowledgments---Will not appear in anonymized version
\section*{Acknowledgments}
This work was supported in part by the National Science Foundation under grant CCF-1815011, by the NSF-Simons Funded Collaboration on the Mathematics of Deep Learning, and by the Defense Advanced Research Projects Agency under cooperative agreement HR00112020003. The views expressed in this work do not necessarily reflect the position or the policy of the Government and no official endorsement should be inferred.
\clearpage
\bibliographystyle{named}
\bibliography{bibliography}
\clearpage
\appendix
\onecolumn
\section{Related Work} \label{appendix:relatedwork}

\paragraph{Related Work} 
The issue of finding a small number of explanatory features for high-dimensional data is well-studied. Some particular framings of this problem include matrix factorization, for which principal components analysis (PCA) is often used \citep{pearson_liii_1901, hubert_two_2000}, dictionary learning, framed slightly more abstractly, \citep{kreutz-delgado_dictionary_2003}, and factor analysis \citep{harman_modern_1960}. While much of the work traditionally done in this area focuses on the linear case, there have been several works that address non-linear settings: \cite{mairal_supervised_2009} consider a supervised dictionary learning problem which they solve by minimizing a risk function; meanwhile, \cite{pmlr-v40-Balcan15, balcan-2020} study representations for lifelong learning and exploit shared structure in the learning tasks to develop good summarized representations. 
% \hl{maybe say a few words here} \par

Several works have also studied the use of decomposing high-dimensional objects into low-dimensional ones in the context of adversarial robustness \citep{bhagoji_enhancing_2017, gupte_pca_nodate}. Most theory regarding adversarial robustness in computer vision considers the problem of classification \citep{cullina_pac-learning_2018, montasser_reducing_2020, montasser_adversarially_2022}. While this is an important setting to look at, much practical interest is in segmentation \citep{hendrik_metzen_universal_2017, yatsura_certified_2022, xie_adversarial_2017}. \par

Another line of work related to this is that of factorial learning, studied extensively in the early 2000s. In this problem, any given object has a fixed ground truth appearance in 3d, but when placed into an image, there are finitely many ways it might be visible based on lighting, angle, etc. Thus, a fixed set of images are each available in an image, albeit in different configurations in different images. The goal, then, given an image, is to determine the parameters that dictate in which configuration each object in the image is. A common method to solve for the parameters is to solve an maximum likelihood problem using expectation maximization \citep{ghahramani_factorial_1994}. 

A similar formulation is that of layered models, which decompose images into layers based on depth in order to then segment the image; here, too, the method is to model the image as a graphical model and then run an inference algorithm \citep{wang_representing_1994, jojic-flexiblesprites-2001, yi_yang_layered_2012}. 
% \hl{second citation is a large project with several publications associated with it. make it more specific}
While this problem shares some characteristics with ours (learning what is in an image from samples), in our work we focus on a setting where there are no rotations, camera angles, or perspective shifts, and we instead focus on learning the set of objects from samples and then determining whether and where the objects are in a new image.

Others have explored learning objects from images in an unsupervised manner. \cite{titsias2005unsupervised} does this by creating a probabilistic generative model that accounts for background generation, object placement, and any translations of the objects. They provide a greedy algorithm proceeding by ``layer'' (depth of object) that finds one object at a time. They then extend it to learning multiple objects in video data. This greedy algorithm learns the parameters of the aforementioned probabilistic graphical model. Their work focuses on learning objects as best as possible in realistic settings, while our work focuses on a controlled setting in which we can exactly learn the objects.

Another area of work on which we draw is computational biology. A well-studied problem in gene sequencing is reconstructing a string based on having seen recurring pieces of it \citep{motahari_information_2013}. We adapt such a shotgun sequencing algorithm for our use, as it will be applicable throughout Section~\ref{sec:learning}. 
An extension we mention in Section~\ref{sec:discussion}, the collage model, can also be motivated by computational biology, where rather than having objects obscuring those they are in front of, we have strings (DNA segment) from pieces of other strings (genes).

\section{Details of Preliminaries}

\subsection{Details of Models}
\label{appendix:modeldetails}

In the following table, we summarize the places in which each of the models is considered in the paper.

\begin{center}
\begin{tabular}{ |c||c|c|c|c|c| } 
 \hline
 \textbf{Section} & \textbf{Object Selection} & \textbf{Room Style} & \textbf{Depth} & \textbf{\# objects in image} & \textbf{endpoint markers} \\
 \hline
 Section~\ref{subsec:learning-basicwarmup}& uniform & open room & fully random & up to $\totalobj$ & yes\\ 
 Section~\ref{subsec:learning-warmup} & uniform & open room & partially-random & $\objinimg$ & yes\\ 
 Section~\ref{subsec:learning-twoobj} & uniform & open room & partially-random & 2 & no \\ 
 Section~\ref{subsec:learning-kobj} & uniform & open room & partially-random & $\objinimg$ & no\\
 Section~\ref{subsec:inference-arbitraryobj} & anything & anything & anything & up to $\totalobj$ &no\\
 Section~\ref{subsec:inference-well-struct} & anything & anything & anything & $\objinimg$ &no \\
 Section~\ref{subsec:inference-ws-noisy} & anything & anything & anything & $\objinimg$ &no \\
 \hline
\end{tabular}
\end{center}

\subsection{Random and Semi-random Objects Are Well-Structured}
\label{appendix:ws-proofs}

In this section, we provide a proof for Lemma~\ref{lem:randomobjws} and the state and prove Lemma~\ref{lem:semirandomobjws}, a similar result albeit for semi-random objects.

\randomobjws*
\begin{proof}
In order to show this, we consider three cases:
\begin{enumerate}
    \item \textbf{Windows in different objects:} Consider a window of size $w$ in one object and compare it to a window of size $w$ in a different object. There is a $1/c^w$ probability that they are the same. There are at most $\binom{\totalobj}{2} \cdot \objlen^2$ pairs of indices in this category. So the probability of collision here is at most $\totalobj^2\objlen^2/c^w\,.$
    \item \textbf{Disjoint windows in a single object:} For two disjoint windows of size $w$ in the same object, the same holds, and the probability that they are the same is $1/c^w\,.$ There are at most $\totalobj \cdot \objlen(\objlen-w)$ pairs of indices in this category, so the probability of collision is at most $\totalobj \cdot \objlen^2/c^w\,.$
    \item \textbf{Overlapping windows in a single object:} now for overlapping windows in a single object, we consider the strings location-by-location. Suppose the offset is $q \in \{ 1, \hdots \, w-1 \}\,.$ Then within object $A\,,$ say we are interested in the probability that $A[i:i+w-1] = A[i+q:i+q+w-1]\,.$ This is equivalent to the intersection of events that $A[i] = A[i+q]\,, A[i+1] = A[i+1+q]\,,\hdots A[i+w] = A[i+q+w]\,.$ Since pixels of the object are generated independently at random, the probability here is also $1/c^w\,$ (in particular, if one considers the locations in $A$ being generated left to right, then $A[i+q]$ has a $1/c$ chance of being equal to $A[i]$, and  conditioned on that $A[i+1+q]$ has a $1/c$ chance of being equal to $A[i+1]$, and so on). There are at most $\totalobj \cdot \objlen \cdot w$ pairs of indices in this case, so the probability of collision is at most $\totalobj \cdot \objlen^2/c^w\,.$
    % \hl{during notation change, change $s$ to some other letter}
\end{enumerate}

Thus, adding these three cases together, the probability that two $w$-character strings are the same for randomly-generated objects is at most:
$$
\frac{\totalobj^2 \objlen^2}{c^w} + \frac{\totalobj \objlen^2}{c^w} + \frac{\totalobj \objlen^2}{c^w} \le \frac{3\totalobj^2\objlen^2}{c^w}\,.
$$

\end{proof}

\begin{restatable}{lemmma}{semirandomobjws}[Semi-random Objects are $w$-Well-Structured whp] \label{lem:semirandomobjws} 
Suppose an adversary determines what the objects look like. After all the adversary's decisions are made, for each location in each object, the color in that location is replaced by a random color with probability $p$. Then, a set of $\totalobj$ objects over $\{ 0, 1 \}^\objlen$ generated as described in this way is $w$-well-structured with probability at least $1 - 3\totalobj^2 \objlen^2(1-p(1-1/c))^w\,.$ In particular, $w = O(\frac{1}{p} \log(\totalobj\objlen))$ suffices for the $\totalobj$ objects to be $w$-well-structured with probability $1-o(1)\,$. 

\end{restatable}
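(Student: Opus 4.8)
The plan is to mirror the three-case structure and the pair-counting of Lemma~\ref{lem:randomobjws}, reducing everything to a bound on the probability that two fixed length-$w$ windows come out identical after the semi-random rerandomization. The only quantity that changes relative to the fully random case is this per-pair collision probability: instead of $c^{-w}$ it becomes $(1-p(1-1/c))^w$, and feeding this into the identical union bound immediately produces the claimed $1-3\totalobj^2\objlen^2(1-p(1-1/c))^w$. So the work is entirely in re-deriving the single-pair bound in the semi-random model.

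The workhorse is a single-pixel estimate that I would establish first. Fix one location, let $a$ be the adversary's chosen symbol there, let $X$ be its semi-random value (equal to $a$ with probability $1-p$, else uniform over the $c$ colors), and let $y$ be any value independent of $X$'s rerandomization. Then $\Pr{X=y}=(1-p)\,\mathbf{1}[a=y]+p/c\le(1-p)+p/c=1-p(1-1/c)$. For windows in different objects, and for disjoint windows in the same object, the $w$ pixel comparisons involve $2w$ distinct, independently rerandomized locations, so the $w$ equality events are mutually independent; conditioning this single-pixel bound on the realized symbols of the second window and multiplying gives collision probability at most $(1-p(1-1/c))^w$ in each of these two cases.

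The main obstacle, exactly as in the random case, is overlapping windows in one object, say $A[i:i+w-1]$ against $A[i+q:i+q+w-1]$ with offset $q\in\{1,\dots,w-1\}$, where the $w$ constraints $A[i+t]=A[i+q+t]$ share positions and are \emph{not} independent. I would handle this by viewing the involved positions $i,\dots,i+q+w-1$ as vertices and the $w$ constraints as the edges of the shift-by-$q$ graph, which is a disjoint union of paths. Along each path the constraints force all its symbols to be equal, and processing the path one position at a time, each new symbol is independent of the previous ones and must hit their common (already-determined) value; the single-pixel bound then contributes a factor $1-p(1-1/c)$ per edge. Multiplying over the edges of all paths---whose total number is exactly $w$---and using independence across paths (they touch disjoint positions) gives collision probability at most $(1-p(1-1/c))^w$ in this case as well.

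Finally, I would reuse the three pair counts from Lemma~\ref{lem:randomobjws} (at most $\binom{\totalobj}{2}\objlen^2$, at most $\totalobj\,\objlen(\objlen-w)$, and at most $\totalobj\,\objlen\,w$) and union bound, so the total failure probability is at most $3\totalobj^2\objlen^2(1-p(1-1/c))^w$. For the ``in particular'' clause I would set this expression to $o(1)$, take logarithms, and use $\ln\frac{1}{1-x}\ge x$ with $x=p(1-1/c)\ge p/2$ (valid for $c\ge2$); this shows $w=O(\tfrac1p\log(\totalobj\objlen))$ suffices, which is the advertised bound and degrades gracefully as $p\to0$.
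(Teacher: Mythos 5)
Your proposal is correct and follows essentially the same route as the paper: the same three-case split with the same pair counts, reducing everything to the per-pair collision bound $(1-p(1-1/c))^w$, with the overlapping-window case handled by conditioning on earlier positions and applying a single-pixel estimate edge by edge. Your explicit path decomposition of the shift-by-$q$ constraint graph is just a more formal rendering of the paper's ``toss coins left to right, fixing colors to the left'' argument, and the rest matches.
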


\begin{proof}

As before, we will compute the three cases: different objects, same object no overlap, same object overlap. 
Now we focus on case 3 because the first two cases are easier due to independence. We think about proceeding by tossing coins left to right and fixing colors to the left of the current location. In this case, the probability that the color in the $i+w^{\text{th}}$ location is different from color in the (fixed) $i^\text{th}$ location is at least the probability that we change it from the original, $p$, multiplied by the probability of landing on a different color, $1-1/c \ge 1/2\,.$ Thus, the probability that colors in these two locations are the same is at most:
$$
q \coloneqq 1 - \frac{p}{2}\,.
$$

Thus, the probability of two locations have the same color is at most $q \coloneqq 1 - \frac p2.$

Now, we can consider the same three cases as above, and the same logic applies, though now with probability $q^w$ rather than $c^{-w}\,.$ Then, the probability of $w$-pixel strings being the same is at most:
$$
\totalobj^2 \objlen^2 q^w + \totalobj \cdot \objlen^2 q^w + \totalobj \cdot \objlen^2 q^w \le 3 \totalobj^2 \objlen^2 q^w \le 3 \totalobj^2 \objlen^2 (1-p(1-p))^w \le 3 \totalobj^2 \objlen^2 \, e^{-\frac{p}{2}\,w}\,.
$$

Thus, for the probability of two $w$ length strings to be small, $w = O(\log (\totalobj \, \objlen)/p)$

\end{proof}

\subsection{Random Objects Are $\epsilon$-strongly Well-Structured}

\begin{lemma}
A set of $m$ objects, each sampled uniformly at random from $\{0, 1, \hdots\, c-1\}^{s_i},$ and $s \coloneqq \max_i s_i$ is $\epsilon$-strongly, $w$-well structured with probability at least $1-3 m^2 s^2 \, e^{w ((1-\epsilon) - (1-\epsilon) \ln (1-\epsilon) c - 1/c)}$. For example, if $c=2$ and $\epsilon=1/10$ then the set will be $\epsilon$-strongly $w$-well-structured with probability at least $1-3m^2s^2e^{-0.129w}$; for $c=4$ and $\epsilon=1/10$ the probability is at least $1-3m^2s^2e^{-0.503w}$. 
In particular, $w = O(\frac{-1}{(1-\epsilon) - (1-\epsilon) \ln ((1-\epsilon) c) - 1/c} \log \totalobj \objlen)$ is sufficient so that the $m$ objects are $\epsilon$-strongly, $w$-well-structured with probability $1-o(1)\,.$
\end{lemma}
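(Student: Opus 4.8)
The plan is to mirror the proof of Lemma~\ref{lem:randomobjws} almost verbatim, replacing the ``identical windows'' event by the weaker ``agree in at least a $1-\epsilon$ fraction of positions'' event, and replacing the elementary probability $1/c^w$ by a Chernoff tail bound. I would union bound over the same three families of length-$w$ window pairs: (1) windows lying in two different objects, (2) disjoint windows inside one object, and (3) overlapping windows inside one object. For a fixed pair, let $X$ be the number of positions at which the two windows agree; a violation of condition (2) or (3) of Definition~\ref{defn:eps-w-ws} by this pair is exactly the event $X \ge (1-\epsilon)w$.

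For the two independent cases (1) and (2), the $w$ position-comparisons are independent $\mathrm{Bernoulli}(1/c)$ trials, so $X \sim \mathrm{Binomial}(w, 1/c)$. For the overlapping case (3), I would reuse the left-to-right revelation argument from Lemma~\ref{lem:randomobjws}: comparing $A[i:i+w-1]$ with $A[i+q:i+q+w-1]$ at offset $q$ gives $w$ agreement indicators whose right endpoints $i+q,\dots,i+q+w-1$ are all distinct, so revealing the pixels of $A$ in increasing index order resolves each indicator as a fresh $\mathrm{Bernoulli}(1/c)$ conditioned on the past. Iterated conditioning then yields $\E{e^{\lambda X}} = \bigl((1-1/c)+e^{\lambda}/c\bigr)^w$, the same moment generating function as $\mathrm{Binomial}(w,1/c)$, so the identical upper-tail bound applies in all three cases.

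The core estimate is the multiplicative Chernoff bound $\Pr{X \ge (1+\delta)\mu} \le e^{\mu(\delta - (1+\delta)\ln(1+\delta))}$ with $\mu = w/c$. Setting $(1+\delta)\mu = (1-\epsilon)w$, i.e.\ $1+\delta = (1-\epsilon)c$, the exponent simplifies to $w\bigl[(1-\epsilon) - (1-\epsilon)\ln((1-\epsilon)c) - 1/c\bigr]$, which is precisely the quantity $E$ appearing in the statement (and is negative exactly in the relevant regime $1-\epsilon > 1/c$). Counting pairs as in Lemma~\ref{lem:randomobjws} — at most $m^2 s^2$ in case (1) and at most $m s^2$ in each of cases (2) and (3) — and summing gives the claimed bound $3 m^2 s^2 e^{wE}$. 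The numerical specializations ($c=2$ and $c=4$ with $\epsilon=1/10$) are just evaluations of $E$, and the ``in particular'' clause follows by solving $3m^2 s^2 e^{wE} = o(1)$ for $w$, which gives $w = O\!\left(\frac{-1}{E}\log(ms)\right)$.

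The main obstacle is the overlapping-windows case (3): I must ensure that the agreement indicators, which share underlying pixels, still obey the $\mathrm{Binomial}(w,1/c)$ upper-tail bound — this is exactly what the distinct-right-endpoint revelation order secures via the MGF computation above. The only other step requiring care is the algebraic reduction of the Chernoff exponent to the stated closed form $(1-\epsilon) - (1-\epsilon)\ln((1-\epsilon)c) - 1/c$; once that identity is verified, everything else is bookkeeping identical to the $w$-well-structured case.
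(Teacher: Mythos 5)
Your proposal is correct and follows essentially the same route as the paper's proof: the same three-way case analysis, the same reduction to the upper tail of the number of agreeing positions with mean $w/c$, the same multiplicative Chernoff bound with $1+\delta = (1-\epsilon)c$, and the same $3m^2s^2$ union bound. Your explicit MGF/revelation-order justification that the overlapping-window case obeys the Binomial$(w,1/c)$ tail bound is a point the paper treats only implicitly, but it does not change the argument.
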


\begin{proof}
As before, we must consider the same three cases. In each case, now, instead of considering the probability that each pixel of the string matches, we consider the probability that more than a $1-\epsilon$ fraction of the pixels match. For this, let us first let $X_i = \mathbb{I}\{ \text{pixel $i$ of string 1 matches pixel $i$ of string 2} \}$. Further, let $X = \sum_{i} X_i\,.$
Now, to argue that too many pixels don't match, we consider what fraction of pixels we expect to see matches in and then argue using tail bounds that with high probability, we would not see more than $(1-\epsilon)\,w$ of the pixels matching. To start, we formally state the tail bound that we use: 

\begin{lemma}[Large Deviation Chernoff, Theorem 4.4, Part 1 in \cite{mitzenmacher_upfal_2005}] 
Let $X_1, X_2, \hdots, X_n$ be independent Poisson trials such that $\Pr{X_i} = p_i\,.$ Let $X = \sum_{i = 1}^n X_i$ and $\mu = \E{X}\,.$ The following Chernoff bound holds: for any $\delta > 0\,,$

$$
\Pr{ X \ge (1+\delta)\mu} \le \left ( \frac{e^\delta}{(1+\delta)^{(1+\delta)}}\right)^\mu
$$
\end{lemma}

Since the colors are chosen independently at random, in expectation, in a $w$-pixel string we expect $w/c$ pixels to match. That is, $\mathbb{E}\left[X\right] = w/c\,.$ We wish to analyze the probability that more than $(1-\epsilon )\, w$ 
pixels match. From the Large Deviation Chernoff bound, setting $\delta \coloneqq (1- \epsilon) c - 1$, we have that where $\delta > 0\,,$ i.e., $1-\epsilon > 1/c$:
\begin{align}
\Pr{X \ge (1 + \delta) \, \frac wc} &\le \left(\frac{e^{(1-\epsilon) c - 1}}{((1-\epsilon) \, c)^{(1-\epsilon) \, c }}   \right)^{\frac wc} 
= e^{ ((1-\epsilon) w - w/c)} \left(\frac{1}{(1-\epsilon) \, c}   \right)^{(1-\epsilon) \, w} \\
&= e^{(1-\epsilon) w - w/c} e^{-(1-\epsilon) w \ln ((1-\epsilon) c)}
\end{align}

As before, we take the union bound over at most $3m^2 s^2$ strings, so the probability that {\em any} of these strings fails is at most $3 m^2 s^2 \, e^{(1-\epsilon) w - w/c} e^{-(1-\epsilon) w \ln ((1-\epsilon) c)}\,.$
\begin{align}
\frac{d}{dw} \left( (1-\epsilon) w - w/c -(1-\epsilon) w \ln ((1-\epsilon) c)\right) &= (1-\epsilon) - \frac1c - (1-\epsilon) \ln \left((1-\epsilon)c\right) \label{eqn:deriv}
\end{align}

All that remains to be checked, then, is that in the regime of interest, this quantity is negative.

To do this, let us take the derivative with respect to $\epsilon\,,$ to show that for a fixed value of $c\,,$ for any value of $\epsilon\,,$ this quantity is negative
\begin{align}
    -1 - \left( -\frac{1-\epsilon}{(1-\epsilon)c}  + -1\ln((1-\epsilon)c) \right) = -1 +\frac{1}{c} - \ln((1-\epsilon)c) < 0 \,.
\end{align}
Finally, we confirm that the starting point is non-positive. When $1-\epsilon = 1/c\,,$ the derivative in Eqn.~\ref{eqn:deriv} is 0. Thus, the exponent is negative and the probability is decreasing with increasing $w\,.$

Thus, we have that if $w$ if sufficiently large, the failure probability can be very small. In particular, if $w = O(\frac{-1}{(1-\epsilon) - (1-\epsilon) \ln( (1-\epsilon) c) - 1/c} \log \totalobj \objlen)\,,$ then the failure probability is $o(1)\,.$

\end{proof}

\section{Proofs for Learning Algorithms}
\subsection{Proof of Lemma~\ref{lem:op-unif-pr-lbit}}
\label{appendix:keylemmapf}
\prlbit*

\begin{proof}
There are three pieces to consider: (1) whether the object shows up in the image; (2) whether the desired piece of the object shows up in the image given (1); (3) whether the desired section remains unobscured given (1) and (2). Suppose the largest object in the set has size $\objlen\,.$ In the uniform object selection model, the probability the object shows up in the image is $\frac \objinimg \totalobj$. The probability of a fixed length-$L$ piece (of the object or of a padding of the object) appearing on the canvas given the object appears in the image is at least $\frac{d+1-L}{d'}\,.$
The probability of a fixed length-$L$ string being obscured by an object that appears in front of it is at most $(\objlen+L)/d'\,.$ Since objects are placed independently, even if the object of interest is in the back, the probability that none of the $\objinimg-1$ other objects placed on the canvas obscures this is at least $\left( 1 - \frac{\objlen+L-1}{d'} \right)^{\objinimg-1}\,.$ Thus, the probability that the full $L$-pixel piece is visible in the image is:
\begin{align}
& \Pr{\text{no other objects obscure it } \bigg \rvert \text{ desired object and section appear}} \cdot \\
&\qquad \qquad \qquad \Pr{\text{desired section visible} \bigg \rvert \text{ object in image }} \cdot \Pr{\text{this object is in image}}\\
&\ge \left( 1 - \frac{\objlen+L-1}{d'} \right)^{\objinimg-1} \cdot \frac{d+1-L}{d'} \cdot \frac{\objinimg}{\totalobj} \eqqcolon a \,. \label{eqn:a-bound}
\end{align}
    
Padding is not relevant in this section, but it will become important when we no longer have endpoint markers and therefore need to discard parts of strings to be confident they belong to a single object (Section~\ref{subsec:learning-twoobj}).

\end{proof}

\subsection{Several Objects per Image with Endpoint Makers (Theorem~\ref{thm:endpointmarkers-sc})}
\label{appendix:learning-endpoint}
\begin{proof}

At a high level, we show this theorem by first computing the probability of seeing a fixed $L$-pixel string from an object in a fixed image. Then, we compute the probability of seeing it in any image and finally, take a union bound. \par

First, we recall Lemma~\ref{lem:op-unif-pr-lbit}, in which we compute a lower bound on the probability of seeing a fixed $L$-pixel string in an image.

Then, as before, let $a = \left( 1 - \frac{\objlen+L-1}{d'} \right)^{\objinimg-1} \cdot \frac{d+1-L}{d'} \cdot \frac{\objinimg}{\totalobj}$ be the probability bound in Lemma~\ref{lem:op-unif-pr-lbit}.
The probability a given $L$-pixel string of a given object is visible in at least one of $S$ independently generated images is, therefore, at least $1-(1-a)^S\,.$
The total number of pieces per object we need to see is at most $\frac{2\objlen}{L}\,,$ since we need to see pieces with enough overlap to align them (Figure~\ref{fig:npiecesreqd}). Then, the probability that at least one of the required pieces is obscured is given by the union bound:
\begin{align}
    \mathbb{P}\left[ \text{not seeing at least one of the required pieces in any of the images}  \right] &\le ((1-a)^S) \frac{2\,\totalobj\,\objlen}{L} \\
    %\ln \frac{k}{L_\epsilon} \\
    \Leftrightarrow \mathbb{P}\left[ \text{seeing all required pieces}  \right] &\ge 1 - ((1-a)^S) \frac{2\,\totalobj\,\objlen}{L} 
    %\ln \frac{k}{L_\epsilon}
\end{align}
 Next, to ensure this probability is small enough:

\begin{align}
    \delta = ((1-a)^S) \frac{2\,\totalobj\,\objlen}{L} &\le e^{-aS} \frac{2\,\totalobj\,\objlen}{L} \\
    \text{want this to be}\qquad \qquad&\le \frac{1}{10} \\
    \Leftrightarrow e^{-aS} &\le \frac{L}{20\,\totalobj\,\objlen} \\
    \Leftrightarrow \frac{20\totalobj\, \objlen}{L} &\le e^{aS} \\
    S &\ge \frac{\ln(20\,\totalobj\,\objlen/L)}{a}  \\
\end{align}

Thus, we require $S = \Theta\left(\frac{\ln (ms/L)}{a} \right)$ to see all the pieces required with high enough probability.

\end{proof}
\subsubsection{Extension} 
\label{appendix:learning-extension}
This result would also extend to the case where objects are $\epsilon, w$-strongly-well-structured, and at most $\epsilon/2$ fraction of the pixels are corrupted by an adversary before an object is placed. In that case, when seeing an $L$-pixel piece, we would have identify what the uncorrupted version of it was before feeding it into the sequencing algorithm. Since at most $\epsilon/2$ fraction of pixels in an $L$-pixel string are corrupted, this string is closer to the respective $L$ pixel string from its source object than any other object (see Lemma~\ref{lem:alpha-corrupted-hamming-distance} for formal argument regarding this fact).

\subsection{Two Objects per Image Case}
\label{appendix:learn-two-obj}

In this section, we prove Theorem~\ref{thm:learn-two-obj-sc}. To do so, we show that for an appropriate value of $L$, length-$L$ substrings of a single object will appear significantly more frequently in images than length-$L$ strings created by overlaps of multiple objects, and therefore we can use frequency of occurrence to identify strings to be stitched together to reconstruct the objects. We first provide the building blocks of this argument and then show how to combine them to prove the theorem.

We first bound the probability of seeing a fixed length-$L$ segment in a random image.
This follows from Lemma~\ref{lem:op-unif-pr-lbit}.
\begin{claim} \label{cl:p-see-1}
Under the open room, uniform, partially-random model, the probability of seeing a given $L$-pixel segment of a single object in a random image composed of two objects,  is at least $\left( 1 - \frac{\objlen+L-1}{d'} \right) \cdot \frac{d+1-L}{d'} \cdot \frac{2}{\totalobj}\,$ and at most $\frac{d+1-L}{d'} \cdot \frac{2}{\totalobj}\,.$ 
\end{claim}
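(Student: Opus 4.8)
\textbf{Proof proposal for Claim~\ref{cl:p-see-1}.}

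The plan is to directly invoke Lemma~\ref{lem:op-unif-pr-lbit} for the lower bound and then argue a matching upper bound by keeping only the factors that are genuinely necessary. Recall that Lemma~\ref{lem:op-unif-pr-lbit} decomposes the event ``a fixed $L$-pixel segment of a given object is visible'' into three independent pieces: (1) the object is selected into the image, contributing $\frac{\objinimg}{\totalobj}$; (2) conditioned on selection, the object is placed so that the desired $L$-pixel window lands on the canvas, contributing $\frac{d+1-L}{d'}$; and (3) conditioned on both, no object in front obscures the window, contributing at least $\left(1 - \frac{\objlen+L-1}{d'}\right)^{\objinimg-1}$. Here we specialize to $\objinimg = 2$, so the selection factor becomes $\frac{2}{\totalobj}$ and the non-occlusion factor becomes the single term $\left(1 - \frac{\objlen+L-1}{d'}\right)^{2-1} = 1 - \frac{\objlen+L-1}{d'}$. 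Substituting these into the bound of Lemma~\ref{lem:op-unif-pr-lbit} yields precisely the claimed lower bound $\left(1 - \frac{\objlen+L-1}{d'}\right) \cdot \frac{d+1-L}{d'} \cdot \frac{2}{\totalobj}$.

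For the upper bound, I would observe that the event of interest is contained in the event ``the object is selected AND its placement puts the window on the canvas,'' simply because visibility additionally requires non-occlusion, which can only decrease the probability (the non-occlusion factor is a probability, hence at most $1$). Dropping that factor gives the product of the selection probability $\frac{2}{\totalobj}$ and the placement probability $\frac{d+1-L}{d'}$, which is exactly the stated upper bound $\frac{d+1-L}{d'} \cdot \frac{2}{\totalobj}$. This is a one-line monotonicity argument: setting the non-occlusion factor to its maximum value of $1$ is an overcount, so the true probability cannot exceed the resulting product.

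The two remaining things to verify are that the three events really are independent (so the probabilities multiply cleanly) and that the placement factor $\frac{d+1-L}{d'}$ is exactly right rather than merely a lower bound. Independence follows from the generative model: object selection (uniform model), horizontal placement (open room), and the relative depths of the other objects are all drawn independently, so conditioning is legitimate. The placement count is an exact combinatorial fact in the open room model—the left endpoint is uniform over $d'$ positions and exactly $d+1-L$ of them leave the full window inside the canvas—so it appears unchanged in both bounds, with only the occlusion factor varying between them. I expect no real obstacle here: the claim is essentially a bookkeeping specialization of the already-established Lemma~\ref{lem:op-unif-pr-lbit}, and the only subtle point is making sure the upper bound argument correctly identifies which factor to discard (the occlusion term, not the placement term, since occlusion is the only source of the inequality in the original lemma).
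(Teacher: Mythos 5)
Your proposal is correct and matches the paper's own argument: the lower bound is the $\objinimg=2$ specialization of Lemma~\ref{lem:op-unif-pr-lbit}, and the upper bound comes from bounding the non-occlusion factor by $1$ (the paper phrases this as the object ``always being in the front when it appears''). No gaps.
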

\begin{proof}
As in Lemma~\ref{lem:op-unif-pr-lbit}, the probability the desired segment is visible is the probability the object is in the image, times the probability the desired segment appears in the window given that the object is in the image, times the probability no other objects obscure the segment given that the desired object and segment appear.  This is:
\begin{align*}
\begin{cases} \le 1 \cdot \frac{d+1-L}{d'} \cdot \frac{2}{\totalobj} & \text{ maximized if object is always in the front when it appears},\\
\ge \left( 1 - \frac{\objlen+L-1}{d'} \right) \cdot \frac{d+1-L}{d'} \cdot \frac{2}{\totalobj} & \,\text{minimized if object is always in the back when it appears} \,. \\
\end{cases} \\
\end{align*}

\end{proof}

Recall that a problematic overlap is a window of size $L$ that contains two objects that overlap inside the middle $L/2$ pixels, and no background. We call an $L$-pixel string a {\em problematic overlap string} if the only ways to generate it are via problematic overlaps.
We show that problematic overlap strings that look identical must be (almost) identical. In particular, the only thing that could differ between them is which object is on top.
Consider the following tuple-based description of an $L$-pixel string: $(\textsc{obj}_L, \textsc{start}_L, \textsc{obj}_R, \textsc{end}_R, \textsc{top-bit})\,$ (see Figure~\ref{fig:proboverlap}).

\begin{figure}
    \centering
    \includegraphics[width=0.9\textwidth]{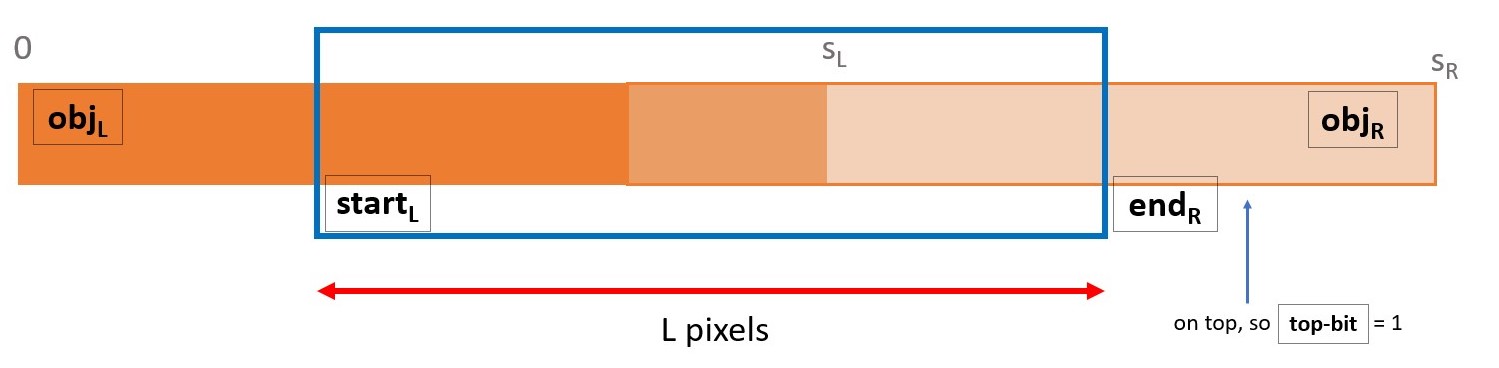}
    \caption{Consider the $L$-pixel problematic overlap string boxed in blue. There are 5 different features that uniquely define it. The first is identity of the left object (darker orange), followed by the index within it where the string of interest (boxed in blue) starts. Third, we care about the identity of the right object (lighter orange) and the index within it at which the string of interest ends. Note that these uniquely identify how much overlap there must be, since the length $L$ is fixed. Finally, we include a bit to make clear which object is on top. \vspace{4mm}
    % \hl{todo: maybe change label of box to $L$ rather than $L=4w$, put 0, endR, sR above the object, change colors or bolding of things in tuple vs not}
    }
    \label{fig:proboverlap}
\end{figure}

\begin{lemma} \label{lem:deter-2-ways}
For a set of objects that are  $w$-well-structured, an $L$-pixel {\em problematic overlap string} can be generated via at most two different five-tuples as described above, provided $L \ge 4\,w$. In particular, any two five-tuples that generate the same problematic overlap string must agree in $\textsc{obj}_L, \textsc{start}_L, \textsc{obj}_R, \textsc{end}_R\,.$
\end{lemma}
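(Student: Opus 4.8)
The plan is to show that a problematic overlap string essentially encodes its own five-tuple description, with the only ambiguity being the \textsc{top-bit}. The key leverage is $w$-well-structuredness (Definition~\ref{defn:w-s-general}): no two objects share a length-$w$ substring, and no object contains two copies of any length-$w$ substring. Since $L \ge 4w$, a problematic overlap string is long enough that each of the two constituent objects contributes a window of length at least $w$ that is \emph{not} overlapped by the other object, and this clean window will uniquely pin down the identity and offset of that object.

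First I would fix a problematic overlap string $\sigma$ and consider any five-tuple $(\textsc{obj}_L, \textsc{start}_L, \textsc{obj}_R, \textsc{end}_R, \textsc{top-bit})$ generating it. By Definition~\ref{defn:proboverlap-2obj}, the overlap lies inside the middle $L/2$ pixels, so the leftmost $L/4$ pixels of $\sigma$ come purely from the left object and the rightmost $L/4$ pixels come purely from the right object. Because $L \ge 4w$, we have $L/4 \ge w$, so $\sigma[1:w]$ is a genuine length-$w$ substring of $\textsc{obj}_L$ and $\sigma[L-w+1:L]$ is a genuine length-$w$ substring of $\textsc{obj}_R$. Now I would invoke well-structuredness twice: condition (2) forces $\textsc{obj}_L$ to be the unique object containing the string $\sigma[1:w]$, and condition (3) forces the occurrence of $\sigma[1:w]$ within that object to be unique, which determines $\textsc{start}_L$. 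The identical argument applied to the clean right window $\sigma[L-w+1:L]$ determines $\textsc{obj}_R$ and $\textsc{end}_R$.

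Having fixed $\textsc{obj}_L, \textsc{start}_L, \textsc{obj}_R, \textsc{end}_R$, I would observe that the amount of overlap is then forced: since the total window length $L$ is fixed and the left object occupies pixels determined by $\textsc{start}_L$ while the right object occupies pixels determined by $\textsc{end}_R$, the horizontal geometry of the two objects within the window is completely pinned down. The only remaining freedom is the \textsc{top-bit}, i.e.\ which of the two objects is in front in the overlap region. Thus any two five-tuples generating $\sigma$ must agree on the first four coordinates and can differ only in the \textsc{top-bit}, giving at most two tuples, as claimed.

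The main obstacle — and the place to be careful — is justifying that the two clean windows are truly uncontaminated, which is exactly why the hypothesis places the overlap strictly inside the middle $L/2$ pixels and why we need $L \ge 4w$ rather than merely $L \ge 2w$: we must guarantee a full length-$w$ clean window on \emph{each} side even in the worst case where the overlap is pushed as far as possible toward one end of the middle region. I would double-check that the middle-$L/2$ constraint leaves at least $L/4 \ge w$ clean pixels on each flank regardless of where within that middle band the overlap sits, so that conditions (2) and (3) of well-structuredness can both be applied without the matched window spanning any occluded (and hence possibly object-agnostic) pixel.
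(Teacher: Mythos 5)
Your proposal is correct and follows essentially the same argument as the paper's proof: both exploit the fact that the overlap being confined to the middle $L/2$ pixels leaves a clean flank of length $L/4 \ge w$ on each side, then apply conditions (2) and (3) of $w$-well-structuredness to pin down the object identities and offsets, leaving only the \textsc{top-bit} free. The only difference is presentational—you argue uniqueness directly while the paper argues by contradiction—which does not change the substance.
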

\begin{proof}
Suppose, for contradiction, that they differ in the $\textsc{obj}_L$. That is, there are two different objects, $\textsc{obj}_{L1}$ and $\textsc{obj}_{L2}$ that can produce that problematic overlap. Then, the leftmost $L/4 = w$ pixels must match between them, which is a violation of the well-structredness property. Likewise, the $\textsc{obj}_R$ cannot differ. Finally, if the objects remain the same but one or both of $\textsc{start}_L, \textsc{end}_R$ differ, then an object must have $L/4 = w$ pixels that are the same under shift. This, too, is a violation of well-structuredness. Thus, the only possible difference is if the object on top changes, in which case some small number of pixels would be shared between the end of the left object and the start of the right object. With this, we have shown the Lemma.

\end{proof}

Now that we know that there are certain fixed events that must occur for a particular problematic overlap string to show up, we bound the probability of seeing such a string based on the probability of those events.

\begin{claim} \reviewed \label{cl:p-prob-overlap-str-2obj}
Under the open room, uniform, partially-random model with $w$-well-structured objects, where $w \le L/4$, the probability, $p_\text{bad}\,,$ that a particular problematic overlap string appears is at most:
$$
\frac{2}{\binom{\totalobj}{2}}  \cdot \frac{d-L}{d'^2}
$$
and at least:
$$
\frac{1}{\binom{\totalobj}{2}}  \cdot \frac{d-L}{d'^2}\,.
$$
% \hl{changed this from $1/\binom{\totalobj}{2}$ to $2/\totalobj^2$}
\end{claim}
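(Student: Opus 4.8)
The plan is to compute $p_\text{bad}$ by identifying the independent events that must co-occur for a fixed problematic overlap string to appear, exactly as was done in Lemma~\ref{lem:op-unif-pr-lbit} but now accounting for the fact that \emph{two} specified objects must land in the image at specified relative offsets. By Lemma~\ref{lem:deter-2-ways} (applicable since $w \le L/4$), a given problematic overlap string is generated by at most two five-tuples $(\textsc{obj}_L, \textsc{start}_L, \textsc{obj}_R, \textsc{end}_R, \textsc{top-bit})$, and these two tuples agree on the four non-top coordinates. So the combinatorial ``target'' is essentially fixed: a specific unordered pair of objects $\{\textsc{obj}_L, \textsc{obj}_R\}$, placed at a specific horizontal displacement relative to each other, with the top-bit free.

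First I would handle the object-selection factor. In the uniform model the chance that this particular pair of objects is the one chosen for the image is $1/\binom{\totalobj}{2}$. Next I would handle the horizontal placement. Conditioned on both objects being present, I want the left object to start at a location that puts $\textsc{start}_L$ in the right spot and the right object to start so that the overlap falls inside the middle $L/2$ and $\textsc{end}_R$ matches. Each of the two objects has $d' = d + \objlen - 2$ possible left endpoints in the open-room model, so once the window location is fixed there is a $1/d'$ chance for each object to hit its required starting offset. The remaining freedom is the position of the length-$L$ window itself along the canvas: the window must fit so that the configuration is visible, which contributes a factor on the order of $(d-L)$ valid window placements. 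Multiplying the selection factor $1/\binom{\totalobj}{2}$ by the window-placement count $(d-L)$ and by the two per-object offset probabilities $1/d'^2$ gives the core quantity $\frac{1}{\binom{\totalobj}{2}} \cdot \frac{d-L}{d'^2}$, which is the stated lower bound.

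For the upper bound I would use the factor-of-two slack coming from the top-bit: since either object may be on top and both choices can, in the worst case, realize the same string (this is precisely the residual ambiguity left open by Lemma~\ref{lem:deter-2-ways}), the two admissible five-tuples contribute at most twice the probability of a single fixed tuple, yielding $\frac{2}{\binom{\totalobj}{2}} \cdot \frac{d-L}{d'^2}$. The lower bound holds because at least one of the two tuples is realizable and each event (correct pair selected, correct offsets, window in a valid position) is independent by the independence of object choice and placement, so the single-tuple probability is at least the claimed amount; the obscuring/occlusion considerations only affect whether the string is \emph{visible}, and since a problematic overlap string by definition arises from exactly these two objects with no background showing, no additional occlusion-survival factor is needed beyond requiring these two objects to be the topmost contributors in the window.

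The main obstacle I expect is bookkeeping the horizontal-placement count carefully: I must make sure the ``$d-L$'' factor correctly counts the number of distinct canvas positions at which the fixed relative configuration of the two objects can sit while still producing the given $L$-pixel string, without double-counting across the two tuples and without conflating the per-object offset freedom (the $1/d'^2$) with the window-location freedom (the $d-L$). The cleanest way to keep these separate is to first fix the absolute position of the window, note that this pins down both required left endpoints, and then observe that sliding the whole configuration gives the $(d-L)$-fold multiplicity; getting the endpoints of this range exactly right (and confirming the middle-$L/2$ overlap constraint does not shrink it below the stated bound) is the delicate step, but it only moves constants, so both the stated upper and lower bounds follow with the factor-of-two gap absorbing the top-bit ambiguity.
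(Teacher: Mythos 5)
Your proposal is correct and follows essentially the same route as the paper: invoke Lemma~\ref{lem:deter-2-ways} to pin down the pair of objects and their relative offset, multiply the selection probability $1/\binom{\totalobj}{2}$ by the placement probability $(d-L)/d'^2$, and absorb the top-bit ambiguity into the factor of $1$ or $2$. Your extra bookkeeping (separating the window-position count from the two per-object offset probabilities, and noting that no occlusion factor is needed with only two objects per image) just makes explicit what the paper states in one line.
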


\begin{proof}
Let $s$ be a particular problematic overlap string. From Lemma~\ref{lem:deter-2-ways}, there must be two objects $A$ and $B$ that comprise it, which must be placed at a specific location relative to each other. The lemma also tells us that either only one ordering of them works to generate it, or both do. Based on this:
\begin{align}
    p_\text{bad} &\coloneqq \Pr{ A, B \text{ in object}} \cdot \Pr{A,B \text{ placed correctly } \bigg \rvert A, B \text{ in image}} \cdot (\text{number of orderings that work})\\
    &= \frac{1}{\binom{\totalobj}{2}}  \cdot \frac{d-L}{d'^2} \cdot \begin{cases} 1 & \text{ if only 1 ordering fine} \\
    2& \text{ if either ordering fine }\end{cases} 
\end{align}
% \hl{fix up the above}
\end{proof}
We can show that the probability of seeing good strings and that of seeing bad strings is adequately separated:

\begin{claim} 
\label{cl:pmid-sep-2obj}
If $\objlen + 2L < 3d/2$ 
% $d,\objlen > 2L$ 
and $\totalobj\,d' \ge 128$, then $p_\text{good}$ is separated from $p_\text{bad}\,,$ with $p_\text{mid} \coloneqq \frac{1}{16\,\totalobj}$ such that $p_\text{good} > 2 \, p_\text{mid}$ and $ p_\text{mid} > 2 \cdot p_\text{bad}\,.$
\end{claim}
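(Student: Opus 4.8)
The plan is to establish the two inequalities $p_\text{good} > 2 p_\text{mid}$ and $p_\text{mid} > 2 p_\text{bad}$ separately by plugging in the bounds already derived and using the two stated hypotheses. First I would pin down exactly what $p_\text{good}$ and $p_\text{bad}$ refer to. From Claim~\ref{cl:p-see-1}, a good string (a length-$L$ segment of a single object) is seen with probability $p_\text{good} \ge \left(1 - \frac{\objlen+L-1}{d'}\right) \cdot \frac{d+1-L}{d'} \cdot \frac{2}{\totalobj}$, while from Claim~\ref{cl:p-prob-overlap-str-2obj} a problematic overlap string appears with probability $p_\text{bad} \le \frac{2}{\binom{\totalobj}{2}} \cdot \frac{d-L}{d'^2}$. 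The quantity $p_\text{mid} = \frac{1}{16\totalobj}$ is the claimed separator, so the entire proof is two sandwich estimates.

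For the lower bound $p_\text{good} > 2 p_\text{mid} = \frac{1}{8\totalobj}$, I would show that the product of the two room-placement factors exceeds $\frac{1}{16}$, since the $\frac{2}{\totalobj}$ factor then gives $p_\text{good} > \frac{1}{8\totalobj}$. The factor $\frac{d+1-L}{d'}$ is essentially a constant close to $1$ for $L \ll d$, and the hypothesis $\objlen + 2L < 3d/2$ is precisely what forces $1 - \frac{\objlen+L-1}{d'}$ to be bounded below by a constant like $\frac{1}{4}$ (so that the product of the two factors clears $\frac{1}{16}$). I would substitute $d' = d + \objlen - 2$ and use $\objlen < d/2$ (the standing learning assumption) together with $\objlen + 2L < 3d/2$ to verify $\frac{\objlen + L - 1}{d'}$ is at most a constant strictly below $1$, then check the arithmetic gives the required $\frac{1}{16}$.

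For the upper bound $p_\text{mid} > 2 p_\text{bad}$, i.e. $\frac{1}{16\totalobj} > \frac{4}{\binom{\totalobj}{2}} \cdot \frac{d-L}{d'^2}$, I would rewrite $\binom{\totalobj}{2} = \frac{\totalobj(\totalobj-1)}{2}$ so that $p_\text{bad} \le \frac{4}{\totalobj(\totalobj-1)} \cdot \frac{d-L}{d'^2}$. The key observation is that $\frac{d-L}{d'^2} \le \frac{1}{d'}$ since $d - L \le d \le d'$, so $p_\text{bad} \le \frac{4}{\totalobj(\totalobj-1) d'}$. The desired inequality then reduces to showing $\frac{1}{16\totalobj} > \frac{8}{\totalobj(\totalobj-1)d'}$, equivalently $(\totalobj-1) d' > 128$, which is implied (up to the $\totalobj$ versus $\totalobj - 1$ gap) by the hypothesis $\totalobj d' \ge 128$. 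I would handle the small-$\totalobj$ edge case explicitly or absorb it into the constant, noting that the two-object setting requires $\totalobj \ge 2$ anyway.

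The main obstacle I anticipate is not any single inequality but keeping the constants consistent: the separator $\frac{1}{16\totalobj}$ and the threshold $128$ are hand-tuned so that both sandwich estimates hold simultaneously, and the two hypotheses ($\objlen + 2L < 3d/2$ controlling $p_\text{good}$ from below, and $\totalobj d' \ge 128$ controlling $p_\text{bad}$ from above) must each be shown to be exactly strong enough. The delicate point is the lower bound on $1 - \frac{\objlen+L-1}{d'}$, since this is where the geometry of the hypothesis $\objlen + 2L < 3d/2$ enters; I would verify carefully that this hypothesis, combined with $d' = d + \objlen - 2$, yields a constant factor of at least $\frac{1}{4}$ rather than something that degrades with $d$, as the whole separation depends on that factor being bounded away from zero.
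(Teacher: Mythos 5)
Your proposal is correct and follows essentially the same route as the paper: both directions are direct sandwich estimates obtained by substituting the bounds from Claims~\ref{cl:p-see-1} and~\ref{cl:p-prob-overlap-str-2obj}, using $\objlen+2L<3d/2$ to bound the placement factors below by a constant for the $p_\text{good}$ side and $\frac{d-L}{d'^2}\le\frac{1}{d'}$ together with $\totalobj\,d'\ge 128$ for the $p_\text{bad}$ side. Your observation about the $\totalobj$ versus $\totalobj-1$ slack in the $\binom{\totalobj}{2}$ term is a fair point of care (the paper itself silently replaces $\totalobj(\totalobj-1)$ by $\totalobj^2$), but it only affects the constant and not the argument.
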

\begin{proof}
We show first that  $p_\text{good} \ge 2 p_\text{mid}\,.$ From Claim~\ref{cl:p-see-1}\,:
% \hl{Line 15 isn't immediately true}
\begin{align}
    \frac{p_\text{good}}{2} &\ge \left( 1 - \frac{\objlen+L-1}{d'} \right) \cdot \frac{d+1-L}{d'} \cdot \frac{2}{\totalobj} \cdot \frac12 = \frac{(d'-(\objlen + L - 1)) \cdot (d+1-L)}{d'^2} \cdot \frac{1}{\totalobj} \\
    &= \frac{(d + \objlen -L - 1)(d+1-L)}{(d+\objlen -2)^2} \cdot \frac{1}{\totalobj} \ge \frac{(d+1-L)^2 }{(d+\objlen-2)^2}\cdot \frac{1}{\totalobj} \\
    &= \left( 1 - \frac{\objlen-3+L}{d + \objlen -2}\right)^2 \cdot \frac 1m  \\ 
    &\ge p_\text{mid}  \qquad \text{ if } \qquad \frac{\objlen + L}{d + \objlen - 2} < \frac 34 \label{eqn:L-d-k-relationship}
\end{align}

Next, we must show that $p_\text{bad} \le \frac{p_\text{mid}}{2}\,,$ for which it suffices to show that the upper bound on $p_\text{bad}$ from Claim~\ref{cl:p-prob-overlap-str-2obj} is bounded by this quantity.
\begin{align}
    2 \cdot p_\text{bad} \le \frac{8}{\totalobj^2}  \cdot \frac{d-L}{d'^2} &= \frac{8}{\totalobj^2} \cdot \frac{d-L}{d'^2} \\ &= \frac{1}{16\,\totalobj} \frac{128(d-L)}{\totalobj\,d'^2} 
    \le p_\text{mid} \, \, \text{ if } \, \, \frac{128(d-L)}{\totalobj\,d'^2} \le 1 \quad \Leftarrow \totalobj\,d' \ge 128\,. \label{eqn:m-dprime-relationship}
\end{align}
Thus we complete the proof of this claim.
\end{proof}

Finally, we can use Chernoff's inequality to bound the probability of success, once we know the number of good strings, number of bad strings, and a probability $p_\text{mid}$ such that the probability of seeing a fixed good string is at least twice as large and the probability of seeing a bad string is at most half as much.

\begin{lemma}[Use Chernoff to Get Sample Complexity] \label{lem:chernoff-sample-complexity} \reviewed
Suppose there exists $p_\text{mid}$ such that 
 $p_\text{bad} \le p_\text{mid}/2\,,$ and  $p_\text{good} \ge 2 p_\text{mid}\,.$ If there are $n_\text{po}$ problematic overlap strings and $n_\text{single}$ strings of length $L$ comprised of a single object that must be seen to reconstruct all objects, then $S = O \left(\max\left\{  \frac{\ln(n_\text{single})}{p_\text{mid}} ,  \frac{\ln(n_\text{po})}{p_\text{mid}} \right\}\right)$ image samples suffice to see all the necessary strings for reconstructing the objects and discard problematic overlap strings with probability 9/10.
\end{lemma}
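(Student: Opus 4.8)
The plan is to combine two Chernoff bounds, one to guarantee that every good length-$L$ single-object string is observed at a high enough frequency and one to guarantee that every problematic overlap string is observed at a low enough frequency, and then union-bound over all such strings. Throughout, I fix the threshold in the algorithm at $\tau = p_\text{mid}$ so that the two requirements become: every good string appears more than $\tau S$ times (so it is \emph{kept}), and every bad string appears at most $\tau S$ times (so it is \emph{discarded}).

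First I would set up the counting random variables. For a fixed good string $g$, let $X_g$ be the number of the $S$ independent images in which $g$ appears; then $\E{X_g} = p_g \, S \ge 2\,p_\text{mid}\,S =: 2\mu$, where I write $\mu := p_\text{mid}\,S$. For a fixed bad (problematic overlap) string $b$, let $Y_b$ count the images in which $b$ appears, so $\E{Y_b} = p_b \, S \le \tfrac12 p_\text{mid}\,S = \tfrac12 \mu$. Since images are generated independently, each $X_g$ and each $Y_b$ is a sum of independent indicator variables, so the multiplicative Chernoff bounds apply directly. For the good strings I would use the lower-tail bound to control $\Pr{X_g \le \tau S = \mu}$: since the true mean is at least $2\mu$, this is a deviation of at least a factor $1/2$ below the mean, giving a bound of the form $e^{-\Omega(\mu)}$. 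For the bad strings I would use the upper-tail bound to control $\Pr{Y_b \ge \tau S = \mu}$: since the true mean is at most $\mu/2$, this is a deviation of at least a factor $2$ above the mean, again giving $e^{-\Omega(\mu)}$.

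Next I would take the union bound. The probability that \emph{some} good string fails to clear the threshold is at most $n_\text{single}\, e^{-c\mu}$, and the probability that \emph{some} bad string exceeds the threshold is at most $n_\text{po}\, e^{-c'\mu}$ for absolute constants $c,c'$ coming from the two Chernoff exponents. To make the total failure probability at most $1/10$, it suffices to make each of these at most $1/20$, which requires $\mu = p_\text{mid}\,S = \Omega(\ln n_\text{single})$ and $\mu = \Omega(\ln n_\text{po})$ respectively. Solving for $S$ gives
\begin{equation*}
S = O\!\left( \max\!\left\{ \frac{\ln n_\text{single}}{p_\text{mid}},\ \frac{\ln n_\text{po}}{p_\text{mid}} \right\} \right),
\end{equation*}
which is exactly the claimed bound. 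On this good event, the set \texttt{object\_pieces} consists of precisely the middle $L/2$ portions of all good single-object strings (every one kept, every bad one discarded), so by the coverage argument of Section~\ref{subsec:learning-twoobj} these pieces cover every object with the required overlap, and \textsc{sequence} reconstructs the objects.

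The main obstacle is purely bookkeeping rather than conceptual: I must be careful that the Chernoff exponents $c, c'$ are genuine absolute constants independent of the problem parameters. This is where the factor-of-two separation baked into $p_\text{mid}$ (established in Claim~\ref{cl:pmid-sep-2obj}) does the real work — it ensures the relative deviations are bounded away from zero by a universal constant, so the exponents do not degrade as $\totalobj, \objlen, d$ grow. A secondary subtlety is that the indicators making up $X_g$ (and $Y_b$) are independent \emph{across images} but the events ``string $g$ appears'' within a single image need not be independent of other strings' events; this is harmless here because I only ever analyze one string at a time before union-bounding, so only cross-image independence is needed. I would also note that the hypotheses $p_\text{bad}\le p_\text{mid}/2$ and $p_\text{good}\ge 2p_\text{mid}$ are taken as given (supplied by Claim~\ref{cl:pmid-sep-2obj}), so no re-derivation of the separation is needed inside this lemma.
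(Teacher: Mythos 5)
Your proposal is correct and follows essentially the same argument as the paper: a multiplicative Chernoff bound for each fixed string (lower tail for good strings, upper tail for problematic overlap strings, independence only across images), a union bound over the $n_\text{single}$ and $n_\text{po}$ strings, and solving for $S$. The only cosmetic difference is that you anchor both tail events at the single threshold $\tau S = p_\text{mid}S$, whereas the paper states the deviations relative to each string's own mean ($p_\text{good}S/2$ and $2p_\text{bad}S$) and then uses the separation $p_\text{good}/2 \ge p_\text{mid} \ge 2p_\text{bad}$; these are equivalent.
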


\begin{proof}
We compute the probabilities of seeing too few copies of a good string or too many copies of a bad string and control that. Recall that the probability of seeing a fixed good string in a given image is $p_\text{good}$ and an upper bound on the probability of seeing a fixed bad string in a given image is $p_\text{bad}\,.$ Fix a good string $\sigma$ of length $L$. If we draw $S$ samples, in expectation, we should see $p_\text{good} \cdot S$ copies of $\sigma$. For sample $j\,,$ let $X_j = \mathbb{I}(\sigma \text{ in sample }j)\,.$ Since the images are independent, we may use Chernoff bound: 
\begin{align}
    \mathbb{P}\left[  \sum_{j = 1}^S X_j \le \left( 1 - \frac{1}{2} \right) p_\text{good} \cdot S \right] 
    &\le \exp\left( \frac{-p_\text{good} \cdot S}{8} \right)\\
    &\le \exp\left( \frac{-p_\text{mid} \cdot S}{4} \right)
\end{align}
Similarly, for a bad string $\sigma'$, we expect to see $p_\text{bad} \cdot S$ copies of it. For sample $j\,,$ let $Y_j = \mathbb{I}(\sigma' \text{ in sample }j)\,.$ Since the images are independent, we may use Chernoff bound:
\begin{align}
    \mathbb{P}\left[  \sum_{j = 1}^S Y_j \ge \left( 1 + 1 \right) p_\text{bad} \cdot S \right] &\le \mathbb{P}\left[  \sum_{j = 1}^S Y_j \ge \left( 1 + 1 \right) \frac{p_\text{mid}}{2} \cdot S \right] \\
    &< \left( \frac{e}{4} \right) ^\frac{p_\text{mid} \cdot S}{2}\,.
\end{align}

Now that we know the probabilities of seeing a fixed string sufficiently many times, we wish to now union bound over the respective $n_\text{single}$ good and $n_\text{po}$ bad strings. 
The probability that any of the $n_\text{single}$ good strings does not appear at least $p_\text{good}S/2$ times is at most $ n_\text{single} \cdot \exp\left( \frac{-p_\text{mid} \cdot S}{4} \right)\,.$ \par

Similarly, 
the probability that any of the $n_\text{po}$ bad strings appears more than $2 p_\text{bad} S$ times is at most: $n_\text{po} \left( \frac{e}{4} \right) ^\frac{p_\text{mid} \cdot S}{2}\,.$

In order to keep both of these probabilities small, we would require:
\begin{align}
    n_\text{single} \cdot \exp\left( \frac{-p_\text{mid} \cdot S}{4} \right) \le \frac{1}{10} \quad &\Leftrightarrow \quad \frac{-p_\text{mid} \cdot S}{4} \le \ln \left( \frac{1}{10 \, n_\text{single}}  \right) \\
    &\Leftrightarrow \quad \frac{4 }{ p_\text{mid}} \ln \left( 10 \, n_\text{single} \right)  \le S \\
   n_\text{po} \left( \frac{e}{4} \right) ^{p_\text{mid} \cdot S/2} \le \frac{1}{10} \quad &\Leftrightarrow \quad \frac{p_\text{mid}}{2} \cdot S \cdot \ln(e/4) \le \ln \left( \frac{1}{10 \, n_\text{po}} \right) \\
    &\Leftrightarrow \quad \frac{2 \ln (10 n_\text{po})}{p_\text{mid} \ln (4/e)} \le S\,.
\end{align}
Thus, for some constant $c$, if $S \ge c \cdot  \frac{\ln(n_\text{single})}{p_\text{mid}} \,,$ then we recover all necessary pieces of objects correctly with high probability. Similarly, for some constant $c'$, if $S \ge c' \cdot \frac{\ln(n_\text{po})}{p_\text{mid}}$, then we don't see problematic overlap strings too frequently. From here, we can run the sequencing algorithm given in Algorithm~\ref{alg:sequence} to recover the objects themselves accurately with high probability. Thus, $S = O \left(\max\left\{  \frac{\ln(n_\text{single})}{p_\text{mid}} ,  \frac{\ln(n_\text{po})}{p_\text{mid}} \right\}\right)\,$ is sufficient for recovering the objects from sampled images.

\end{proof}

With this, we are equipped to prove the theorem statement.

\learntwoobj*

\begin{proof}

The theorem follows from a direct application of Lemma~\ref{lem:chernoff-sample-complexity} and argument that the ends are indeed recovered correctly. At a high level, we determine how many samples are required to see good strings frequently and see bad strings infrequently. Then, we argue that within such a set of samples, we have all we need in order for the end recovery subroutine to succeed. To apply Lemma~\ref{lem:chernoff-sample-complexity}, we need (1) $p_\text{mid}\,,$ (2) the number of strings arising from a single object we must see, and (3) the number of problematic overlap strings which we need to not see too frequently. We have $p_\text{mid} = 1/(16m)$ from Claim~\ref{cl:pmid-sep-2obj}.

Next, we compute the number of good and bad strings:

\begin{claim} \label{cl:ngood-str-2obj} \reviewed 
Over $\totalobj$ total objects, out of the $m \cdot (\objlen-L + 1)$ good strings, there is a subset of size $4\, \totalobj \,\objlen/L$ that is sufficient for reconstructing all of the objects.
\end{claim}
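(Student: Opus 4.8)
The plan is to exhibit one explicit family of good strings per object that forms a valid cover for the sequencing step, and then to count it. Recall that Algorithm~\ref{alg:learn-noep-2obj} feeds into the sequencing routine (Algorithm~\ref{alg:sequence}) not the length-$L$ good strings themselves but their middle $L/2$ portions, and that sequencing provably reconstructs an object once the pieces handed to it (a) are each substrings of that single object, (b) overlap consecutively in at least $w$ pixels, and (c) together cover the object. So it suffices to select, for each object, a family of good strings whose middle portions satisfy (a)--(c), and then to bound the total number selected. The universe of good strings has size at most $\totalobj(\objlen - L + 1)$, since each object of length at most $\objlen$ has at most $\objlen - L + 1$ length-$L$ substrings, and the subset I construct will be drawn from it.

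For a single object of length at most $\objlen$, I would take the good strings whose left endpoints sit at positions $0, L/4, 2\cdot(L/4), \dots$ within the object, i.e.\ spaced $L/4$ apart (with $L = 4w$ these are the integer positions $0, w, 2w, \dots$), up through the last valid start $\objlen - L$. The good string starting at position $j\cdot(L/4)$ has middle $L/2$ portion occupying $[(j+1)(L/4),\,(j+3)(L/4)-1]$; since $L \ge 4w$, this is a piece of length $L/2 \ge 2w$, long enough to be pinned to a single object by $w$-well-structuredness, giving (a). Consecutive selected middle portions then overlap in exactly $L/4 \ge w$ pixels, giving (b), and their union covers the interior $[L/4,\ \objlen - L/4 - 1]$ of the object, giving (c). The only uncovered pixels are the first and last $L/4$ of the object, which are exactly the ``end'' segments recovered separately by lines 4--10 of Algorithm~\ref{alg:learn-noep-2obj} (cf.\ the discarded blue pieces in Figure~\ref{fig:obj-alg-pieces}), so they lie outside the scope of this claim.

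Finally I would count: the set $\{0, L/4, 2\cdot(L/4), \dots\} \cap [0,\ \objlen - L]$ has at most $4(\objlen-L)/L + 1 \le 4\objlen/L$ elements, so at most $4\objlen/L$ good strings per object and hence at most $4\,\totalobj\,\objlen/L$ good strings in total suffice. I expect no real obstacle: the one point that needs checking is that an overlap of exactly $L/4$ clears the $\ge w$ threshold that sequencing requires for unambiguous alignment, and this is immediate from $L \ge 4w$. The remaining steps---verifying that each chosen middle portion comes from a genuine length-$L$ substring and that the union leaves only the two length-$L/4$ end-segments uncovered---are routine bookkeeping.
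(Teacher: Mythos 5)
Your proposal is correct and follows essentially the same route as the paper's proof: select good strings shifted by $L/4 = w$ within each object so that the middle $L/2$ portions overlap in $w$ pixels and cover everything but the two length-$L/4$ ends (which the end-recovery subroutine handles), yielding $4\objlen/L$ strings per object and $4\totalobj\objlen/L$ in total. Your write-up just makes the indexing and the coverage check more explicit than the paper does.
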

\begin{proof}
From a single object, there are: $\frac{\objlen}{w}$ distinct contiguous $L$-pixel portions that are sufficient for reconstructing the object. This is because we take $L = 4w$-pixel strings and only use the middle $L/2 = 2w$ pixels of them. As a result, we require overlap of $L/4 = w$ pixels in the used part, meaning that the shift between two pieces used is $L/4 = w$ (recall Figure \ref{fig:obj-alg-pieces}). Then, there are $m$ objects for which we need all $\objlen/w = 4\objlen/L$ strings.

\end{proof}

\begin{claim} \label{cl:n-tup-2obj} \reviewed 
The total number of tuples generating problematic overlap strings is at most $\binom{\totalobj}{2} \cdot \objlen^2 \cdot 2 \le 2\totalobj^2 \objlen^2\,.$
\end{claim}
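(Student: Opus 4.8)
The plan is to count problematic overlap strings by counting the five-tuples $(\textsc{obj}_L, \textsc{start}_L, \textsc{obj}_R, \textsc{end}_R, \textsc{top-bit})$ that can give rise to them, exactly as set up before Lemma~\ref{lem:deter-2-ways} and illustrated in Figure~\ref{fig:proboverlap}. Since, by Definition~\ref{defn:proboverlap-2obj}, every problematic overlap string comes from a window containing two distinct objects and no background whose overlap falls in the middle $L/2$ pixels, each such string is described by at least one such tuple. Hence the number of problematic overlap strings is at most the number of valid tuples, and it suffices to bound the latter by a simple product over the ranges of the five coordinates.

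First I would bound each coordinate. The two object identities $\textsc{obj}_L$ and $\textsc{obj}_R$ are drawn from the $\totalobj$ objects and are distinct (the two objects in a two-object image are distinct), so there are at most $\binom{\totalobj}{2}$ choices for the pair involved; the offsets $\textsc{start}_L$ and $\textsc{end}_R$ index into objects of length at most $\objlen$, contributing at most $\objlen$ choices each; and $\textsc{top-bit}$ contributes a factor of $2$. Multiplying these ranges gives at most $\binom{\totalobj}{2}\cdot \objlen^2 \cdot 2 \le 2\,\totalobj^2\,\objlen^2$ tuples, which is exactly the claimed bound. Even the cruder count that treats $\textsc{obj}_L$ and $\textsc{obj}_R$ as independent choices from all $\totalobj$ objects yields $\totalobj^2\cdot\objlen^2\cdot 2 = 2\,\totalobj^2\,\objlen^2$, so the final bound is robust to how one accounts for distinctness and ordering.

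The only non-mechanical step, and the one I would be most careful about, is justifying exhaustiveness: that every problematic overlap string really is captured by some tuple of this form, so that counting tuples upper-bounds the number of strings. This is where Definition~\ref{defn:proboverlap-2obj} and Lemma~\ref{lem:deter-2-ways} do the work. The definition guarantees a well-defined left object covering the window's left endpoint and right object covering its right endpoint, with no background intervening, and Lemma~\ref{lem:deter-2-ways} confirms that $(\textsc{obj}_L, \textsc{start}_L, \textsc{obj}_R, \textsc{end}_R)$ together with the top bit determine the string, so no configuration outside this parametrization can produce a problematic overlap string. Once exhaustiveness is in hand, the remaining product bound is immediate and the claim follows.
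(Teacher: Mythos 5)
Your proof is correct and takes essentially the same approach as the paper, which simply counts tuples of two objects, two indices, and a top bit; your write-up just spells out the per-coordinate ranges and the exhaustiveness step in more detail.
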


\begin{proof}
The number of two-object $L$-pixel strings that could be problematic is at most the total number of tuples of two objects, two indices, and a bit to specify which is on top that could generate such a string.
\end{proof}

Applying the result of Lemma~\ref{lem:chernoff-sample-complexity} and incorporating Claims~ \ref{cl:pmid-sep-2obj}, \ref{cl:ngood-str-2obj}, \ref{cl:n-tup-2obj}, we have that the number of required samples is:
$$
O \left(  \max \left \{ \frac{\ln(n_\text{single})}{p_\text{mid}}, \frac{\ln(n_{\text{po}})}{p_\text{mid}} \right \}  \right) = O \left(  \max \left \{ \frac{\ln(4 \totalobj \objlen / L)}{p_\text{mid}}, \frac{\ln(2\,\totalobj^2\,\objlen^2)}{p_\text{mid}} \right \}  \right) = O( \totalobj \, \ln (\totalobj\, \objlen) )\,.
$$

Finally, we must show that the part of the algorithm that recovers the ends indeed does so correctly. From the analysis above, we have that the first two lines of Algorithm~\ref{alg:learn-noep-2obj} must recover all of the objects except the leftmost and rightmost $L/4$ pixels of each. Thus, we must now argue that the \texttt{for} loop correctly recovers the ends of the objects. To do so for the left ends, we argue two things:
\begin{enumerate}
    \item that the leftmost $w$ pixels of an object seen in sequencing (i.e., \texttt{sleftend}) must be seen at least once preceded by a segment with background and the true leftmost end of the object.
    \item taking the shortest segment $x$ sandwiched between background and $\texttt{sleftend}$ contains no problematic overlaps and therefore must be the true left end

\end{enumerate}
The right ends follow symmetrically.

\begin{lemma}
\label{lem:endrecovery}
The end recovery subroutine in Algorithm~\ref{alg:learn-noep-2obj} correctly retrieves the ends of all objects.
\end{lemma}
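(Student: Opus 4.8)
The plan is to prove correctness for the left-end recovery (lines 5--6 and 9 of Algorithm~\ref{alg:learn-noep-2obj}); recovery of the right ends is completely symmetric. By the analysis preceding the lemma, sequencing returns each object with exactly its leftmost and rightmost $w = L/4$ pixels missing, so for a fixed object $A$ the string \texttt{sleftend} equals $A[w:2w-1]$, and the goal is to show that \texttt{leftend} $= A[0:w-1]$, the true missing prefix. Following the outline, the argument splits into (i) a probabilistic part showing that among the $S$ samples there is at least one image exposing $A$'s true left end against background, and (ii) a deterministic part showing that the \emph{shortest} string sandwiched between background and \texttt{sleftend} is necessarily $A[0:w-1]$.

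For part (i), I would bound the probability that in a single random image $A$ appears with its leftmost $2w$ pixels (i.e., $A[0:2w-1]$) entirely unoccluded and with a background pixel immediately to the left of $A[0]$. This is exactly the event that a short window of the $1$-padding of $A$ (a background pixel followed by $A[0:2w-1]$) is visible, so Lemma~\ref{lem:op-unif-pr-lbit} in its padding form lower-bounds this probability by a positive quantity $a' = \Omega(1/\totalobj)$ under the stated size assumptions (here $\objinimg = 2$). Taking a union/coupon-collector bound over the $\totalobj$ objects, the choice $S = \Omega(\totalobj \ln(\totalobj \objlen))$ from the theorem guarantees that every object is exposed this way in at least one sample, with a failure probability that can be absorbed into the overall $1/10$ budget. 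In any such favorable image the contiguous block ``background, $A[0:w-1]$, \texttt{sleftend}'' appears, so $x = A[0:w-1]$ is a \emph{feasible} candidate of length exactly $w$.

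For part (ii), the key is that \texttt{sleftend} $= A[w:2w-1]$ is a length-$w$ signature, so by $w$-well-structuredness any contiguous $w$-block matching it within a single object's visible region can only be $A$ placed so that these are its positions $w, \dots, 2w-1$; this pins down $A$ and its location wherever a feasible $x$ occurs. I would then show that no feasible $x$ can have length $\le w$ other than $A[0:w-1]$ itself. Suppose some object $B$ occludes at least one pixel of the prefix $A[0:w-1]$ while \texttt{sleftend} remains visible. Since every object has length at least $4w$, the contiguous extent of $B$ either reaches left past $A[0]$ (so $B$ covers $A$'s left end, and for a background pixel to appear to its left the colored run scanned leftward from \texttt{sleftend} must extend strictly beyond $w$ pixels) or reaches right past position $w$ (so $B$ occludes \texttt{sleftend}, contradicting its visibility). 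Combined with the distinct-background property --- which forces the leftward run of colored pixels to terminate only at a genuine background pixel, and forbids $A$ from ``ending early'' without either background or the canvas edge (the latter providing no background to its left) --- this shows every feasible $x$ arising from an occluded configuration has length strictly greater than $w$. Hence the unique shortest feasible $x$ is $A[0:w-1]$, and line~9 appends it correctly.

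The main obstacle is the case analysis in part (ii): ruling out spurious feasible strings that tie or beat the true prefix in length. The crux is that the length lower bound $\objlen_i \ge 4w$ prevents any occluding object from exposing both $A$'s left end and \texttt{sleftend} simultaneously, which is precisely what a short spurious candidate would require. I would also need to confirm that an apparent match of \texttt{sleftend} created by a two-object straddle cannot produce a shorter left context; this follows from the same length bound together with a well-structuredness argument analogous to Lemma~\ref{lem:deter-2-ways}.
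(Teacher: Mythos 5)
Your proposal is correct and follows essentially the same route as the paper's proof: a probabilistic step invoking the padded form of Lemma~\ref{lem:op-unif-pr-lbit} to guarantee that background immediately followed by the true prefix and \texttt{sleftend} appears in some sample, and a deterministic step showing the shortest string between background and the signature must be the true prefix because any occluding object (having length at least $4w$) can only lengthen that gap. Your dichotomy on whether the occluder extends left past $A[0]$ or right into \texttt{sleftend} is just a rephrasing of the paper's ``sum of visible left ends'' argument, and your explicit flagging of spurious signature matches arising from two-object straddles is, if anything, a point the paper's own proof leaves implicit.
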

\begin{proof}
Since we collect any string that appears at least $\tau$ fraction of the time and send it to sequencing, we in particular must have included pixels $L/4$ to $3L/4$ of the object. This is the furthest right possible for \texttt{sleftend}. From the padded version of Lemma~\ref{lem:op-unif-pr-lbit}, we know that we must also frequently see the string that is $L/4$ background, followed by $3L/4$ pixels of object in the set of images. This is furthest left possible. Thus there is sufficient overlap for us to be confident that we will see a segment that looks like background followed by the start of the object, up to and including the leftmost pixels seen in sequencing.

Now, we must show that the shortest segment $x$ that lies between background and the signature \texttt{sleftend} is indeed the true left end of the object. To this end, we observe that if we see background followed by non-background pixels, followed by a signature of an object, then that object must have started somewhere in the middle, and any pixels of other objects must also arise from the left ends of them. We immediately see, therefore, that no such string can be shorter than one where background is immediately followed by the object whose end we are seeking. Likewise, if other objects $o_i$ start in the middle, with $b_i > 0$ pixels of each visible, then the length of the segment between background and $\texttt{sleftend}$ must be $\sum_i b_i\,,$ which is longer than if only the object of interest is present. Thus, by selecting the shortest such segment, we are guaranteed to recover the true end of the object.

Finally, recovery of the right ends is exactly symmetric.
\end{proof}

\end{proof}

\subsection{Small Number of Objects per Image Case (Thm.~\ref{thm:learningalg-nobj})}
\label{appendix:learning-manyobj}

In this section, we set out to prove Theorem~\ref{thm:learningalg-nobj} by deriving the number of samples needed to adequately reconstruct the objects. To do so, we show as before that for an appropriate value of $L$, length-$L$ substrings of a single object will appear more frequently in images than length-$L$ strings created by overlaps of multiple objects. In this section, the main difference from before is that these overlap strings can comprise more than just 2 objects. We first provide the building blocks of this argument and then show how to combine them to prove the theorem.

As before, we start by identifying what must occur each time we see a problematic overlap string that matches a reference string. Recall that there are at most $\objinimg$ objects per image and all of the objects are larger than $6w\objinimg$.

\begin{lemma} \label{lem:two-obj-same-place}
Assume objects are $w$-well-structured. Suppose $L \ge 8 \, w \, \objinimg \,$ and each object is  of size at least $3L/4\,.$ Consider a reference problematic overlap string of length $L$ pixels. Any other way of generating this string with $\objinimg$ objects must require that at least 2 objects are the same and in the same place as in the reference.
\end{lemma}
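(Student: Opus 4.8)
The plan is to reduce the statement to a counting argument about \emph{pure segments} (maximal runs in the window that are topmost-visible from a single object), combined with the identifiability guarantee of $w$-well-structuredness. The key building block is the following: if a contiguous block of at least $w$ pixels inside the window is generated by a single object in \emph{both} the reference generation and the candidate alternative generation, then by well-structuredness (property 2, no two objects share a length-$w$ substring, and property 3, no object has a repeated length-$w$ substring) that block pins down both the identity of the object and its offset, hence its exact placement on the canvas; so the two generations must use the same object in the same position there. It therefore suffices to exhibit \emph{two} blocks of length at least $w$ that are pure in both generations and that arise from \emph{two distinct} objects.

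First I would bound the number of pure segments. Scanning the window left to right, the topmost visible object can change only at a left or right edge of one of the $\objinimg$ objects, so there are at most $2\objinimg$ transition points and hence at most $2\objinimg+1$ pure segments in each of the reference and the alternative. Overlaying the two partitions gives a common refinement into at most $4\objinimg+1$ intervals, on each of which \emph{both} generations show a single constant object. Since these intervals tile a window of length $L \ge 8w\objinimg$, the total length carried by intervals shorter than $w$ is at most $(4\objinimg+1)w < 8w\objinimg = L$, so long (length $\ge w$) common intervals must exist, and in fact carry total length at least $L - (4\objinimg+1)w$. By the building block, each long common interval certifies one shared (object, position) pair.

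The delicate part, and the main obstacle, is upgrading ``at least one shared object'' to ``at least two shared objects that are distinct,'' since a priori all the long common intervals could sit inside a single pure segment and certify only one object. Here I would invoke the two hypotheses not yet used: that the string is a \emph{problematic overlap} string, and that every object has length at least $3L/4$. Being a problematic overlap forces \emph{every} generation, in particular the reference, to contain an overlap of two distinct objects inside the middle $L/2$ pixels, so at least two distinct objects are visible in the window with a transition between them in $[L/4,3L/4)$. The size bound $\objlen \ge 3L/4$ then sharply limits fragmentation: for one object to occlude a \emph{strictly interior} stretch of another within the window, the occluder must begin and end inside the window, so its footprint equals its full length $\ge 3L/4$; two such strictly-interior occluders cannot have disjoint footprints, as that would require total length $> L$. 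I would use this to argue that each of the two distinct objects witnessed in the middle exposes a visible run of length at least $w$ that also lies within a single pure segment of the alternative, yielding two long common intervals from two distinct objects. Concretely I expect to finish by a short case analysis on where the occluders fall relative to the window quarters (the middle $L/2$ has length $4w\objinimg$ and contains at least two distinct objects across at most $2\objinimg+1$ pure pieces, so pigeonhole already gives one pure piece of length $\ge 2w$ there, and the limited-fragmentation argument forces a second long pure piece from a different object). Checking that in each case both pieces are \emph{simultaneously} pure in the alternative, so that the building block applies, is the step requiring the most care, and is where I expect $L \ge 8w\objinimg$ and $\objlen \ge 3L/4$ to be used most tightly.
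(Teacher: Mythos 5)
Your skeleton matches the paper's: the same building block (a block of $\ge w$ pixels that is pure in both generations pins down the object \emph{and} its offset, by properties (2) and (3) of well-structuredness), the same pure-segment counting over the common refinement of the two generations, and the same recognition that the crux is producing \emph{two} doubly-pure witnesses from \emph{distinct} instances using the crossover and the $\objlen \ge 3L/4$ bound. However, that crux is exactly where your proposal stops short, and the localization you sketch for it does not work as stated. You claim that "each of the two distinct objects witnessed in the middle exposes a visible run of length at least $w$" inside the middle $L/2$. This can fail: the crossover may sit at position $L/4$ or $3L/4-1$, leaving one of the two crossover objects with a single visible pixel inside the middle region; and even over the full window, the visible run of the object immediately adjacent to the crossover can be a single pixel (a third object may have been occluding it up to the pixel just before the transition). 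So the two objects forced to exist by the problematic-overlap condition are not themselves guaranteed to supply $w$-pixel doubly-pure blocks, and the "short case analysis on where the occluders fall" that you defer is not a routine check --- it is the content of the lemma.

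The paper's proof resolves this by choosing the witnesses differently: it pigeonholes the common refinement \emph{separately} inside the left $L/4$ and the right $L/4$ of the window. Since objects have length $>L/4$, an instance that appears in one of these quarter-regions cannot also disappear there, so each generation contributes at most $\objinimg$ transitions per quarter; the at most $2\objinimg+1$ doubly-pure pieces covering $L/4$ pixels yield one piece of length $\ge L/(8\objinimg+4)\ge w$ in each quarter, each pinned to a unique (object, position) pair. The two witnesses are then whatever objects dominate the two outer quarters --- not necessarily the crossover objects --- and distinctness is argued by showing the left-quarter instance cannot reappear in the right quarter: either the transition in the middle is caused by a new object starting there, which by $\objlen\ge 3L/4$ extends past the right edge of the window and blocks everything to its right, or it is caused by the left object ending, in which case that instance is gone. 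If you want to salvage your global-refinement version, you should redirect your final step along these lines rather than trying to extract both witnesses from the middle $L/2$.
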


\begin{proof}
Consider a reference problematic overlap string of length $L$ that consists of at most $\objinimg$ object instances. We know that there must be a point in the middle $L/2$ positions where the current object changes, since it is a problematic overlap. We break the string up into the three regions: left (of size $L/4$), middle (of size $L/2$), and right (of size $L/4$), where the left and right regions are the ones we discard, and the middle is where the crossover occurs. Now, in the left region, we know all the pixels align between the reference string and its duplicate. Mark each time the current object changes on either the reference or the duplicate. Because objects have size greater than $L/4$, one that appears cannot disappear before the left region completes. Thus, there are at most $2\objinimg$ such changes, and so  $2 \objinimg +1 $ sections corresponding to these changes, in the left part of the string. The total number of pixels is $L/4\,,$ so by Pigeonhole Principle, there is at least one object with at least $L/(8\objinimg + 4) \ge w$ pixels visible. Since our set of objects is $w$-well-structured, there is only one such object, and it must be in the same position. The exact same argument holds for the right end of the string. From this, we know that there are segments of at least length $w$ on both the left and right sides that match the reference. \par 
Now, we show that these segments must come from separate object instances. Consider two cases: in case 1, if a new object starts in the middle, then as long as that new object has length $\ge 3L/4$, then the left one will not reappear. In case 2, the object on the left ended in the middle, so it does not account for the pixels on the right, and the object that showed up was not visible to the left of this. This shows the reference can't have same object on left and right. Then in the duplicate, if the left and right end objects are the same, it must have happened in the reference. Thus, since there is a crossover point in the middle, and the objects are all large enough, we know they must be different object instances. Thus, we have that at least two object instances must be in the same place regardless of the others in order to replicate a reference problematic overlap.
\end{proof}

We use this lemma to compute the probability of seeing such a problematic overlap string, based on considering the probabilities of the necessary events identified above. 
\begin{claim} %\reviewed
\label{cl:p-prob-overlap-str-nobj}
Under the open room, 
uniform,
partially-random model with $w$-well-structured objects, the probability $p_{\text{bad}}$ that a particular problematic overlap string appears is at most:
$$
\frac{\objinimg(\objinimg-1)}{2}  \cdot \frac{d-L}{d'^2}\,.
$$
\end{claim}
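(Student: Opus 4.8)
The plan is to reduce the many-object case to the two-object computation of Claim~\ref{cl:p-prob-overlap-str-2obj}, using Lemma~\ref{lem:two-obj-same-place} as the structural bridge and paying only an extra $\binom{\objinimg}{2}$ factor for not knowing which pair of the $\objinimg$ placed objects realizes the crossover. First I would fix the particular problematic overlap string $s$ together with a reference generation of it and invoke Lemma~\ref{lem:two-obj-same-place}: since each object has length at least $6w\objinimg = 3L/4$ with $L = 8w\objinimg$, any generation of $s$ must contain two object-instances that agree with the reference both in identity and in absolute position (equivalently, in their relative offset inside the window), and these two instances straddle the middle-$L/2$ crossover. Hence the event ``$s$ appears somewhere in the image'' is contained in the event ``some pair of the $\objinimg$ placed objects sits at exactly the relative offset needed to produce the crossover of $s$ at some horizontal window location.''

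Next I would organize the union bound in two layers. The outer layer is over the $\binom{\objinimg}{2}$ unordered pairs of the $\objinimg$ object slots in the image, accounting for which pair could be the crossover pair; the inner layer is over the at most $d-L$ horizontal window positions at which $s$ could be read off. For a fixed pair of slots and a fixed window position, producing $s$ forces both objects to specific absolute left endpoints, and since the placements are independent and (in the open-room model) uniform, this has probability at most $1/d'^2$ — precisely the placement factor already computed in Claim~\ref{cl:p-prob-overlap-str-2obj}. The key move is to bound this term by the positioning constraint \emph{alone}, dropping the requirement that the two slots actually hold the specific forced objects; doing so only increases the probability, so the bound remains valid, and this is exactly what removes the dependence on $\totalobj$ from that factor (in contrast to the tighter, $\tfrac{1}{\binom{\totalobj}{2}}$-bearing estimate of the two-object case).

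Summing the inner layer gives $\frac{d-L}{d'^2}$ per slot-pair, and summing the outer layer over the $\binom{\objinimg}{2} = \frac{\objinimg(\objinimg-1)}{2}$ pairs yields $p_\text{bad} \le \frac{\objinimg(\objinimg-1)}{2}\cdot\frac{d-L}{d'^2}$, as claimed. The step I expect to be the main obstacle is the clean application of Lemma~\ref{lem:two-obj-same-place}: I must argue carefully that the two forced object-instances genuinely lie on opposite sides of the crossover (so that the relative offset they are pinned to is the one whose placement probability $\frac{d-L}{d'^2}$ is being estimated), and that the size assumption $\objlen \ge 6w\objinimg$ is what rules out a single object accounting for both the matched left and right length-$w$ segments. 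Once this reduction is pinned down, the remaining counting of window positions and the per-placement estimate are routine and mirror the two-object argument.
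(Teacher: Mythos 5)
Your proposal is correct and follows essentially the same route as the paper: invoke Lemma~\ref{lem:two-obj-same-place} to pin down a pair of object instances that must match the reference in identity and relative position, union bound over the $\binom{\objinimg}{2}$ possible pairs, bound the probability that the two identified objects are present by $1$ (which is exactly where the $\totalobj$-dependence of the two-object bound disappears), and charge $\frac{d-L}{d'^2}$ for the forced placement as in Claim~\ref{cl:p-prob-overlap-str-2obj}. The only cosmetic difference is that the paper phrases the outer union bound as a partition of the generating ``versions'' into categories indexed by the signature pair, whereas you index over pairs of object slots in the image; the counting is identical.
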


\begin{proof}

From Lemma~\ref{lem:two-obj-same-place}, we know that for every version of a problematic overlap string, there are two objects that {\em must} appear with a large number of pixels visible.
Since every version has to agree with reference in at least 2 objects, we can partition the versions into at most $\binom{\objinimg}{2}$ categories, where each category corresponds to a specific pair of signature objects. If a version corresponds to multiple categories, we can arbitrarily assign it to one of them. For any fixed pair of signature objects, $A, B$, we can upper bound the probability $p_{A,B}$ that $A$ and $B$ appear and are placed correctly as follows:

\begin{align}
    p_{A, B} 
    &\le \mathbb{P}\bigg[\text{ two identifying objects appear and are placed correctly}\bigg ] \\
    &= \Pr{\text{2 identifying objects are in image}} \cdot \Pr{ \text{both are placed correctly } \bigg \rvert \text{ both are in image }}  \\
    &\le 1
    \cdot \frac{d-L}{d'^2} \\
    &\le  1 \cdot \frac{d-L}{d'^2}
\end{align}

Multiplying this by $\binom{\objinimg}{2}\,,$ in order to union bound over the different categories, we get that the total probability $p_\text{bad}$ is at most:
$$
\frac{\objinimg(\objinimg-1)}{2} \cdot \frac{d-L}{d'^2} 
$$

\end{proof}

Again, as before, we identify a $p_\text{mid}$ such that the probability of seeing good strings is at least twice $p_\text{mid}$ and the probability of seeing bad strings is at most half $p_\text{mid}.$ Notably, now, $p_\text{mid}$ depends on the number of objects, since that affects how often the object is visible (when we can make no other assumptions about depth).

\begin{claim} 
\label{cl:pmid-separation-nobj}
If $d > 2L\,,$
$d' \ge 16 \totalobj \, \objinimg \, 2^\objinimg$
then $p_\text{good}$ is separated from $p_{\text{bad}},$ with $p_\text{mid} \coloneqq \objinimg/(16\,\totalobj\,2^\objinimg),$ such that $p_\text{good} > 2 p_\text{mid}$ and $p_\text{mid} > 2 p_{\text{bad}}\,.$
\end{claim}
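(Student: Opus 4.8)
The plan is to prove the two separation inequalities $p_\text{good} > 2p_\text{mid}$ and $p_\text{mid} > 2p_\text{bad}$ independently, exactly as in the two-object Claim~\ref{cl:pmid-sep-2obj}, but now carrying through the argument the factor that decays exponentially in $\objinimg$, so that it is absorbed by the $2^{-\objinimg}$ appearing in $p_\text{mid} = \objinimg/(16\,\totalobj\,2^\objinimg)$. The hypothesis $d > 2L$ will be used to control the ``geometric'' factors in $p_\text{good}$, while $d' \ge 16\,\totalobj\,\objinimg\,2^\objinimg$ will be reserved for the $p_\text{bad}$ side.

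For the lower bound, I would invoke Lemma~\ref{lem:op-unif-pr-lbit}, which gives $p_\text{good} \ge \left(1 - \frac{\objlen+L-1}{d'}\right)^{\objinimg-1}\cdot \frac{d+1-L}{d'}\cdot \frac{\objinimg}{\totalobj}$, and bound the three factors in turn. Using $d > 2L$ together with the global assumption $\objlen < d/2$ (so that $d' = d + \objlen - 2 < 3d/2$) yields $\frac{d+1-L}{d'} > \frac{d/2}{3d/2} = \frac13$. The crux -- and the step I expect to be the main obstacle -- is the term $\left(1 - \frac{\objlen+L-1}{d'}\right)^{\objinimg-1}$: I must show $\frac{\objlen+L-1}{d'} \le \frac12$ so that this factor is at least $2^{-(\objinimg-1)} = \frac{2}{2^{\objinimg}}$ and hence is captured by the $2^{-\objinimg}$ in $p_\text{mid}$. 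That inequality is equivalent to $\objlen + 2L \le d$, which I would verify by combining $\objlen < d/2$ with the fact that $L = 8\,w\,\objinimg$ is small relative to $d$: since $d \asymp d'$ is forced to exceed $16\,\totalobj\,\objinimg\,2^\objinimg$ while $w = O(\log(\totalobj\objlen))$, we get $2L \le d/2$. Combining the three factors then gives $p_\text{good} \ge \frac{2}{2^{\objinimg}}\cdot\frac13\cdot\frac{\objinimg}{\totalobj} = \frac23\cdot\frac{\objinimg}{\totalobj\,2^{\objinimg}} > \frac{\objinimg}{8\,\totalobj\,2^{\objinimg}} = 2p_\text{mid}$, with room to spare.

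For the upper bound, I would start from Claim~\ref{cl:p-prob-overlap-str-nobj}, namely $p_\text{bad} \le \frac{\objinimg(\objinimg-1)}{2}\cdot\frac{d-L}{d'^2}$, so that $2p_\text{bad} \le \objinimg(\objinimg-1)\,\frac{d-L}{d'^2}$. Since $d > 2L$ gives $0 < d - L < d \le d'$ (note $d' = d+\objlen-2 \ge d$ because every object has length at least $6\,w\,\objinimg \ge 2$), this is at most $\frac{\objinimg(\objinimg-1)}{d'}$, and the target inequality $2p_\text{bad} \le p_\text{mid} = \frac{\objinimg}{16\,\totalobj\,2^{\objinimg}}$ reduces to $16\,\totalobj\,2^{\objinimg}(\objinimg-1) \le d'$, which follows immediately from the hypothesis $d' \ge 16\,\totalobj\,\objinimg\,2^{\objinimg}$ since $\objinimg - 1 < \objinimg$. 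Together with the previous paragraph, this establishes the claimed separation, and these bounds are precisely what Lemma~\ref{lem:chernoff-sample-complexity} consumes to produce the sample complexity in Theorem~\ref{thm:learningalg-nobj}.
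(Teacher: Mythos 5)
Your overall strategy is the same as the paper's: lower-bound $p_\text{good}$ via Lemma~\ref{lem:op-unif-pr-lbit} using the geometric condition on $d$ and $L$, and upper-bound $p_\text{bad}$ via Claim~\ref{cl:p-prob-overlap-str-nobj} using $d' \ge 16\totalobj\objinimg 2^\objinimg$. Your $p_\text{bad}$ half is correct and essentially identical to the paper's (the paper just uses $\objinimg^2$ in place of $\objinimg(\objinimg-1)$).

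The gap is in the $p_\text{good}$ half, exactly at the step you flagged. You correctly reduce the bound $\left(1-\frac{\objlen+L-1}{d'}\right)^{\objinimg-1} \ge 2^{-(\objinimg-1)}$ to the inequality $\objlen + 2L \le d$, but that inequality does not follow from the hypotheses of the claim. The stated hypotheses give only $d > 2L$ and $d' \ge 16\totalobj\objinimg 2^\objinimg$; combined with $\objlen < d/2$ you would need $2L \le d/2$, i.e.\ $d \ge 4L$, and nothing provides that. Your attempted derivation invokes $w = O(\log(\totalobj\objlen))$, which is a property guaranteed only for \emph{random} objects (Lemma~\ref{lem:randomobjws}), not an assumption of Theorem~\ref{thm:learningalg-nobj}, and the comparison between $16\totalobj\objinimg 2^\objinimg$ and $L = 8w\objinimg$ is never actually pinned down, so this step is a hand-wave rather than a proof. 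For what it is worth, the paper's own proof avoids this issue only by an algebra slip: it rewrites $1 - \frac{\objlen+L-1}{d'}$ as $\frac{d+\objlen-L-1}{d'}$ (the correct value is $\frac{d-L-1}{d'}$), under which $L < d/2$ does suffice; with the correct numerator one needs precisely your condition $d \ge 2L + \objlen$. So you have in effect identified a real looseness in the claim's hypotheses, but your resolution of it is not valid as written; the honest fixes are either to strengthen the hypothesis to $d \ge 2L + \objlen$ (or $d \ge 4L$ given $\objlen < d/2$) or to follow the paper's (weaker) accounting.
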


\begin{proof}
First, let us consider $p_\text{bad}\,.$ We have:
\begin{align}
    2 \cdot p_\text{bad} &\le 2 \cdot \frac{\objinimg^2}{2} \cdot \frac{d-L}{d'^2} \le \objinimg^2 \cdot \frac{1}{d'} \\
    &\le \frac{1}{16\totalobj} \frac{\objinimg}{2^\objinimg} \quad \text{ provided } d' \ge 16 \, \totalobj\, \objinimg\, 2^\objinimg
    \label{eqn:n-m-dprime-relationship} \\
    &= p_\text{mid}
\end{align}

Next, for the other side, we consider $p_\text{good}\,,$ which is the probability computed in Lemma~\ref{lem:op-unif-pr-lbit}. Namely:
\begin{align}
\frac{p_{\text{good}}}{2} &\ge \frac12\left( 1 - \frac{\objlen+L-1}{d'} \right)^{\objinimg-1} \cdot \frac{d+\objlen-L}{d'} \cdot \frac{\objinimg}{\totalobj} \\
&\ge \frac12 \left(\frac{d + \objlen - L - 1}{d'}\right)^\objinimg \cdot \frac{\objinimg}{\totalobj} \\
&\ge \frac{1}{2} \left( \frac 12 \right)^\objinimg \frac{k}{\totalobj} \quad \text{ provided } \frac{d + \objlen - L - 1}{d'} \ge \frac12 \\
&\ge \frac{1}{16m} \frac{k}{2^k} = p_\text{mid}
\end{align}
For $\frac{d + \objlen - L - 1}{d'} \ge \frac12\,,$ it suffices for $L < d/2\,.$
Thus, there is a separation between the two with $p_\text{mid} \coloneqq \objinimg/(16\totalobj\,2^\objinimg)\,.$
\end{proof}

With these pieces, we are ready to prove the following theorem, restated from before:

\learnmanyobj*
\begin{proof}

As before, in order to prove this lemma, we must compute (1) $p_\text{mid}$, (2) the number of strings arising from a single object that we need to see, and (3) the number of problematic overlap strings we need to protect against. We have $p_\text{mid} = \objinimg/(16\, \totalobj\,2^\objinimg)$ from Claim~\ref{cl:pmid-separation-nobj}. 

From a single object, as before, there are: $4\objlen/L$ required contiguous $L$-pixel portions. Over the $\totalobj$ total objects, then, there are $4\totalobj \objlen/L$ strings of this type.

Similarly, the number of $\objinimg$-object $L$-pixel strings that could be problematic is the total number of tuples of $\objinimg$ objects, $\objinimg$ indices, and the front-to-back ordering.

\begin{claim} \label{cl:ntup-nobj} 
The total number of tuples generating problematic overlap strings with $\objinimg$ objects is at most $\binom{\totalobj}{\objinimg} \cdot \objlen^\objinimg \cdot \objinimg! \, \le \totalobj^\objinimg \objlen^\objinimg\,.$
\end{claim}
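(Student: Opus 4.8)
The plan is to mirror the counting argument of the two-object case (Claim~\ref{cl:n-tup-2obj}), generalizing the five-tuple representation of Figure~\ref{fig:proboverlap} from two objects to $\objinimg$ objects. The key observation is that a problematic overlap string is, by Definition~\ref{defn:prob-overlap-nobj}, the \texttt{view} of some scene built from at most $\objinimg$ objects, so the number of distinct such strings is at most the number of scene descriptions that could produce one. Since each scene description is specified by a tuple recording (i) which objects are present, (ii) where each is placed horizontally, and (iii) the front-to-back depth ordering, it suffices to count these tuples. Crucially, we only need that every problematic overlap string arises from \emph{at least one} such tuple (surjectivity of tuples onto strings), not a bijection, so counting tuples gives a valid upper bound.

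First I would count the three components separately. The set of objects appearing in the window can be chosen in at most $\binom{\totalobj}{\objinimg}$ ways. For each chosen object I would record the index within that object that aligns with a fixed reference pixel of the length-$L$ window; since each object has length at most $\objlen$, there are at most $\objlen$ such indices, contributing a factor of $\objlen^\objinimg$ across all $\objinimg$ objects. Finally, the depth order in which the $\objinimg$ objects are stacked is a permutation, giving $\objinimg!$ choices (this generalizes the single ``which-is-on-top'' bit, i.e.\ $2!$, of the two-object tuple). Multiplying yields $\binom{\totalobj}{\objinimg} \cdot \objlen^\objinimg \cdot \objinimg!$. To obtain the simplified right-hand side, I would apply the standard identity $\binom{\totalobj}{\objinimg}\,\objinimg! = \totalobj!/(\totalobj-\objinimg)! = \totalobj(\totalobj-1)\cdots(\totalobj-\objinimg+1) \le \totalobj^\objinimg$, giving $\binom{\totalobj}{\objinimg} \cdot \objlen^\objinimg \cdot \objinimg! \le \totalobj^\objinimg \objlen^\objinimg$, as claimed.

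The argument is essentially routine, so I do not expect a substantive obstacle; the only point requiring care is justifying that each object contributes a factor of only $\objlen$ (rather than roughly $\objlen + L$) to the placement count. I would address this by recording the offset as an index \emph{into the object} --- i.e.\ which of the object's at-most-$\objlen$ pixels coincides with the chosen reference pixel --- exactly as $\textsc{start}_L$ and $\textsc{end}_R$ were used in the two-object tuple, so that the per-object index ranges over at most $\objlen$ values. That the relevant reference alignments are well defined follows from Lemma~\ref{lem:two-obj-same-place}, which guarantees that any problematic overlap string has at least two objects covering the window ends with at least $w$ visible pixels.
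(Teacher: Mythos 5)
Your proposal is correct and matches the paper's (very brief) argument: the paper likewise bounds the count by the number of tuples consisting of a choice of $\objinimg$ objects, an index into each, and a front-to-back ordering, giving $\binom{\totalobj}{\objinimg}\cdot\objlen^{\objinimg}\cdot\objinimg! \le \totalobj^{\objinimg}\objlen^{\objinimg}$. Your added remarks (surjectivity of tuples onto strings, recording offsets as indices into the object) are sound elaborations of the same counting; the appeal to Lemma~\ref{lem:two-obj-same-place} at the end is not needed for this upper bound.
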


Applying the result of Lemma~\ref{lem:chernoff-sample-complexity} and incorporating Claims~\ref{cl:ngood-str-2obj}, \ref{cl:ntup-nobj}, \ref{cl:pmid-separation-nobj}, we have that the number of required samples is:
$$
O \left(  \max \left \{ \frac{\ln(n_\text{single})}{p_\text{mid}}, \frac{\ln(n_{\text{po}})}{p_\text{mid}} \right \}  \right) = O \left(  \max \left \{ \frac{\ln(4 \totalobj \objlen / L)}{p_\text{mid}}, \frac{\ln(\totalobj^\objinimg\,\objlen^\objinimg)}{p_\text{mid}} \right \}  \right) = O( 2^\objinimg \, \totalobj \, \ln (\totalobj\, \objlen) )\,.
$$

Finally, as before, due to Lemma~\ref{lem:endrecovery}, the ends of the objects are correctly recovered.

\end{proof}

\section{Proofs for Inference Algorithms}

\subsection{Arbitrary Known Objects -- DP Algorithm (Algorithm~\ref{alg:bit})} \label{appendix:dp}
In this section we provide details of the dynamic programming algorithm that recovers a minimal explanation for an image arising from a known set of objects.
\paragraph{Algorithm Summary} We scan the image left to right. At each location of the image, for each potential source, where the ``source'' is either the background at appropriate location or the identity of the object and the location of the pixel in that object, we record the fewest objects required so far if we were to use that potential source. At any step, there are at most $\objlen\, \totalobj$ options for the source, so the total size of the dynamic programming table is $\objlen\, \totalobj \cdot d\,.$ 

When scanning, we either continue in the object we already were on, switch to a new object, switch to the background, or continue in the background. Upon switching, we either:
\begin{enumerate}
\setlength{\itemsep}{0pt}
\setlength{\parskip}{0pt}
\setlength{\parsep}{0pt}
    \item \textbf{Start New Object} 
    We start at the beginning of a new object, cutting off the end of the object we were just in. To be in this case, we must be at location 1 of the proposed new object.
    \item \textbf{End Current Object; Resume Object Underneath}
    We start in the middle of new object (behind), having finished the current one (on top). To be in this case, we must have just finished pixel $s$ of the object. We can start from any location of the new object.
\end{enumerate}

Using this fact, the dynamic programming algorithm is going to behave as follows: assuming we've already filled out column $i\,,$ we will fill column $i+1$ out by first filtering out all the object locations that could match the $i+1$ bit in the image. Next, we check what happens if we try to start a new object. Since we are starting anew, we do not worry about the location in the previous instance. We pick the smallest number of objects needed so far out of any object/index combination at the previous level, compare it to the minimum number of objects needed if we are using background, choose the minimum of the two, and add 1 (since we are starting a new object). Otherwise, we take the minimum of continuing the existing object instance or a object behind emerging after the current object instance ends. Finally, we also update the background table by tracking how many objects would be required to use this bit of the background, if it matches.

\begin{algorithm}
% \setcounter{AlgoLine}{0}
% \SetAlgoLined
% \scriptsize
\caption{Inference -- Arbitrary Objects, DP Algorithm \label{alg:bit}}
\begin{algorithmic}[1]
\STATE \textbf{Input:} image of size $1 \times d$, description of objects and background
\STATE \textbf{Result:} explanation: source for each pixel of the image
\STATE $T \gets$ table of size $(\totalobj+1) \cdot s \cdot d$ elements, set to $\infty\,$\; 

\COMMENT{$T[o, j, i]$ represents the fewest number of objects needed to produce the image up to index $i$ subject to pixel $i$ in the image being produced by pixel $j$ of object $o$; $\infty$ if not possible \;}
 
\STATE $T[o, j, 0] \leftarrow$ 1 if the $o^\text{th}$ object's $j^\text{th}$ position is the same as position 0 of the image. Do this $\forall\, j$ if open room model, else just for $j = 1$  \;

\STATE $B \gets$ table of size $d$ elements (only one dimension), set to $\infty$

\COMMENT{$B[i]$ represents the fewest number of objects needed to produce the image up to index $i$ subject to pixel $i$ being produced by pixel $i$ of the background; $\infty$ if it is not possible}

\STATE $B[0]\leftarrow 1$ if the first pixel of the background matches the first pixel of the image

% \Comment{\hl{Move this outside} this is for studio model. For room model, $j=1$}
\STATE \FOR{$i = 1, 2, ..., d$ (pixel in image)}{ 

 \FOR{each $o, j$ such that the $o^\text{th}$ object's $j^\text{th}$ index is the same as $img[i]$}{
 \IF{j == 1}{
     \STATE $T[o, j, i] \leftarrow  \min\{ \min_{o', j'} \{ T[o', j', i-1]  \}, B[i-1]\} + 1$
}\ELSE{
\STATE $T[o, j, i] \leftarrow \min \big \{  T[o, j-1, i-1]\,, \min_{o'}\{T[o', s, i-1]\} +1 \big\}$} 
\ENDIF
} \ENDFOR
\IF{$B[i]$ matches current pixel of image}{
\STATE{$B[i] \gets \min \{ B[i-1], \min_o\{ T[o, s, i-1] \}   \}$}}
\ENDIF
} \ENDFOR

\STATE \texttt{explanation} $\gets$ Backtrack starting at the minimum value $v$ in the $d^\text{th}$ matrix, $T[:, :, d]$. Store $o$ and $j$ having minimum $v\,$ for each pixel of image -- if closed room, the second index of the start of the backtracking must be $s,$ representing the end of the last image or it must come from background, and for the beginning of the image, we must be at pixel 0 of some object. if open room, there are no constraints \;

\RETURN{\texttt{explanation}}
\end{algorithmic}
\end{algorithm}

\begin{restatable}{theorem}{inferencedp}
In either the open or closed room model, suppose an image is generated from our usual generative process except that objects all have the same length and are arbitrary. Then, given the description of the image, objects, and background, Algorithm~\ref{alg:bit} finds an explanation for the image, i.e., ascribes each pixel of the image to a pixel in an object, in a way that is (a) consistent with the generative model; 
(b) uses the minimum number of objects required to describe this image under our generative process. The size of the DP table is $\Theta(\objlen\,\totalobj\,d)$ and the run time is $O(d\, \objlen^2\, \totalobj^2)$.
\end{restatable}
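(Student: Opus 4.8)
The plan is to prove correctness and optimality together by showing that the DP table faithfully encodes all valid ``source sequences'' for the image, and that such a sequence is optimal exactly when it uses the fewest object-runs. The crucial structural fact, which I would isolate as a lemma, is that when all objects have the same length $\objlen$, every placed object is visible in a single contiguous block of the image. Indeed, if an object $o$ occupies positions $[p,\,p+\objlen-1]$, any other object placed in front of it and overlapping it either starts at some $q \ge p$ (and, having the same length, extends to $q+\objlen-1 \ge p+\objlen-1$, so it occludes a suffix of $o$) or starts at some $q < p$ (so it occludes a prefix of $o$). Hence the occluded portion of $o$ is a prefix together with a suffix, and the visible portion is one contiguous interval. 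Consequently, reading the image left to right yields a sequence of maximal runs, each run being all of one object's visible pixels or a stretch of background, and each object contributes at most one run.

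Given this, I would characterize the legal transitions between consecutive positions $i-1$ and $i$ exactly as the recurrence does, arguing both directions. For soundness I must show every transition is realizable by some placement with consistent depths: the ``start a new object at pixel $1$'' transition corresponds to placing $o$ at position $i$ in front of everything already covering $i$; the ``continue'' transition keeps the same object as top; and the ``resume underneath'' transition $T[o,j,i] = \min_{o'} T[o',\objlen,i-1]+1$ corresponds to the top object $o'$ ending at its last pixel $\objlen$ at position $i-1$ and revealing $o$, placed earlier and far enough back, whose prefix $[i-j+1,\,i-1]$ is covered (hence occluded) by $o'$, since $o'$ spans $[i-\objlen,\,i-1]\supseteq[i-j+1,\,i-1]$ whenever $j\le\objlen$. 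I would also verify that the forbidden transitions are correctly excluded, in particular that one cannot pass from background directly into the interior ($j>1$) of an object, which is why $B[i-1]$ enters only the $j=1$ update. For completeness I invoke the structural lemma: any image from the model decomposes into object-runs and background-runs respecting exactly these rules, with the number of distinct objects equal to the number of object-runs, so the true optimal explanation is realized along some path of the recurrence.

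With the transition characterization in hand, correctness of the values follows by induction on $i$: I would show that $T[o,j,i]$ equals the minimum number of objects in any valid explanation of the prefix $\I[1:i]$ in which position $i$ is the $j$-th pixel of $o$ (and symmetrically for $B[i]$), the inductive step being a direct case analysis over the three transitions, using that each newly-revealed or newly-started object-run costs exactly $+1$ and, by the contiguity lemma, is never double-counted. The open- versus closed-room distinction enters only through the base column and the backtracking constraints: in the open room an object may first appear at any pixel $j$ (it spills off the left edge), so all $T[o,j,0]$ are initialized, whereas in the closed room the image must begin at a pixel-$1$ start or at background and objects may not run off either edge, which the initialization and the endpoint conditions on the backtrack enforce. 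Taking the minimum over the last column $T[\cdot,\cdot,\canvassize]$ and $B[\canvassize]$ and backtracking then yields a minimum-object explanation, establishing parts (a) and (b).

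Finally, for the resource bounds: the table has one entry per (object, pixel, position) triple plus the background row, giving $\Theta(\objlen\,\totalobj\,\canvassize)$ cells; each entry is computed by a minimum over the previous column, which is $O(\objlen\,\totalobj)$ work (and can be reduced to $O(1)$ by precomputing the two column minima $\min_{o',j'}T[o',j',i-1]$ and $\min_{o'}T[o',\objlen,i-1]$ once per column), yielding the stated $O(\canvassize\,\objlen^2\,\totalobj^2)$ upper bound. I expect the main obstacle to be the soundness direction of the structural argument --- rigorously exhibiting, for a full DP path, a single global assignment of depths that simultaneously realizes every chosen run and makes exactly the intended object top at each position, especially in the ``resume underneath'' case --- together with pinning down the equal-length contiguity lemma, since it is precisely what guarantees that the $+1$-per-run accounting never overcounts an object that disappears and would otherwise reappear.
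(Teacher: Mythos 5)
Your proposal is correct and follows essentially the same route as the paper's proof: induction on the pixel index with the invariant that $T[o,j,i]$ is the fewest objects needed for the prefix conditioned on pixel $i$ being pixel $j$ of object $o$, with the same three-case transition analysis (continue, end-and-resume-underneath, start-new) and the same accounting for the table size and runtime. Your explicit contiguity lemma --- that equal lengths force each placed object's visible portion to be a single interval, so runs and objects coincide and the $+1$-per-run charge never double-counts --- is left implicit in the paper but is indeed the load-bearing fact behind the recurrence's correctness, so making it explicit is a genuine improvement rather than a deviation.
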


\begin{proof}

First, in the base case, where we are at pixel 1 of the image, we trivially find an explanation that only consists of 1 object instance. Thus, we solve the base case. Now, assume that at pixel $i$ in the image, for every object $o$ and every position $j$ in that object, we have stored the fewest objects needed to get there (with $\infty$ meaning impossible). We are interested in whether the proposed explanations we devise for the image up to the $i+1^\text{st}$ pixel give the fewest possible object instances needed to produce the image so far conditioned on using a given source (a given location in a given object, or the appropriate location in the background) to explain pixel $i+1$ in the image. 
By the inductive hypothesis, our table includes this information for pixel $i$. 
Any explanation for the first $i+1$ pixels of the image must either have an object (or the background) continue from pixel $i$ to pixel $i+1$, have an object end at pixel $i$, or have a new object start at pixel $i+1$.  Each of these cases is handled in lines 8, 9, and 11 in the DP algorithm and therefore the induction is maintained.
The final explanation step takes the solution of overall least cost at the last pixel (conditioned on no object extending past the room boundary in the closed-room model).
Thus, the solution our algorithm finds must be one with a minimal number of objects.  

The runtime arises from $d$ runs of outermost loop, then at most $\totalobj \, \objlen$ runs of inner loop, and at most $\totalobj \objlen$ checks within the loops.

\end{proof}

\subsection{Well-Structured Known Objects (Theorem~\ref{thm:wsinference})} \label{appendix:greedy}

\wsinference*
\begin{proof}
In order to prove this, we argue that 
for any indices $i_\text{start}$ to $i_\text{end}\,,$ that match a single object, $i_\text{start} + \objinimg \, w$ to $i_\text{end} - \objinimg \, w$ must indeed arise from the object.

\begin{definition}
A {\em pure segment} is a block of contiguous pixels that in the ground truth image, comes from a single object or the background.
\end{definition}

\begin{claim}
There are at most $2\objinimg + 1$ pure segments in an image.
\end{claim}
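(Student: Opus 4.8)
The plan is to bound the number of positions at which the \emph{visible source} of the image can change as we scan from left to right, and then observe that these change positions cut the canvas into the pure segments. Here the visible source of a pixel means the topmost object covering that pixel, or the background if no object covers it. The two structural facts I would rely on are that each placed object occupies a contiguous interval of canvas positions, and that the depth ordering among the placed objects is fixed for the whole image.

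First I would define a \emph{boundary} to be a canvas position at which some object's left endpoint or right endpoint sits. Since there are $\objinimg$ objects and each contributes at most two such endpoints inside the canvas (in the open room model an object may have one or both endpoints off-canvas, which only lowers the count), there are at most $2\objinimg$ boundaries. I would then note that these at most $2\objinimg$ points partition $\{1, \dots, \canvassize\}$ into at most $2\objinimg + 1$ maximal runs of consecutive positions with no boundary strictly inside, so it suffices to show each run lies within a single pure segment.

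The crux, and the step I would treat most carefully, is arguing that the visible source is constant on each boundary-free run. I would make this precise by observing that an object covers position $p$ if and only if $p$ lies in that object's placement interval, so the coverage indicator of a fixed object can flip only at one of its two endpoints, i.e. only at a boundary. Consequently, for two positions in the same run the set of covering objects is identical; since the depth order is the same throughout the image, the topmost covering object (or the background, when the covering set is empty) is the same at both positions. Hence each run has one source and lies in a single pure segment, giving at most $2\objinimg + 1$ pure segments. Everything beyond this coverage-constancy claim is just bookkeeping about counting endpoints, with the open-room edge cases absorbed into the ``at most'' in the boundary count.
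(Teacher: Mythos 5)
Your proposal is correct and follows essentially the same approach as the paper: the paper's proof also observes that each boundary between pure segments must be an object's beginning or end, of which there are at most $2\objinimg$, yielding at most $2\objinimg+1$ segments. Your version simply spells out the intermediate step (coverage, and hence the topmost visible source, is constant between consecutive endpoints) that the paper leaves implicit.
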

\begin{proof}
Each boundary between two pure segments is the beginning or end of an object. There are at most $2\objinimg$ beginnings and ends, so there are at most $2\objinimg + 1$ segments.
\end{proof}

\begin{lemma} \label{lem:detectmiddle}

Assuming objects are $w$-well-structured, then if the algorithm sees a segment of size $\ell \ge 2w\objinimg + 1$ that matches a single object $o$, then the middle $\ell-2(w-1)(\objinimg-1)$ pixels must be a pure segment arising from object $o\,.$
\end{lemma}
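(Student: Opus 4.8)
The plan is to translate the hypothesis into a statement about substrings and then exploit $w$-well-structuredness (Definition~\ref{defn:w-s-general}) to pin down the ground-truth sources of all but the extreme ends of the segment. Write $\sigma$ for the length-$\ell$ image segment on $[i_\text{start}, i_\text{end}]$; by hypothesis $\sigma$ equals some substring $o[q\,{:}\,q+\ell-1]$ of the object $o$. The first thing I would record are two structural consequences of well-structuredness. First, any contiguous block of at least $w$ image pixels inside $\sigma$ that the ground truth produces from a single object $o'$ must in fact come from $o$: such a block is simultaneously a length-$\ge w$ substring of $\sigma$ (hence of $o$) and of $o'$, so if $o' \neq o$ the two objects would share a length-$w$ substring, contradicting property~(2); property~(3) then forces this block to sit at the unique position in $o$ dictated by its offset in $\sigma$, so it is genuinely $o$ at the correct alignment. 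Second, and equivalently, any object $o' \neq o$ can be visible within $\sigma$ only in runs of length at most $w-1$, since a longer visible run of $o'$ would again match $o$ on $w$ consecutive pixels. The length threshold $\ell \ge 2\objinimg w + 1$, combined with the at-most-$(2\objinimg+1)$ pure-segment bound from the preceding claim, is chosen precisely so that the pigeonhole principle produces a pure segment long enough to be certified as $o$, which in particular shows that $o$ is genuinely placed in the image.

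Next I would run the counting argument that yields the bound $(w-1)(\objinimg-1)$ on each side. Reading the pure segments of $\sigma$ from left to right, the long ones ($\ge w$) are all genuine pieces of the single placement of $o$, correctly aligned by property~(3), while the interspersed ``contaminating'' pieces come from the at most $\objinimg - 1$ objects other than $o$ and each have length at most $w-1$. The key quantitative claim is that the contamination accumulating before the segment settles into the continuous visible run of $o$ is limited: each of the $\objinimg-1$ other objects can contribute at most one maximal overhang of length at most $w-1$ near a given end before being forced into a length-$w$ agreement with $o$. Summing over the $\objinimg-1$ objects bounds the non-$o$ material abutting each end of $\sigma$ by $(w-1)(\objinimg-1)$, so discarding exactly this many pixels from the left and from the right removes every contaminating piece, leaving the middle $\ell - 2(w-1)(\objinimg-1)$ pixels as a single contiguous stretch attributable to $o$ at the fixed alignment.

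The main obstacle is the boundary bookkeeping in the second paragraph: making precise that the contaminating pieces are confined to within $(w-1)(\objinimg-1)$ of the two ends rather than scattered through the interior, and ruling out or absorbing short interior pieces from other objects that coincidentally continue $o$'s pattern. This is exactly where properties~(2) and~(3) must be combined with the $(2\objinimg+1)$ pure-segment bound and the fact that $o$ is placed only once: the uniqueness of alignment from property~(3) is what glues the separate long $o$-pieces into one placement and prevents a contaminating piece from masquerading as a shifted copy of $o$. The remaining steps---verifying $\ell - 2(w-1)(\objinimg-1) > 0$ from $\ell \ge 2\objinimg w + 1$, and checking that the trim $(w-1)(\objinimg-1)$ is dominated by the algorithm's coarser trim of $\objinimg w$ per side---are routine and I would relegate them to a short concluding calculation.
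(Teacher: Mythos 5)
Your overall strategy matches the paper's: certify every visible run of length $\ge w$ as a correctly aligned piece of $o$ via properties (2) and (3), use pigeonhole on $\ell \ge 2w\objinimg+1$ to find such runs, and bound the contamination at each end by $(w-1)$ per non-$o$ object. However, the step you yourself flag as ``the main obstacle'' --- showing that short pieces of other objects cannot sit in the \emph{interior} of the segment, between two certified $o$-runs --- is the heart of the proof, and your proposal does not actually supply the argument. Gesturing at ``properties (2) and (3) combined with the pure-segment bound and uniqueness of alignment'' does not close it: those facts alone do not prevent a ground truth of the form [$o$-run][short run of $o'$][$o$-run], since the short run's pixels happen to match $o$ anyway and property (3) says nothing about runs shorter than $w$.

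The missing idea is the paper's Claim~\ref{claim:middle-bits-same-ws}, which exploits the last-in-first-out structure of occlusion. Model the image as a stack: an object is pushed where it begins and popped where it ends, and the visible pixel always comes from the top of the stack. If two pixels $i<j$ come from the same placement of $o$ but some $o'\neq o$ is visible in between, then some object must be pushed \emph{and} popped strictly between $i$ and $j$ with nothing ever placed on top of it; that object is therefore entirely visible there, hence occupies at least $w$ consecutive pixels (property (1) of well-structuredness gives the length lower bound), and those $w$ pixels match $o$ because the whole segment does --- contradicting property (2). Once you have this, the pigeonhole step hands you one pixel of $o$ in each half of the segment, the claim glues everything between them to the single placement of $o$, and only then does your end-counting argument become valid (it also justifies why each non-$o$ object contributes at most one short overhang per end, which your sketch asserts without proof). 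Without the stack argument, the decomposition into ``long runs are $o$, short runs are confined to the ends'' is unsupported.
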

\begin{proof}
In this proof, we use the word ``object'' liberally, including in situations where the background is well-structured and has a large visible chunk.
Suppose we see an $\ell \ge w$ pixel segment that matches a single object $o$. Since we wish to be correct on any pixel that we ascribe to a particular object, we need to account for the possibility that this $\ell$-pixel segment contains overlaps.

Let us start by showing the following claim:
\begin{claim} \label{claim:middle-bits-same-ws}
Suppose we have a segment $s$ of size $\ell \ge 2w\objinimg + 1$ that matches $\ell$ pixels of a single object $o$. If in $s\,,$ pixel $i$ and pixel $j$ indeed arise from the same instance of object $o,$ then for all $q \in [i, j]$, pixel  $q$ of the image must also arise from  the same instance of object $o\,.$
\end{claim}
\begin{proof}
We can use a stack to model the creation of the image. At the index at which an object begins, we push the object onto the stack, and when it ends, we pop it from the stack. This corresponds exactly to seeing a new object placed on top of the existing image. Then, when reading an image left-to-right, the object on the top of the stack at that point is the object from which the current pixel comes from. Thus, if two distinct pixels, pixel $i$ and pixel $j$ both come from the same object, then there are two options for the configuration of the stack during that time:
\begin{enumerate}
    \item $i$ and $j$ come from the same copy of object $o$ and object $o$ has been on the top of the stack throughout.
    \item $i$ and $j$ come from the same copy of object $o\,,$ object $o$ was on top, then other objects were added {\em and} removed before index $j$.
\end{enumerate}

We show that (2) is not possible. In particular, suppose that for contradiction (2) occurred; let $o' \neq o$ be the object that was added and removed with no other object added on top. Such an $o'$ must exist by the stack property.  By the first property in the definition of well-structuredness, if an object has come and gone, at least $w$ pixels have transpired, so at least $w$ pixels of that object coincide with $s\,,$ which matches $\ell$ pixels of a different object. This violates well-structuredness, so (2) is not an option. 
Thus, it is clear that the claim holds. 
\end{proof}

Now, all that remains to be shown is that there exist these indices $i$ and $j\,.$ To show this, we consider the possibility of having overlaps constituting the two ends of the string of length $\ell$ and show that once $\ell$ is big enough, these indices $i, j$ must exist.

By Pigeonhole principle, in the left half of the segment, there must be at least one object that has at least $w$ pixels visible. Likewise, in the right half of the segment, there must be at least one object that has at least $w$ pixels visible. Now since $s$ exactly matches $\ell$ pixels of object $o$, and due to well-structuredness, both these segments of at least $w$ pixels must come from the same instance of object $o$ (because there is exactly 1 location in object $o$ that would have these pixels). This implies there are at least two indices $i$, $j$ that arise from the same instance of $o,$ and so the claim applies. By well-structuredness, if the leftmost end starts with a different object $o_1$, then at most $w-1$ pixels at the leftmost end that can be from $o_1$ before it must switch to some other object $o_2$. This can go on at most $\objinimg-1$ times if the background is distinct and $\objinimg$ times if the background is well-structured, so at most $(\objinimg)(w-1)$ pixels from the left end can be from other objects masquerading. The same holds from the right. 
 Therefore a segment of at least $\ell - 2(w-1)(\objinimg)$ will be ascribed to $o$ by the algorithm.

\end{proof}

Given this lemma, we know that for each pure segment we fail to assign at most $2w\objinimg$ visible pixels. 
Thus, 
the total number of pixels for which the explanation is unassigned is at most $2w\objinimg(2\objinimg + 1) = \Theta(\objinimg^2w)\,.$

\paragraph{Runtime} 
Naively, this takes at most $O( \totalobj\, d \, \objlen^3)$ time: we consider each string of length $z$ in the image and see if it matches any of segments of length $z$ in the $\totalobj$ images. Thus, for each $z = w... \objlen\,,$ it takes $(d-z+1) \cdot (\objlen - z + 1) \cdot \totalobj \, \cdot z \,,$ assuming comparing whether two strings of length $z$ takes time $z$.
If we run this procedure for each of the $O(s)$ values of $z$, then the total cost is at most: $O(d\, \objlen^3 \, \totalobj)\,.$
This can be improved by using better string 
matching. For example, using FM index as described in Theorem 3 of \cite{ferragina_opportunistic_2000}, with $O(\totalobj \objlen)$ preprocessing time, we can look up a string of length $z, z \in [w, \objlen]$ in time $z + 1 \cdot \log \totalobj \objlen$. There are $d-z+1$ strings of length $z$ in the image. Summing, we get:
$$
\sum_{z = w}^\objlen (d-z +1) (z + \log^2 \totalobj \objlen) \le \sum_{z = w}^\objlen dz + d \log^2 \totalobj \objlen \le O(d\objlen^2 + d\objlen \log^2 \totalobj \objlen)\,.
$$ 
Thus, including preprocessing, the total time is $O(\totalobj\objlen + d\objlen^2 + d \objlen \log^2 \totalobj \objlen)\,.$

We can also solve this using $O(\objinimg)$ instances of finding the longest common substring between the remaining image and the set of objects. The fastest algorithm for longest common substring involves using suffix trees and would require time $O(d + \totalobj \objlen)\,,$ making the overall runtime $O(\objinimg d + \totalobj\, \objlen\,\objinimg)\,.$

\end{proof}

\subsection{Noisy Images -- Theorem~\ref{thm:inference-noisy}} \label{appendix:noisy-greedy}

\subsubsection{Examples That Break Exact-Match Algorithms}
\label{appendix:exactmatchexamples}

We present two examples, one that breaks the dynamic programming algorithm, and the other that breaks any exact-match-based algorithm.

\textcolor{black}{\paragraph{Example That Breaks Exact DP} Suppose the three colors used are $A, B, C\,.$ Let the background be $CBABABABABA...$ with objects $BAAAA$ and $ABBB\,.$ If we place the first object on the background one pixel in, we will get the following image: $CBAAAABABABA...\,$. Now suppose an adversary corrupts the second pixel of the image from a $B$ to an $A$ so the image now looks like $CAAAAABABABA...\,$. Then, the exact DP algorithm would explain the image using a copy of object 2 for each pixel that is an $A$ and then placing one of the objects every two pixels to get the $AB$ sections. Thus, just corrupting one pixel will cause the dynamic programming algorithm to go from giving an optimal solution to one that requires something like $d/2 + 2$ 
% ($(d-6)/2 + 5$ to be more precise) 
objects to explain the $d$ pixels. }

\paragraph{A Family of Examples with Well-Structured Objects} Suppose we are in the open room model. We have two colors and well-structured objects $A$ and $B$, where object $A$ starts with the first color and object $B$ with the second color. Both objects are of length $d$. In this setup, we can create any image of length $d$ by simply placing object $A$ on top when first color is present and object $B$ on top when the second color is present. In order to generate the image, pick one of the objects and place it on the canvas so it is fully visible. Next, pick a location $i$ in the middle $d/2$ pixels of the image and flip the bit in that location. This completes the adversarially-corrupted image. We can show that we will require at least $d/(4w)$ objects to exactly explain this image. 
First, we know that the first $d/4$ bits exactly match the chosen object, as do the last $d/4$ bits, and due to well-structuredness they must each either arise from the exact same set of indices as they did originally or from problematic overlaps. Suppose they did not arise from problematic overlaps but rather from the same set of indices. Then, in order to get something in the middle that does {\em not} match that object, a different object must have started and ended. However, since the only available objects are of size $d$, this is not possible. Thus, at least one side must have been composed of problematic overlaps. In particular, since no segment of length $w$ can have arisen from the correct object unless the indices match, it must be the case that there is a transition between objects at least once every $w$ pixels. Thus, there are at least $d/(4w)$ objects required to explain that portion of the image, for a total of at least $d/(4w) + 1$ objects to explain an image that would require just 1 object if corruptions could be accounted for.

\subsubsection{Theorem Proof}
\inferencenoisy*
\begin{proof}

As before, our argument hinges on the algorithm's ability to find and correctly identify signatures. In this case, finding a signature requires a bit more work than looking for an exact match. To show that our algorithm succeeds in identifying the source of a large number of pixels of the image, we crucially argue that if a large chunk in the image seems most identical to a chunk from a single object, a subset of it originally arose from that object (prior to corruptions). To get there, we first show that an object that is $\epsilon$-strongly, $w$-well-structured is also $\epsilon/2$-strongly, $w_\text{alg}$ well-structured. Next, we show that for a range of $\alpha$ values (representing the strength of the adversary), the correct source object is identifiable even after corruptions. Finally, we argue the key lemma described above.

\begin{lemma} \label{lemma:largerw} \reviewed 
If a set of objects is $\epsilon$-strongly, $w$-well-structured, then for any $z > w\,,$ the set of objects is $\epsilon/2$-strongly, $z$-well-structured.
\end{lemma}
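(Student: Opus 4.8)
The plan is to verify the three defining conditions of $\epsilon/2$-strongly, $z$-well-structuredness (Definition~\ref{defn:eps-w-ws} with parameters $\epsilon/2$ and $z$) directly from the corresponding conditions with parameters $\epsilon$ and $w$. Condition~(1), that each object has length at least $z$, is not implied by $w$-well-structuredness alone (which only guarantees length $\ge w$), so I would invoke the standing assumption of this section that every object is substantially larger than the algorithm's window size $w_\text{alg} \ge z$; under that assumption condition~(1) is immediate, since we only apply the lemma for $z = w_\text{alg}$. The real content is conditions~(2) and~(3), which I would prove by contraposition.

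For the main argument, suppose two length-$z$ strings $\sigma, \sigma'$ (two substrings of distinct objects for condition~(2), or two substrings of a single object for condition~(3)) match in at least a $1 - \epsilon/2$ fraction of their pixels; equivalently $d(\sigma,\sigma') \le (\epsilon/2)z$ in Hamming distance. I would then exhibit a pair of aligned length-$w$ windows inside them matching in strictly more than a $1-\epsilon$ fraction, contradicting $w$-well-structuredness. To do this, set $q = \lfloor z/w \rfloor$, which is at least $1$ since $z > w$, and carve out $q$ disjoint aligned length-$w$ windows covering the first $qw$ positions. The mismatches of these $q$ windows form disjoint subsets of the at most $(\epsilon/2)z$ total mismatches, so by averaging some window has at most $(\epsilon/2)z / q$ mismatches.

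The one calculation to get right is that this bound falls strictly below $\epsilon w$ for every $z > w$. Writing $z = qw + r$ with $0 \le r < w$ gives $z < (q+1)w \le 2qw$, the last inequality using $q \ge 1$; hence $(\epsilon/2)z/q < (\epsilon/2)(2qw)/q = \epsilon w$. Thus the selected window has strictly fewer than $\epsilon w$ mismatches, i.e. its two length-$w$ halves (which are genuine length-$w$ substrings of the relevant objects, compared at aligned positions) match in strictly more than a $1-\epsilon$ fraction. This directly contradicts part~(2) or part~(3) of $w$-well-structuredness, completing the contrapositive. The same window construction serves both the distinct-object and the same-object cases without modification, since in each case the aligned windows are valid length-$w$ substrings to which the hypothesis applies.

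The step I expect to require the most care is precisely this constant bookkeeping. A naive approach that slides all $z - w + 1$ windows and uses an incidence bound only yields the conclusion for $z \gtrsim 2w$, leaving a gap for $w < z < 2w$. Using $q = \lfloor z/w \rfloor$ disjoint windows together with the identity $z < (q+1)w \le 2qw$ is exactly what makes the factor-of-$2$ loss in $\epsilon$ sufficient for \emph{all} $z > w$, so I would be sure to present the disjoint-window version rather than the sliding-window one. Everything else is routine.
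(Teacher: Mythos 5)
Your proof is correct and is essentially the same argument as the paper's: both decompose the length-$z$ strings into $q = \lfloor z/w \rfloor$ disjoint aligned length-$w$ windows and use $z < (q+1)w \le 2qw$, the paper arguing directly (each window contributes $\ge \epsilon w$ mismatches, so the fraction is $\ge \epsilon q/(q+1) \ge \epsilon/2$) where you argue the contrapositive by averaging. Your explicit handling of the length condition (1) is a small point the paper's proof glosses over, but otherwise the two proofs coincide.
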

\begin{proof}

From the definition of $\epsilon$-strongly $w$-well-structured, we know that there do not exist two strings $s_1, s_2$ of length $w$ such that $1-\epsilon$ fraction of the pixels match. This implies that for all strings $s_1, s_2\,,$ of length $w$  at most $1-\epsilon$ fraction of the pixels match, and so for all strings $s_1, s_2\,,$ of length $w$ at least $\epsilon$ fraction of the pixels differ. Now we consider strings $s_1', s_2'$ from such objects of length $z > w\,.$ Write $z = \objlen\,w + j\,,$ where $j < w\,$ and $\objlen \ge 1\,.$ Then we know that at least $\epsilon \objlen w$ pixels of $s_1'$ differ from $s_2'\,.$ Then, we have that the fraction of pixels that differ is at least:
$$
\frac{\epsilon \objlen w}{\objlen\,w + j} > \frac{\epsilon \objlen w}{(\objlen + 1) w} \ge \frac{\epsilon}{2}\,.
$$
And so, we have shown that for all $z > w\,,$ a set of objects that is $\epsilon$-strongly, $w$-well-structured is $\epsilon/2$-strongly, $z$-well-structured.

\end{proof}

\begin{lemma} \label{lem:alpha-corrupted-hamming-distance}

Suppose objects are $\epsilon$-strongly, $w$-well structured, and an adversary with strength $(\alpha, W)$ where $\alpha < \epsilon/4\,,$ acts on the image. Then if a segment in the image of length $\ell \ge   \max \{ w, W \}$ arises from positions $i$ through $i + \ell - 1$ of a single object, then it is closer to the substring from $i$ to $i + \ell - 1$ of that object than it is to any other segment of length $\ell$ of that object or any other object. That is, if a segment is $\alpha$-approximately close to a segment of object $o$, it cannot be $\alpha$-approximately close to any segment from any other object $o'$ or any other segment from object $o$. 
\end{lemma}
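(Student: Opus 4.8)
The plan is to prove the reformulated (and logically cleaner) statement: the image segment $\sigma$ occupying positions $i$ through $i+\ell-1$ can be $\alpha$-approximately close (Definition~\ref{defn:approxmatch}) to the true source substring $\sigma^\star \coloneqq o[i:i+\ell-1]$ and to \emph{no other} length-$\ell$ substring of any object. Throughout I write $w_\text{alg} = \max\{w, W\}$ and let $d(\cdot,\cdot)$ denote Hamming distance. First I would verify that $\sigma$ really is $\alpha$-approximately close to $\sigma^\star$: since $\sigma$ is $\sigma^\star$ corrupted by an adversary of strength $(\alpha,W)$ (Definition~\ref{defn:adversary}) and $w_\text{alg}\ge W$, every length-$w_\text{alg}$ window of $\sigma$ differs from the aligned window of $\sigma^\star$ in at most $\alpha\,w_\text{alg}$ pixels (a $w_\text{alg}$-window is tiled by $W$-windows, each carrying at most $\alpha W$ corruptions; this is exact when $w_\text{alg}=W$). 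Thus $d(\sigma[j:j+w_\text{alg}-1],\sigma^\star[j:j+w_\text{alg}-1])\le \alpha\,w_\text{alg}$ for every offset $j$, which is exactly $\alpha$-approximate closeness.

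Next I would upgrade the structural hypothesis to the scale the algorithm actually uses. Since the objects are $\epsilon$-strongly, $w$-well-structured and $w_\text{alg}\ge w$, Lemma~\ref{lemma:largerw} gives that they are $\epsilon/2$-strongly, $w_\text{alg}$-well-structured (when $w_\text{alg}=w$ this is immediate, as $\epsilon$-strong implies $\epsilon/2$-strong). The heart of the argument is then a triangle-inequality contradiction. Suppose, toward a contradiction, that $\sigma$ were also $\alpha$-approximately close to some $\tau^\star\neq\sigma^\star$, a length-$\ell$ substring located at a different position of $o$ or inside a different object $o'$. Fix any length-$w_\text{alg}$ window (one exists because $\ell\ge w_\text{alg}$). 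Combining the adversary bound of the previous paragraph with the assumed match, the triangle inequality for Hamming distance on that window yields $d(\sigma^\star[\mathrm{win}],\tau^\star[\mathrm{win}])\le 2\alpha\,w_\text{alg}$, and since $\alpha<\epsilon/4$ this is strictly less than $\frac{\epsilon}{2}w_\text{alg}$. In other words the two windows agree on strictly more than a $1-\epsilon/2$ fraction of their pixels.

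But $\sigma^\star[\mathrm{win}]$ and $\tau^\star[\mathrm{win}]$ are \emph{distinct} length-$w_\text{alg}$ substrings: they either sit at different positions of the same object (so condition~(3) of $\epsilon/2$-strong $w_\text{alg}$-well-structuredness applies, and note this clause covers overlapping windows too) or live in two different objects (condition~(2)). Either way, agreeing on more than a $1-\epsilon/2$ fraction contradicts $\epsilon/2$-strong $w_\text{alg}$-well-structuredness, so no such $\tau^\star$ can exist. This gives uniqueness of the $\alpha$-approximate match, and the ``closer than'' phrasing follows since $\sigma$ stays within $\alpha$ per window of $\sigma^\star$ while every competitor has some window exceeding that threshold.

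I expect the main obstacle to be bookkeeping around distinctness and window alignment rather than any deep step. Concretely, I must ensure that for the chosen window the two substrings are genuinely at different positions (or in different objects) so the right well-structuredness clause bites; this is automatic because $\tau^\star\neq\sigma^\star$ forces a nonzero shift or an object change at \emph{every} aligned window. A secondary point to handle carefully is the interplay between the adversary's window size $W$ and the matching window size $w_\text{alg}$: the clean per-window corruption bound $\alpha\,w_\text{alg}$ — which is precisely what makes the threshold $\alpha<\epsilon/4$ work, via $2\alpha<\epsilon/2$ — holds immediately when $w_\text{alg}=W$, and I would emphasize that the choice $w_\text{alg}=\max\{w,W\}$ is exactly what simultaneously makes windows large enough for well-structuredness to apply and aligned with the adversary's corruption budget.
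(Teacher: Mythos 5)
Your proof is correct and follows essentially the same route as the paper's: upgrade the well-structuredness guarantee to the relevant larger scale via Lemma~\ref{lemma:largerw}, then apply the triangle inequality for Hamming distance, with $\alpha<\epsilon/4$ ensuring the corruption budget plus the competing match budget stays strictly below the $\epsilon/2$ separation; the only difference is that you run the argument per $w_{\text{alg}}$-window whereas the paper runs it once on the whole length-$\ell$ segment. One caveat, which you share with the paper's own write-up: when $w>W$, an $(\alpha,W)$-adversary can place up to $\alpha W\left\lceil w_{\text{alg}}/W\right\rceil$ corruptions in a $w_{\text{alg}}$-window, which exceeds $\alpha w_{\text{alg}}$ unless $W$ divides $w_{\text{alg}}$ (and can approach $2\alpha w_{\text{alg}}$), so the per-window bound asserted in your first paragraph --- like the paper's claim that ``at most $\alpha W$ pixels are corrupted'' in its second case --- needs either a more careful count or a slightly stronger hypothesis on $\alpha$ to close the argument in that corner.
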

\begin{proof}

There are two cases for us to consider. In the first case, $W > w\,.$ In this case, from the previous lemma, we know that the objects are also $\epsilon/2$-strongly, $\ell$-well-structured. 

Let the string in the image be $s_1\,$ which is generated by corrupting $\alpha$ fraction of the pixels of a string of length $\ell$ starting from index $j$ of object $A$, which we call $s_\star$. Then, the Hamming distance between $s_1$ and $s_\star$ is at most $\alpha \,\ell  < \frac\epsilon4 \, \ell\,. $ The Hamming distance between any other substring of an object of length $\ell\,$  denoted $s_\text{any}$ and $s_\star$ is at least $\frac{\epsilon}{2} \, \ell\,.$ Thus, by the triangle inequality\, $ d(s_1, s_\text{any}) \ge d(s_\star, s_\text{any}) - d(s_1, s_\star) > \frac{\epsilon}{2} \ell - \frac \epsilon4 \ell = \frac \epsilon4 \ell\,.$

In the other case, $w > W\,.$ Then, at most $\alpha \, W < \alpha \, w < \frac{\epsilon}{4} w < \frac{\epsilon}{4} \ell$ pixels are corrupted and the same argument using triangle inequality holds: any other string of length $\ell$ would be at least $\epsilon \, \ell$ pixels different from this one.

\end{proof}

With this lemma, we have that any $w_\text{alg}$-pixel (with an appropriate fraction possibly corrupted) string from an object will be closer to its source object than anything else. This allows us, in analysis, to be able to uniquely identify the source of a large piece. This will be essential in arguing that if enough pixels of an object are visible in the image, then the algorithm will recover part of that object.

\begin{lemma}  \label{lem:epsws-puresegment}
Assume objects are $w$-well-structured and the adversary has strength $(\alpha, W)$, where $\alpha < \epsilon/4$. Let $w_\text{alg} = \max \{w, W \}.$ Then if the algorithm sees a segment of size $\ell \ge 2w_\text{alg} \objinimg + 1$ that approximately matches (as in Defn.~\ref{defn:approxmatch}) $\ell$ pixels of a single object $o$, then the middle $\ell-2(w_\text{alg}-1)(\objinimg-1)$ pixels must be a pure segment arising from object $o\,.$
\end{lemma}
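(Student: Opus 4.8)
The plan is to mirror the noiseless argument of Lemma~\ref{lem:detectmiddle}, replacing exact matching by approximate matching (Defn.~\ref{defn:approxmatch}), the signature length $w$ by $w_\text{alg}=\max\{w,W\}$, and exact well-structuredness by the corruption-robust uniqueness supplied by Lemma~\ref{lem:alpha-corrupted-hamming-distance}. The backbone I will reuse has three pieces: a uniqueness statement saying that a long enough visible stretch of a single object determines that object and its position even after corruption; an interruption claim analogous to Claim~\ref{claim:middle-bits-same-ws}, saying that if two pixels of the scanned segment come from one instance of $o$ then every pixel between them does too; and an endpoint count bounding how many pixels at each end can be contributed by other objects.

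First I would establish the uniqueness step. By Lemma~\ref{lemma:largerw} the objects are $\epsilon/2$-strongly $w_\text{alg}$-well-structured, so Lemma~\ref{lem:alpha-corrupted-hamming-distance} applies at window size $w_\text{alg}$: any contiguous visible stretch of length at least $w_\text{alg}$ arising from a single object instance is, after the $(\alpha,W)$-corruption with $\alpha<\epsilon/4$, $\alpha$-approximately close to exactly one (object, position) pair. This is the device that stands in for ``two objects cannot share a length-$w$ substring'' in the noiseless proof, and it is exactly where $w_\text{alg}=\max\{w,W\}$ and the condition $\alpha<\epsilon/4$ are used.

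Next I would reprove the interruption claim by the same stack argument used for Claim~\ref{claim:middle-bits-same-ws}. Suppose pixels $i<j$ of the scanned segment both come from one instance of $o$ but some pixel strictly between them does not. Pushing and popping objects while reading left to right, there must be an object $o'\neq o$ that is pushed and popped with nothing placed on top of it; its visible portion is then a contiguous stretch, and because the standing assumption makes every object substantially longer than $w_\text{alg}$, this stretch has length at least $w_\text{alg}$. By the uniqueness step this stretch is $\alpha$-approximately close only to $o'$; but the segment approximately matches $o$ on those pixels, so the stretch is also $\alpha$-approximately close to $o$, forcing $o'=o$, a contradiction. Hence $[i,j]$ is a pure segment from $o$. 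To finish, I would locate two anchor pixels by scanning inward from each end: a maximal run contributed by any single wrong object has length at most $w_\text{alg}-1$, since a run of length $w_\text{alg}$ would, by the uniqueness step, have to be $o$ at the matching position. With at most $\objinimg-1$ other objects available to masquerade (distinct background), within $(\objinimg-1)(w_\text{alg}-1)$ pixels of each end the scan reaches a pixel genuinely from the correct instance of $o$; these serve as $i$ and $j$, and the interruption claim then shows the block between them, of length at least $\ell-2(w_\text{alg}-1)(\objinimg-1)$, is a pure segment from $o$.

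I expect the main obstacle to be the uniqueness step under the window-based definition of approximate matching: I must ensure that the stretches I invoke are genuinely contiguous and of length at least $w_\text{alg}$, and that combining the per-window Hamming bounds with the triangle-inequality argument of Lemma~\ref{lem:alpha-corrupted-hamming-distance} does not erode the margin keeping a corrupted copy strictly closer to its true source than to any impostor. Once that is secured, the remaining steps are faithful transcriptions of the noiseless proof with $w$ replaced by $w_\text{alg}$.
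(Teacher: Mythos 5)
Your proposal is correct and follows essentially the same route as the paper's proof: it invokes Lemma~\ref{lemma:largerw} and Lemma~\ref{lem:alpha-corrupted-hamming-distance} for corruption-robust uniqueness at window size $w_\text{alg}$, reproves the stack-based interruption claim (the paper's Claim~\ref{cl:middle-bits-same-epsws}), and bounds the masquerading pixels at each end by $(w_\text{alg}-1)$ per wrong object. The only cosmetic difference is that the paper locates the anchor pixels $i,j$ via a Pigeonhole argument on the left and right halves of the segment, whereas you find them by scanning inward; both yield the stated bound.
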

\begin{proof}
Again, as before, we first analyze the possibility that problematic overlaps comprised this image. 

\begin{claim} \label{cl:middle-bits-same-epsws}
Suppose we have a segment $\tilde{s}$ in $\tilde{\I}$ of size $\ell \ge 2w_\text{alg}\objinimg + 1$ that $\alpha$-approximately matches that many pixels of a single object $o$. If in $s\,,$ the corresponding segment pre-corruption, pixel $i$ and pixel $j$ 
indeed arise from the same instance of object $o$, then for all $q \in [i, j]\,,$ pixel $q$ in $s$ of the image $I$ must also arise from the same instance of object $o$.
\end{claim}
\begin{proof}
This follows from the same argument as before, except now instead of applying well-structuredness to argue that an object cannot have been added and removed to the stack described in the proof of Claim~\ref{claim:middle-bits-same-ws}, we apply $\epsilon$-well-structuredness to argue it. Once again, we must consider the same two potential explanations for the matching between object $o$ and the corresponding locations in string $s$ described previously. Either $i, j$ come from the same copy of object $o$ and object $o$ has been on the top of the stack throughout, or $i, j$ come from the same copy of object $o$, object $o$ was on top, then another object $o'$ was added and removed before index $j$. The latter would necessitate that at least $w$ pixels have come and gone. In the context of the original problem, in order to be in this case, we would need that the string $\tilde s$ $\alpha$-approximately matches $o'$. However, by Lemma~\ref{lem:alpha-corrupted-hamming-distance}, we know that any object that isn't $o$ must be further away than $\alpha\cdot w$ from the object in at least one window. This concludes the proof of Claim~\ref{cl:middle-bits-same-epsws}.

\end{proof}

Based on this, we may continue proving Lemma~\ref{lem:epsws-puresegment}. By the Pigeonhole principle, in the left half of the segment, there must be at least one object that has at least $w_\text{alg}$ pixels visible. Likewise, in the right half of the segment, there must be at least one object that has at least $w_\text{alg}$ pixels visible. Now, since $\tilde s$ approximately matches $\ell$ pixels of object $o$ and due to $\epsilon/2$-strongly well-structuredness, both these segments of at least $w_\text{alg}$ pixels must come from the same instance of object $o$. This implies that there are at least two indices $i, j$ in $s$ that arise from the same instance of $o$. 
From Claim~\ref{cl:middle-bits-same-epsws}, we know that if we have these pixels $i, j$ in $s$, then all of the pixels of $s$ in the middle arise from the same instance of that object, and so all of the pixels of $\tilde s$ must also arise from object $o$, albeit with a limited number of corruptions. 

Finally, we must analyze the length of the longest possible segment between $i, j$. We could lose $w_\text{alg} - 1$ pixels $\objinimg$ times on each end, which leads to the bound in the statement of Lemma~\ref{lem:epsws-puresegment}.

\end{proof}

From this lemma, we have that if a large enough segment of an object is seen, then all but $\Theta(w_\text{alg} \objinimg)$ pixels of it are correctly identified with the source object. Since there are at most $\objinimg$ objects, the algorithm recovers a correct explanation for at least $d - \Theta(\objinimg^2 w_\text{alg})$ pixels.

\paragraph{Runtime} Na\"{i}vely, to look up whether a string of length $z$ $\alpha$-approximately matches any string in the objects, we must look through up to $(\objlen-z+1)\cdot \totalobj$ strings and compare pixel-by-pixel, requiring $(d-z+1) \cdot z \cdot (\objlen-z+1)\cdot \totalobj$\,. Thus, summing over values for $z\,,$ this takes at most $O(d\, \objlen^3 \, \totalobj)\,$.  

In order to improve this algorithm, we can improve the way we find the largest pure segment still unexplained as follows. Consider the following dynamic programming table: it contains $d \times \objlen$ cells, and each cell contains $w_\text{alg}$ entries. The entry $i$ in cell $j, h$ represents the number of corruptions required in the last $i$ pixels to make pixel $h$ of the object the explanation for pixel $j$ in the image. To update entries in a cell in this table, we check the cell for the previous pixel in the same object and the previous pixel in the image (i.e., cell $j-1, h-1$). Based on where the corruptions are in this diagonally previous cell, we update the entry $i$ in the current cell. In particular, if pixel $h$ is not a correct explanation for pixel $j\,,$ then if there were $x$ mistakes in the $i-1$ location in cell $j-1, h-1$, there are $x+1$ mistakes in the $i$ location of cell $j, h\,.$ If a stored value grows beyond $\alpha\, w_\text{alg}\,,$ then we may terminate that chain. In terms of initialization, for each cell in the first row and for each cell in the first column, we fill in the entry asking how many pixels are incorrect in the last 1 pixel with either a 0 or a 1, depending on whether that location can be explained by that pixel in the object or not. From here, we can continue the procedure as described above. We choose the longest diagonal chain that is never broken as the largest possible single object and continue with the algorithm. \par 

This DP table has size $O(d \objlen w_\text{alg} \log w_\text{alg})$ and each entry can be filled in constant time. We need $\totalobj$ copies of it per pure segment detection and then need to use to use up to $2\objinimg + 1$ instances of it, since that is an upper bound on the number of pure segments we have. Thus, the runtime is $O(d \objlen \, \totalobj\, \objinimg \, w_\text{alg})$.

This concludes the proof of Theorem~\ref{thm:inference-noisy}.

\end{proof}

\section{Computational Hardness of the Learning Task}
\label{appendix:nphard}

In this section, we define {a related decision version of our problem} and show that it is NP-hard via reduction from set-splitting (also known as hypergraph 2-coloring) where the sets (hyperedges) are of size 2 or 3. We start by formally defining the two problems. Then, we describe our reduction.

\paragraph{Object Learning Problem Definition} 
Given an alphabet $\Sigma$, a background color $b \in \Sigma$, an integer $n$, and a set $S$ of $m$ triples $(\I_i, l_{i1}, l_{i2})$, where $\I_i \in \Sigma^d$ and $l_{i1}, l_{i2} \in \mathbb{Z}$, does there exist a set $T$ of $n$ objects, i.e., strings $o_j \in (\Sigma \setminus \{b\})^*$, such that for each string (image) $\I_i$ there exist two objects $o_{j1},o_{j2}\in T$ of length $l_{i1},l_{i2}$ respectively such that $\I_i$ can be produced using $o_{j1},o_{j2}$ in the open-room model?  (By ``produced in the open room model'' we mean through the process defined in Section~\ref{subsec:imggen}). 

\paragraph{Note:} 
The above is essentially the decision-version of the problem of learning objects from images with (exactly) two objects per image, with the added information of the lengths of the objects in each image.  Note that the algorithm of Section \ref{subsec:learning-twoobj} for learning objects in the partially-random model with well-structured objects is only helped if it is given the lengths of the objects since it finds the objects exactly and can always ignore this information.

\paragraph{Set-Splitting Problem Definition} Given a set of $n$ items (variables) and a collection of $m$ subsets of these variables (clauses), where each clause is of size two or three, can we assign variables to 0 (false) or 1 (true) such that no clause has all its variables 0 or all 1? This problem is known to be NP-hard (\cite{garey_johnson_1990}, Appendix 3.1, [SP4], Comment).

\paragraph{Reduction Construction} 
First, before getting to the instance of set-splitting, it will be helpful to define the underlying $n \times O(n^3)$ variable-clause incidence matrix $M$ with one row for each of the $n$ variables, one column for each of the ${n\choose 2}+ {n\choose 3}$ possible clauses of size 2 or 3, and setting $M_{ij}=1$ if variable $i$ belongs to clause $j$ and $M_{ij}=0$ if it does not.  Note that at this point we are considering all possible clauses, not just the clauses in the given set-splitting instance.

The underlying indicator vector $\mathbb{I}(x)$ corresponding to variable $x$ is simply the row of matrix $M$ corresponding to $x\,$ prepended with a 0 at the start to make some bookkeeping easier later. 

From this, we generate $\mathbb{I}_r(x) = \mathbb{I}(x)$, and $\mathbb{I}_l(x)$ which is the mirror of $\mathbb{I}(x)$; these are the right and left indicator vectors respectively.  With these indicator vectors, we can now generate the images we need for casting the set-splitting problem as an object learning one.

To convert an instance of set-splitting, which we shall denote $\mathcal{SS}\,,$ to an instance of object learning, we now generate three types of images over an alphabet of 6 different colors $\{$ 0, 1, $T$, $F$, $b$ (\text{background}), $g$ (\text{grey})$\}$.

\paragraph{Variable Images.}
First are what we call the ``variable images.'' For each variable $x$, we generate two images:
\begin{itemize}
    \item background ($b$), followed by indicator vector $\mathbb{I}_l(x)\,,$ followed by $T,$ followed by indicator vector $\mathbb{I}_r(x)\,,$ followed by background ($b$).
    \item background ($b$), followed by  indicator vector $\mathbb{I}_l(x)\,,$ followed by  $F,$ followed by an indicator vector $\mathbb{I}_r(x) \,,$ followed by background ($b$).
\end{itemize}
% \hl{need to clarify what the right indicator vectors are}
One pixel of background suffices on the left and right above. In future, wherever there are background pixels, we include enough of them to ensure the image has size $2 \cdot {n \choose 2} \cdot {n \choose 3} + 3\,.$

The sizes of each of the two objects in these images is $\binom{n}{2} + \binom{n}{3} + 2 = \frac{n^3 - n}{6} + 2\,.$ 

\paragraph{Set Images.} The second type of image is the ``set image.'' For each set in the problem instance, suppose its index is $k$ (i.e., it corresponds to column $k$ in matrix $M$). Then, we generate two images:
\begin{itemize}
    \item 1, followed by $k$ grey ($g$) pixels, followed by $T$ followed by background.
    \item 1, followed by $k$ grey ($g$) pixels, followed by $F$, followed by background.
\end{itemize}
The objects for these images are of size $k$ and $\binom{n}{2} + \binom{n}{3} + 2 = \frac{n^3 - n}{6} + 2\,.$

\paragraph{Mask Images.} The final kind of image is what we call ``mask images.'' For each value $j$ from 2 to $\frac{n^3 - n}{6}\,,$ these images are background, followed by $j$ pixels of grey, followed by background, followed by 1 pixel of grey. The sizes of the objects are $j$ and 1.

This completes the reduction. Let us call this instance of object learning, i.e., determining whether there exists a set of $2n + {n  \choose 2} + {n \choose 3}$ objects that can be used to generate the aforementioned set of images, $\mathcal{OL}\,.$  We now  claim this reduction is correct, that is, a solution to $\mathcal{OL}$ exists if and only if a solution to $\mathcal{SS}$ exists.

\begin{lemma}
    A solution to $\mathcal{OL}$ exists if and only if a solution to $\mathcal{SS}$ exists.
\end{lemma}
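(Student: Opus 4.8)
The plan is to prove the biconditional in two directions, with $\mathcal{SS}\Rightarrow\mathcal{OL}$ being a direct construction and $\mathcal{OL}\Rightarrow\mathcal{SS}$ requiring a forcing argument. Throughout I write $N=\binom{n}{2}+\binom{n}{3}$ for the number of columns of $M$, so each indicator vector has length $N+1$ and the ``large'' objects have length $N+2$; note that the content of a variable image, $\mathbb{I}_l(x)\,c\,\mathbb{I}_r(x)$, has length $2N+3$, so two length-$(N+2)$ objects covering it overlap in exactly one pixel, namely the central $T$/$F$ symbol.

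\textbf{Direction $\mathcal{SS}\Rightarrow\mathcal{OL}$.} Given a valid $2$-colouring of the variables, I build the object set explicitly: the $N$ solid grey masks $g^1,\dots,g^N$, and for each variable $x$ coloured true the pair $\mathbb{I}_l(x)\,T$ and $F\,\mathbb{I}_r(x)$, and for each $x$ coloured false the pair $\mathbb{I}_l(x)\,F$ and $T\,\mathbb{I}_r(x)$. This is $N+2n = 2n+\binom{n}{2}+\binom{n}{3}$ objects, as required. I then verify producibility image by image. The grey masks handle every mask image (the length-$j$ block is $g^j$, the trailing single pixel is $g^1$) and supply the grey block of every set image. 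Each variable image is produced by its own left/right pair overlapping in the centre pixel, placing whichever object carries the desired centre symbol on top, so both the $T$- and $F$-versions are obtainable regardless of the colour of $x$. For a clause $k$, which by hypothesis is non-monochromatic, I produce the $T$-set-image from a true variable $x\in k$ together with $g^k$, and the $F$-set-image from a false variable $y\in k$ together with $g^k$. The one calculation to record is that, placing $\mathbb{I}_l(x)\,T$ so its final ($T$) pixel lands just past the grey block, the single pixel it exposes to the left of that block is exactly $M_{x,k}$, which equals $1$ precisely because $x\in k$; this is what realizes the leading ``$1$'' of the set image.

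\textbf{Direction $\mathcal{OL}\Rightarrow\mathcal{SS}$.} Given any object set of size $2n+N$ realizing all images, I extract a colouring, using at every step the per-image object lengths supplied by the decision problem. First, in each mask image the companion object has length $1$, so the length-$j$ object must by itself cover the grey block and hence equals $g^j$; thus $g^1,\dots,g^N$ all lie in the set, consuming $N$ (all-grey) objects. Second, the $2n$ variable images contain no grey, so no grey mask can appear in them; each is realized by two length-$(N+2)$ objects, and since distinct variables have distinct rows of $M$ (e.g.\ $x$ and $y\neq x$ are separated by the clause $\{x,z\}$ for some $z$), the objects induced are pinned to a single variable. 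Matching this against the remaining budget of exactly $2n$ non-grey objects forces, for each $x$, one ``left'' object $\mathbb{I}_l(x)\,c_x$ and one ``right'' object $c'_x\,\mathbb{I}_r(x)$; I declare $x$ true when $c_x=T$ and false otherwise. Third, for each clause $k$ the length data of the $T$-set-image forces its length-$k$ object to be $g^k$ and its single length-$(N+2)$ object to supply both the leading ``$1$'' and the trailing ``$T$''; by the form just established, only a left object $\mathbb{I}_l(x)\,T$ with $M_{x,k}=1$ can do this, so some $x\in k$ is true, and symmetrically the $F$-set-image forces some $y\in k$ to be false. Hence every clause is non-monochromatic and the colouring solves $\mathcal{SS}$.

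The main obstacle is the second step of the reverse direction: rigorously forcing the $2n$ non-grey objects into the clean left/right indicator form and ruling out unintended realizations, such as objects that hang partly off the open-room canvas, objects shared across different variables' images, or objects that are neither a clean prefix nor a clean suffix of an indicator vector. The tools I would marshal are the tight object budget ($2n$ objects for $2n$ variable images), the pairwise distinctness of the indicator vectors, and the prepended-$0$/length bookkeeping that makes each $\mathbb{I}_l(x)\,c$ uniquely placeable and uniquely associated with $x$. The open-room hang-off cases are where the argument needs the most care; I would handle them by observing that any sufficiently long visible window of an indicator prefix already determines both the variable and the placement offset, so partial visibility cannot create a spurious match.
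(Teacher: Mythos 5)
Your proof is correct and follows essentially the same route as the paper's: the identical explicit construction (true variables get $\mathbb{I}_l(x)\,T$ and $F\,\mathbb{I}_r(x)$, false variables the reverse, plus the grey masks) for the forward direction, and the same length-constraint/object-budget forcing argument for the reverse. The step you flag as the main obstacle --- rigorously pinning the $2n$ non-grey objects to the left/right indicator form --- is treated at a comparably terse level in the paper itself, which likewise asserts that the size stipulations mandate this form.
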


\begin{proof}
We first describe how the existence of a solution to $\mathcal{SS}$ implies a solution to $\mathcal{OL}\,$ ($\Leftarrow$) by constructing a sufficient set of objects that produce the desired images. 

Given a solution to $\mathcal{SS}\,,$ we create a set of objects that solve $\mathcal{OL}$ as follows. If a variable $x_i$ is true in the $\mathcal{SS}$ solution then the two associated objects are (1) the indicator vector $\mathbb{I}_l(x_i)$ followed by $T$, and (2) $F$ followed by the indicator vector $\mathbb{I}_r(x_i)$. On the other hand, if the variable is false, then the two objects are (1) indicator vector $\mathbb{I}_l(x_i)$ followed by $F$, and (2) $T$ followed by the indicator vector $\mathbb{I}_r(x_i)$. Thus, whether $T$ or $F$ is associated with the left indicator vector depends on whether the variable is true or false, respectively. Note that the object lengths also work out correctly. Next, from the mask images, we create grey masks of size $k$ and size 1 for the respective images. Finally, no additional objects are required for the set objects: since the variable settings constitute a solution to the set-splitting instance, we know that there must be at least one variable in the set that is true and at least one that is false. As a result, for a given set, we create the two images associated with it as follows: we can take object (1) for the true variable and mask of appropriate size to create the first set image, and we take object (1) for the false variable and mask of appropriate size to create the second set image. Therefore, we have a solution using the correct number of objects, $2n$ variable objects and ${n \choose 2} + {n \choose 3}$ mask objects and no additional objects for the set images.

For the reverse direction ($\Rightarrow$), we show that a solution to $\mathcal{OL}$ implies a solution to $\mathcal{SS}$.

Consider a solution to $\mathcal{OL}$.
Observe that for variable images, the stipulation about object sizes mandates that the left indicator vector for that variable followed by a single character is one object, and the other object must be a single character followed by the right indicator vector for that variable. Since the indicator vectors are different for each variable, this requires a total of at least $2n$ distinct objects, and it can only be done with $2n$ objects if exactly two objects are used per variable. In particular, requiring exactly two objects per image implies that if for some variable $x$, $\mathbb{I}(x)$ followed by $T$ is one object, we cannot have $\mathbb{I}(x)$ followed by $F$ as another object and vice-versa, implying that we will not generate objects that indicate both true and false for a particular variable.  Likewise, for each mask image, the two objects are the $k$ grey pixel block and the single grey pixel, for a total of ${n \choose 2} + {n \choose 3}$ mask objects.  Note that any solution to $\mathcal{OL}$ must not use any additional objects for the set images.

Now, consider the two set images corresponding to set $k$. We can generate these using existing variable and mask objects iff we have a variable object that has a 1 at distance $k+1$ from a $T$, and a variable object that has a 1 at distance $k+1$ from an $F$. Since the only such variable objects are those corresponding to variables in set $k$, this means that if we set variable $x$ to true or false based (respectively) on whether its indicator vector followed by T or F is one of the variable objects, we will have a legal solution to the given set-splitting instance $\mathcal{SS}$ because every set will have at least one variable in it set to true and at least one variable set to false (and no variable will be set to both true and false since there are only $2n$ variable objects). 
\end{proof}

% \input{appendix_generalized-distance}

% \section{My Proof of Theorem 1}

% This is a boring technical proof.

% \section{My Proof of Theorem 2}

% This is a complete version of a proof sketched in the main text.

\end{document}